\newcommand{\printfnsymbol}[1]{%
  \textsuperscript{\@fnsymbol{#1}}%
}
\newcommand\ec[2][]{\ensuremath{\mathbb{E}_{#1} \left[#2\right]}}
\newcommand\ecn[2][]{\ec[#1]{\norm{#2}^2}}
\newcommand\ecd[2][]{\ensuremath{\mathbb{E}_{\mathcal{D}} \left[#2\right]}}
\newcommand\br[1]{\left ( #1 \right )}
\newcommand{\N}{\mathbb{N}}
\newcommand{\R}{\mathbb{R}}
\newcommand{\X}{\mathcal{X}}
\newcommand{\F}{\mathcal{F}}
\DeclareMathOperator*{\argmin}{\arg\!\min}
\newcommand{\norm}[1]{\left\lVert#1\right\rVert}
\newcommand{\sqn}[1]{{\left\lVert#1\right\rVert}^2}
\newcommand{\D}{\mathcal{D}}
\newcommand{\eqdef}{\overset{\text{def}}{=}}
\newsavebox\myboxA
\newsavebox\myboxB
\newlength\mylenA
\newcommand{\cX}{\mathcal{X}}
\newcommand{\cL}{\mathcal{L}}
\definecolor{mydarkgreen}{RGB}{39,130,67}
\definecolor{mydarkred}{RGB}{192,47,25}
\newcommand*\overbar[2][0.75]{%
    \sbox{\myboxA}{$\m@th#2$}%
    \setbox\myboxB\null
    \ht\myboxB=\ht\myboxA%
    \dp\myboxB=\dp\myboxA%
    \wd\myboxB=#1\wd\myboxA
    \sbox\myboxB{$\m@th\overline{\copy\myboxB}$}
    \setlength\mylenA{\the\wd\myboxA}
    \addtolength\mylenA{-\the\wd\myboxB}%
    \ifdim\wd\myboxB<\wd\myboxA%
       \rlap{\hskip 0.5\mylenA\usebox\myboxB}{\usebox\myboxA}%
    \else
        \hskip -0.5\mylenA\rlap{\usebox\myboxA}{\hskip 0.5\mylenA\usebox\myboxB}%
    \fi}
\newcommand{\rob}[1]{}
\definecolor{shadecolor}{gray}{0.90}
\declaretheoremstyle[
headfont=\normalfont\bfseries,
notefont=\mdseries, notebraces={(}{)},
bodyfont=\normalfont,
postheadspace=0.5em,
spaceabove=1pt,
mdframed={
  skipabove=8pt,
  skipbelow=8pt,
  hidealllines=true,
  backgroundcolor={shadecolor},
  innerleftmargin=4pt,
  innerrightmargin=4pt}
]{shaded}
\declaretheorem[within=section]{definition}
\declaretheorem[sibling=definition]{theorem}
\declaretheorem[sibling=definition]{proposition}
\declaretheorem[sibling=definition]{assumption}
\declaretheorem[sibling=definition]{corollary}
\declaretheorem[sibling=definition]{lemma}
\declaretheorem[sibling=definition]{remark}
\newcommand\Item[1][]{%
  \ifx\relax#1\relax  \item \else \item[#1] \fi
  \abovedisplayskip=0pt\abovedisplayshortskip=0pt~\vspace*{-\baselineskip}}
\newcommand*{\inlineequation}[2][]{%
  \begingroup
    \refstepcounter{equation}%
    \ifx\\#1\\%
    \else
      \label{#1}%
    \fi
    \relpenalty=10000 %
    \binoppenalty=10000 %
    \ensuremath{%
      #2%
    }%
    ~\@eqnnum
  \endgroup
}
\newtheorem{condition}{Condition}
\begin{document}

\date{}

\title{\bf \emph{Almost sure} convergence rates for Stochastic Gradient Descent and Stochastic Heavy Ball}

\author[1]{Othmane Sebbouh}
\author[2]{Robert M. Gower}
\author[3]{Aaron Defazio}

\affil[1]{CNRS, ENS, PSL University, CREST, ENSAE}
\affil[2]{Télécom ParisTech, IP Paris}
\affil[3]{Facebook AI Research, New York, USA}

\maketitle

\begin{abstract}
	We study stochastic gradient descent (SGD) and the stochastic heavy ball method (SHB, otherwise known as the momentum method) for the general stochastic approximation problem. 
	For SGD, in the convex and smooth setting, we provide the first \emph{almost sure} asymptotic convergence \emph{rates} for a weighted average of the iterates . More precisely, we show that the convergence rate of the function values is arbitrarily close to $o(1/\sqrt{k})$, and is exactly $o(1/k)$ in the so-called overparametrized case. We show that these results still hold when using stochastic line search and stochastic Polyak stepsizes, thereby giving the first proof of convergence of these methods in the non-overparametrized regime.
 	Using a substantially different analysis, we show that these rates hold for SHB as well, but at the last iterate. This distinction is important because it is the last iterate of SGD and SHB which is used in practice. We also show that the last iterate of SHB converges to a minimizer \emph{almost surely}. Additionally, we prove that the function values of the deterministic HB converge at a $o(1/k)$ rate, which is faster than the previously known $O(1/k)$.
 	Finally, in the nonconvex setting, we prove similar rates on the lowest gradient norm along the trajectory of SGD.
\end{abstract}

\section{Introduction}
\label{sec:introduction} Consider the stochastic approximation problem
\begin{eqnarray} \label{eq:prob}
x_* \in \argmin_{x\in\R^d} f(x) \eqdef \ec[v \sim \D]{f_v(x)},
\end{eqnarray}
where $\D$ is a distribution on an arbitrary space $\Omega$ and $f_v$ is a real-valued function.  Let $\cX_* \subset \R^d$ be the set of solutions of~\eqref{eq:prob} (which we assume to be nonempty) and $f_* = f(x_*)$ for any solution $x_* \in \X_*$. The stochastic approximation problem \eqref{eq:prob} encompasses several problems in machine learning, including Online Learning and Empirical Risk Minimization (ERM). In these settings,  when the function $f$ can be accessed only through sampling or when the size of the datasets is very high, first-order stochastic gradient methods have proven to be very effective thanks to their low iteration complexity. The methods we analyze, Stochastic Gradient descent (SGD, \citep{RobbinsMonro:1951}) and Stochastic Heavy Ball (SHB, \citep{Polyak64}), are among the most popular such methods.

\subsection{Contributions and Background}\label{subsec:related_work}

Here we summarize the relevant background and our contributions. All of our rates of convergence are also given succinctly in Table~\ref{tab:summary_rates}.
\paragraph{\textit{Almost sure} convergence rates for SGD.} The \emph{almost sure} convergence of the iterates of SGD is a well-studied question \citep{Bottou03, Zhou17, Nguyen18}. For functions satisfying $\forall (x, x_*) \in \R^d \times \X_*, \langle \nabla f(x), x - x_* \rangle \geq 0$, called \textit{variationally coherent}, the convergence was shown in \cite{Bottou03}  by assuming that the minimizer is unique. Recently in~\cite{Zhou17}, the uniqueness assumption of the minimizer was dropped for variationally coherent functions by assuming bounded gradients. The easier question of the \emph{almost sure} convergence of the norm of the gradients of SGD in the nonconvex setting, and of the objective values in the convex setting, has also been positively answered by several works, see \cite{bertsekas2000gradient} and references therein, or more recently \cite{mertikopoulos2020almost, orabona2020almostsure}. In this work, we aim to quantify this
\citep{Nemirovski09, bach2011non, Ghadimi13} convergence. Indeed, while convergence rates are commonplace for convergence in expectation (\cite{Nemirovski09, bach2011non, Ghadimi13} for example), the litterature on the convergence rates of SGD in the \textit{almost sure} sense is sparse. For an adaptive SGD method, \cite{Li19} prove the convergence \emph{of a subsequence} of the squared gradient at a rate arbitrarily close to $o(1/\sqrt{k})$. More precisely, they show that $\liminf_k k^{\frac{1}{2} - \epsilon}\sqn{\nabla f(x_k)}=0$ for all $\epsilon	> 0$, where $x_k$ is the $k$th iterate of SGD. \cite{godichon2016lp} proves that, for locally strongly convex functions, the sequence $(\sqn{x_k - x_*})_k$, where $x_*$ is the unique minimizer of $f$, converges \emph{almost surely} at a rate arbitrarily close to $o(1/k)$.\\[0.1cm]

\noindent \emph{Contributions}:
{\bf 1.} In the convex and smooth setting, we show that the function values at a weighted average of the iterates of SGD converge \emph{almost surely} at a rate arbitrarily close to $o(1/\sqrt{k})$. In the so-called overparametrized case, where the stochastic gradients at any minimizer $\nabla f_v(x_*)$ are $0$, we show that this rate improves to $o(1/k)$. The proof of these results is surprisingly simple, and relies on a new weighted average of the iterates of SGD and on the classical Robbins-Siegmund supermartingale convergence theorem (Lemma~\ref{lem:simple_RS}). We also complement the well-known Robbins-Monro~\citep{RobbinsMonro:1951} conditions on the stepsizes with new conditions (See Condition~\ref{con:step_sizes}) that allow us to derive convergence rates in the \emph{almost sure} sense. We also show that our theory still holds in the nonsmooth setting when we assume bounded subgradients (Appendix \ref{sec:app_nonsmooth}). 
{\bf 2.}  In the nonconvex setting, under the recently introduced \emph{ABC condition}~\citep{Khaled20}, we derive \textit{almost sure} convergence rates for the minimum squared gradient norm along the trajectory of SGD which match the rates we derived for the objective values of SGD.

\paragraph{Asymptotic convergence of SGD with adaptive step sizes.} One drawback of the theory of SGD in the smooth setting is that it relies on the knowledge of the smoothness constant. Two of the earliest methods which have been proposed to address this issue are Line-Search (LS) \citep{nocedal2006sequential} and Polyak Stepsizes (PS) \citep{polyak1987introduction}. But while their convergence had been established in the deterministic case, it wasn't until recently \citep{Vaswani19, vaswani2020adaptive, loizou2020stochastic} that SGD with LS and with PS has been shown to converge assuming only smoothness and convexity of the functions $f_v$. For both methods, it has been shown that SGD converges to the minimum at a rate $O(1/k)$ in the overparametrized setting, but converges only to a neighborhood of the minimum when overparametrization does not hold.\\[0.1cm]

\noindent \emph{Contributions.} We show that SGD with LS or PS converges asymptotically at a rate arbitrarily close to $O(1/\sqrt{k})$ in expectation, and to $o(1/\sqrt{k})$ \emph{almost surely}. Moreover, in the overparamztrized setting, using the proof technique we developed for regular SGD, we show that SGD with LS or PS converges \emph{almost surely} to the minimum at a $o(1/k)$ rate.

\paragraph{\textit{Almost sure} convergence rates for SHB  and $o(1/k)$ convergence for HB.} The first local convergence of the deterministic Heavy Ball method was given in~\cite{Polyak64}, showing that it converges at an accelerated rate for twice differentiable strongly convex functions.  Only recently did \cite{Ghadimi2014}  show that the deterministic Heavy Ball method converged globally and sublinearly for smooth  and convex functions. The  SHB has recently been analysed for nonconvex functions and for strongly convex functions in~\cite{Gadat18}.  For strongly convex functions, they prove a $O\br{1/t^\beta}$ convergence rate for any $\beta < 1$.
Using a similar Lyapunov function to the one in \cite{Ghadimi2014}, a $O(1/\sqrt{t})$ convergence rate for SHB in the convex setting was given in~\cite{Yang16} and~\cite{Orvieto19} under the bounded gradient variance assumption. For the specialized setting of minimizing quadratics, it has been shown that the SHB iterates converge  linearly at an accelerated rate, but only in expectation rather than in L2~\citep{Loizou2018}. By using stronger assumptions on the noise as compared to \cite{Kidambi18}, \cite{Can19} show that by using a specific parameter setting, the SHB applied on quadratics converges at an accelerated rate to a neighborhood of a minimizer. Finally, the \textit{almost sure }convergence of SHB to a minimizer for nonconvex functions was proven in \cite{Gadat18} under an \emph{elliptic condition} which guarantees that SHB escapes any unstable point. But we are not aware of any convergence rates for the \textit{almost sure} convergence of SHB.\\[0.1cm]

\noindent \emph{Contributions.} 
{\bf 1.} In the smooth and convex setting, we show that the function values at the last iterate of SHB converge \emph{almost surely} at a rate close to $o(1/\sqrt{k})$. Similarly to SGD, this rate can be improved to $o(1/k)$ in the overparametrized setting. Moreover, we show that the last iterate of SHB converges to a minimizer \emph{almost surely}. In the deterministic setting, where we use the gradient $\nabla f$ at each iteration,  we prove that the function values of the deterministic HB converge at a $o(1/k)$ rate, which is faster than the previously known $O(1/k)$ \citep{Ghadimi2014} and matches the rate recently derived for Gradient Descent in \cite{Lee19}. Compared to the SGD analysis we develop, the derivation of \textit{almost sure} convergence rates for SHB is quite involved, and combines tools developed in \cite{Attouch16} for the analysis of the (deterministic) Nesterov Accelerated Gradient method and the classical Robbins-Siegmund theorem.
{\bf 2.} Our results rely on an iterate averaging viewpoint of SHB (Proposition \ref{prop:ima}), which considerably simplifies our analysis and suggests parameter settings different from the usual settings of the momentum parameter, which is fixed at around 0.9, and often exhibits better empirical performance than SGD \citep{Sutskever13}. We show through extensive numerical experiments in Figure~\ref{fig:experiments} that our new parameter setting is statistically superior to the standard rule-of-thumb settings on convex problems.
{\bf 3.} Additionally, we show in Appendix \ref{sec:app_shb_exp} that the bounded gradients and bounded noise assumptions used in \cite{Yang16, Orvieto19} can be avoided, and prove that SHB at the last iterate converges in expectation at a $O(1/k)$ rate to a neighborhood of the minimum and at a $O(1/\sqrt{k})$ rate to the minimum exactly.\\[-0.7cm]

\subsection{Assumptions and general consequences}
Our theory in the convex setting relies on the following assumption of convexity and smoothness.
\begin{assumption}\label{asm:smoothconvex}
For all $v \sim \D $, there exists $L_v>0$ such that for every $x, y \in \R^d$ we have that
\begin{align}\label{eq:convexity}
f_v(y) & \geq f_v(x) + \langle \nabla f_v(x), y - x \rangle , \\
f_v(y) &\leq f_v(x) + \langle \nabla f_v(x), y - x \rangle + \frac{L_v}{2}\sqn{y - x}, \label{eq:smoothn}
\end{align}
\textit{almost surely}. Let $\cL \eqdef \sup_{v \sim \D} L_v.$ We assume that $\cL < \infty$.
Consequently, $f$ is also smooth and we use $L>0$ to denote its smoothness constant.
\end{assumption}

\begin{definition}
Define the residual gradient noise as 
\begin{equation}\label{eq:gradnoise}
 \sigma ^2 \eqdef \sup_{x \in \X^*}\ec[v \sim \D]{\norm{\nabla f_v(x^*)}^2}.
\end{equation}
\end{definition}

Assumption~\ref{asm:smoothconvex} has the following simple consequence on the expectation of the gradients.
\begin{lemma}
\label{lem:smoothconvex}
If Assumption~\ref{asm:smoothconvex} holds, then 
\begin{equation}\label{eq:expsmooth}
\ecd{\sqn{\nabla f_v(x)}} \leq 4\cL\br{f(x) - f_*} + 2\sigma^2.
\end{equation}
\end{lemma}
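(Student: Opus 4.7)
The plan is to decompose $\nabla f_v(x)$ into a gradient difference plus a residual noise term at a minimizer, bound the first piece by a function-value gap using a standard co-coercivity-type consequence of smoothness, and then take expectation. Throughout, I fix an arbitrary minimizer $x_* \in \X_*$ (so $\nabla f(x_*) = 0$, although the individual $\nabla f_v(x_*)$ need not vanish).

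First I would record the following consequence of Assumption~\ref{asm:smoothconvex}: for each realization $v$, the function $\phi_v(x) \eqdef f_v(x) - f_v(x_*) - \langle \nabla f_v(x_*), x - x_* \rangle$ is convex, $L_v$-smooth, and attains its minimum at $x_*$ (since $\nabla \phi_v(x_*) = 0$). Applying the elementary bound $\phi_v(x) - \min \phi_v \geq \tfrac{1}{2L_v}\norm{\nabla \phi_v(x)}^2$, which follows from smoothness by minimizing the upper quadratic over one gradient step, yields
\begin{equation*}
\norm{\nabla f_v(x) - \nabla f_v(x_*)}^2 \;\leq\; 2 L_v \bigl( f_v(x) - f_v(x_*) - \langle \nabla f_v(x_*), x - x_* \rangle \bigr) \;\leq\; 2\cL \bigl( f_v(x) - f_v(x_*) - \langle \nabla f_v(x_*), x - x_* \rangle \bigr),
\end{equation*}
where the last inequality uses $L_v \leq \cL$ and the fact that the bracketed quantity is nonnegative (because $x_*$ is a minimizer of $\phi_v$).

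Next I would take expectation over $v \sim \D$. The linear-in-$v$ term collapses via $\ecd{\nabla f_v(x_*)} = \nabla f(x_*) = 0$, so
\begin{equation*}
\ecd{\norm{\nabla f_v(x) - \nabla f_v(x_*)}^2} \;\leq\; 2\cL \bigl( f(x) - f_* \bigr).
\end{equation*}
Finally, splitting $\nabla f_v(x) = (\nabla f_v(x) - \nabla f_v(x_*)) + \nabla f_v(x_*)$ and applying $\norm{a+b}^2 \leq 2\norm{a}^2 + 2\norm{b}^2$, together with the definition \eqref{eq:gradnoise} of $\sigma^2$, delivers
\begin{equation*}
\ecd{\norm{\nabla f_v(x)}^2} \;\leq\; 2\,\ecd{\norm{\nabla f_v(x) - \nabla f_v(x_*)}^2} + 2\,\ecd{\norm{\nabla f_v(x_*)}^2} \;\leq\; 4\cL(f(x)-f_*) + 2\sigma^2,
\end{equation*}
which is \eqref{eq:expsmooth}.

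There is no real obstacle here; the only thing to be careful about is not assuming $\nabla f_v(x_*) = 0$ (it vanishes only in expectation, not per-realization), which is precisely why the residual noise term $\sigma^2$ appears and why the coefficient in front of $f(x)-f_*$ is $4\cL$ rather than $2\cL$.
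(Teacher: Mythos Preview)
Your proof is correct and follows essentially the same approach as the paper: both establish the co-coercivity bound $\norm{\nabla f_v(x) - \nabla f_v(x_*)}^2 \leq 2L_v(f_v(x) - f_v(x_*) - \langle \nabla f_v(x_*), x - x_*\rangle)$, take expectation to collapse the linear term via $\nabla f(x_*)=0$, and then apply $\norm{a+b}^2 \leq 2\norm{a}^2 + 2\norm{b}^2$. The only cosmetic difference is that the paper cites this inequality as \cite[Equation 2.1.7]{Nesterov-convex}, whereas you derive it via the auxiliary function $\phi_v$.
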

In all our results of Sections \ref{sec:sgd_asymp} and \ref{sec:shb_asymptotic}, we only use convexity and the  inequality~\eqref{eq:expsmooth}. Thus, Assumption \ref{asm:smoothconvex} can be slightly relaxed by removing the smoothness condition~\eqref{eq:smoothn} and re-branding~\eqref{eq:expsmooth} as an assumption, as opposed to a consequence. 
  With this bound~\eqref{eq:expsmooth}, we do not need to assume a uniform bound on the the squared norm of the gradients or on their variance, as is often done when analyzing SGD~\citep{Nemirovski09} or SHB~\citep{Yang16}. Note, however, that the analysis carried for SGD and SHB in~\cite{Nemirovski09} and \cite{Yang16} is more general and applies to the nonsmooth case, for which assuming bounded subgradients is often necessary. As an illustration, we show that our results hold in the nonsmooth case under the bounded subgradients assumption in Appendix \ref{sec:app_nonsmooth}. Note also that all our results still hold with the usual but more restrictive assumption of bounded gradient variance (see for example \cite{Ghadimi13}). Indeed, when this assumption holds, \eqref{eq:expsmooth} holds with $L$ in place of $\cL$, where $L$ is the smoothness constant of $f$.

\begin{definition}[Informal]
When $\sigma^2 = 0$, we say that we have an overparametrized model.
\end{definition}
When our models have enough parameters to \emph{interpolate the data}~\citep{Vaswani18}, then $\nabla f_v(x^*) =0, \; \forall v \sim \D$, and consequently $\sigma^2 =0.$ This property has been observed especially for the training of large neural networks in Empirical Risk Minimization, where $f$ is a finite-sum.

\begin{remark}[Finite-sum setting]\label{rem:finite_sum}
Let $n \in \N^*$ and define $[n] \eqdef \left\{1,\dots,n\right\}$. Let $f(x) = \frac{1}{n}\sum_{i=1}^n f_i(x)$, where each $f_i$ is $L_{i}$-smooth and convex, and $L_{\max} = \max_{i \in [n]}L_i < \infty$. If we sample minibatches of size $b$ without replacement, then \cite{SAGAminib, Gower19} show that \eqref{eq:expsmooth} holds with\\[-0.7cm]
\begin{align}
\cL \equiv \cL(b) \eqdef  \frac{1}{b}\frac{n-b}{n-1}L_{\max} + \frac{n}{b}\frac{b-1}{n-1}L \quad \mbox{and} \quad \sigma^2 \equiv \sigma^2(b) = \frac{1}{b}\frac{n - b}{n - 1}\sigma_1^2,
\end{align}
where $\sigma_1^2\eqdef  \frac{1}{n}\underset{x \in \cX_*}{\sup}\sum_{i=1}^n \sqn{\nabla f_i(x_*)}$. Note that $\sigma^2(n)~=~0$ and $\cL(n)=L$, as expected, since $b=n$ corresponds to full batch gradients,  or equivalently to using deterministic GD or HB.
Similarly, $\cL(1) = L_{\max}$, since $b=1$ corresponds to sampling one individual $f_i$ function. 
\end{remark}

\subsection{SGD and an iterate-averaging viewpoint of SHB}
In Section \ref{sec:sgd_asymp}, we will analyze SGD, where we sample at each iteration $v_k \sim \D$, and iterate
\begin{align}\label{eq:SGD}
\boxed{x_{k+1} = x_k - \eta_k \nabla f_{v_k}(x_k),} \tag{SGD}
\end{align}
where $\eta_k$ is a step size. In Section \ref{sec:shb_asymptotic}, we will analyze SHB, whose iterates are
\begin{align}\label{eq:SHB}
x_{k+1} = x_k - \alpha_k \nabla f_{v_k}(x_k) + \beta_k \br{x_k - x_{k-1}}, \tag{SHB}
\end{align}
where $\alpha_k$ is commonly referred to as the step size and $\beta_k$ as the momentum parameter. 
Our forthcoming analysis of~\eqref{eq:SHB} leverages an \emph{iterate moving-average viewpoint} of~\eqref{eq:SHB} and particular parameter choices 
that we present in Proposition \ref{prop:ima}.
\begin{proposition}
\label{prop:ima} Let $z_0 = x_0 \in \R^d$ and $\eta_k,\lambda_k > 0.$ Consider the iterate-moving-average (IMA) method:
\begin{align}
\boxed{z_{k+1} = z_{k}-\eta_{k}\nabla f_{v_k}(x_{k}), \quad x_{k+1} = \frac{\lambda_{k+1}}{\lambda_{k+1}+1}x_{k}+\frac{1}{\lambda_{k+1}+1}z_{k+1}} \label{eq:SHB_IMA} \tag{SHB-IMA}
\end{align}\\[-0.8cm]
\begin{align}
\mbox{If} \qquad \quad \alpha_k =  \frac{\eta_{k}}{1+\lambda_{k+1}}  \; \mbox{ and }\; \beta_k =  \frac{\lambda_{k}}{1+\lambda_{k+1}},\label{eq:alphafrometalambdaintro}
\end{align}
then the $x_k$ iterates in~\eqref{eq:SHB_IMA} are equal to the $x_k$ iterates of the  method~\eqref{eq:SHB} .
\end{proposition}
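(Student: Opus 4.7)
The plan is a direct algebraic verification: I would start by solving the second equation of \eqref{eq:SHB_IMA} for $z_{k+1}$ in terms of $x_{k+1}$ and $x_k$, obtaining
\begin{equation*}
z_{k+1} = (\lambda_{k+1}+1)\, x_{k+1} - \lambda_{k+1}\, x_k,
\end{equation*}
and analogously $z_k = (\lambda_k+1)\, x_k - \lambda_k\, x_{k-1}$ for $k \geq 1$. Substituting both into the $z$-recursion $z_{k+1} = z_k - \eta_k \nabla f_{v_k}(x_k)$ yields
\begin{equation*}
(\lambda_{k+1}+1)\, x_{k+1} = (\lambda_{k+1} + \lambda_k + 1)\, x_k - \lambda_k\, x_{k-1} - \eta_k \nabla f_{v_k}(x_k).
\end{equation*}
Dividing through by $\lambda_{k+1}+1$ gives
\begin{equation*}
x_{k+1} = x_k + \frac{\lambda_k}{\lambda_{k+1}+1}(x_k - x_{k-1}) - \frac{\eta_k}{\lambda_{k+1}+1}\nabla f_{v_k}(x_k),
\end{equation*}
which is exactly the \eqref{eq:SHB} update with the parameter identifications in~\eqref{eq:alphafrometalambdaintro}.

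Next I would handle the base case $k = 0$ separately, since the elimination of $z_k$ used the relation for $k \geq 1$. Using the initialization $z_0 = x_0$, the \eqref{eq:SHB_IMA} recursion produces
\begin{equation*}
x_1 = \frac{\lambda_1}{\lambda_1 + 1} x_0 + \frac{1}{\lambda_1 + 1}\bigl(x_0 - \eta_0 \nabla f_{v_0}(x_0)\bigr) = x_0 - \alpha_0 \nabla f_{v_0}(x_0),
\end{equation*}
which coincides with the first step of \eqref{eq:SHB} under the standard convention $x_{-1} = x_0$ (so that the momentum term vanishes at $k=0$).

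I expect no real obstacle here: the whole argument is a change of variables, and the only thing to be careful about is that the two descriptions use different state spaces (the IMA view carries the pair $(x_k, z_k)$ while \eqref{eq:SHB} carries $(x_k, x_{k-1})$), so I would explicitly note the bijection $z_k \leftrightarrow x_{k-1}$ implied by the rearranged convex combination, and check that this bijection is compatible with the chosen initializations. Once that is in place, matching coefficients of $x_k$, $x_{k-1}$, and $\nabla f_{v_k}(x_k)$ on both sides is immediate and produces the formulas $\alpha_k = \eta_k/(1+\lambda_{k+1})$ and $\beta_k = \lambda_k/(1+\lambda_{k+1})$ claimed in~\eqref{eq:alphafrometalambdaintro}.
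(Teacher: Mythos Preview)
Your proposal is correct and follows essentially the same approach as the paper: both proofs eliminate $z_k$ by inverting the convex-combination step to write $z_k=(\lambda_k+1)x_k-\lambda_k x_{k-1}$, substitute into the $z$-recursion, and read off the SHB coefficients. Your explicit treatment of the base case $k=0$ via $z_0=x_0$ and $x_{-1}=x_0$ is a nice addition that the paper leaves implicit.
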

The equivalence between this formulation and the original \ref{eq:SHB} is proven in the supplementary material (Section \ref{sec:proof_ima}). The IMA formulation \eqref{eq:SHB_IMA} is crucial in comparing \ref{eq:SHB} and \ref{eq:SGD} as it allows to interpret the parameter $\alpha_k$ in \ref{eq:SHB} as a \emph{scaled} step size and unveils a natural stepsize $\eta_k$. In all of our theorems, the parameters $\eta_k$ and $\lambda_k$ naturally arise in the recurrences and Lyaponuv functions. We determine how to set the parameters $\eta_k$ and $\lambda_k,$ which in turn gives settings for $\alpha_k$ and $\beta_k$ through~\eqref{eq:alphafrometalambdaintro}.  In the remainder of this work, we will directly analyze the method \ref{eq:SHB_IMA}.

Having new reformulations often leads to new insights.
This is the case for Nesterov's accelerated gradient method, where at least six forms are known \citep{adefazio-curvedgeom2019} and recent research suggests that iterate-averaged reformulations are the easiest to generalize to the combined proximal and variance-reduced case \citep{lan2017}. \\[-1.5cm]
\begin{center}
\begin{table}
\centering
\addtolength{\leftskip} {-2cm} 
\addtolength{\rightskip}{-2cm}
\begin{tabular}{@{}lllllll@{}} \toprule
\textbf{Algorithm} & $\mathbf{\sigma^2}$ & \textbf{Stepsize} & \textbf{Conv.} & \textbf{Rate}   & \textbf{Iterate} & \textbf{Ref} \\ \midrule
\ref{eq:SGD}  & $\neq 0$ & $O(k^{-1/2 - \epsilon})$ & \emph{a.s.} & $o(k^{-1/2 + \epsilon})$  & average  & Cor. \ref{cor:stepsizes_rates_sgd} \\
\ref{eq:SGD}  & $=0$ & $O(1)$ & \emph{a.s.} & $o(k^{-1})$  & average & Cor. \ref{cor:stepsizes_rates_sgd} \\
\ref{eq:SGD-ALS}, \ref{eq:SGD-PS}  & $\neq0$ & adaptive$^*$ & \emph{a.s.}, $\mathbb{E}$ & $o(k^{-1/2 + \epsilon}), O(k^{-1/2 +\epsilon})$ & average & Cor. \ref{cor:stepsizes_rates_sgd_adap}   \\
\ref{eq:SGD-ALS}, \ref{eq:SGD-PS}  & $=0$ & adaptive$^*$ & \emph{a.s.}, $\mathbb{E}$ & $o(k^{-1}), O(k^{-1})$ & average & Cor. \ref{cor:stepsizes_rates_sgd_adap}   \\
\ref{eq:SHB}  & $\neq 0$ & $O(k^{-1/2 + \epsilon})$, $O(k^{-1/2})$ & \emph{a.s.}, $\mathbb{E}$ & $o(k^{-1/2 - \epsilon})$, $O(k^{-1/2})$ & last & Cor. \ref{cor:stepsizes_rates_shb}, \ref{cor:convneigh}  \\
\ref{eq:SHB}  & $=0$ & $O(1)$ & \emph{a.s.}, $\mathbb{E}$ & $o(k^{-1})$, $O(k^{-1})$  & last & Cor. \ref{cor:faster1k}, \ref{cor:convneigh}\\
\ref{eq:SGD}, nonconvex  & $C \neq 0$ & $O(k^{-1/2 + \epsilon})$ & \emph{a.s.} & $o(k^{-1/2 + \epsilon})$ & $\min \sqn{\nabla}$ & Cor. \ref{cor:stepsizes_rates_sgd_nonconvex}  \\
\ref{eq:SGD}, nonconvex  & $C =0$ & $O(1)$ & \emph{a.s.} & $o(k^{-1})$  & $\min \sqn{\nabla}$ & Cor. \ref{cor:stepsizes_rates_sgd_nonconvex} \\ \bottomrule
\end{tabular}
\captionsetup{width=1\linewidth}
\caption{Summary of the rates we obtain. 
All small-o (resp. big-O) rates are \emph{almost surely} (resp. in expectation). The constants $C, \cL$ and $\sigma$ are defined in~\eqref{eq:ABC},~\eqref{eq:expsmooth} and~\eqref{eq:gradnoise}, respectively. \textit{a.s.}: \textit{almost surely}, $\mathbb{E}$: in expectation. $\min \sqn{\nabla}$: lowest squared norm of the gradient along the trajectory of \ref{eq:SGD}. adaptive$^*$: Maximum step sizes need to verify conditions similar to Condition \ref{con:step_sizes}, but do not require knowing the smoothness constant $\cL$.}
\label{tab:summary_rates}
\end{table}
\end{center}

\section{\textit{Almost sure} convergence rates for SGD and SGD with adaptive stepsizes}\label{sec:sgd_asymp}
  We will first present \textit{almost sure} convergence rates for SGD, then for SGD with Line-Search and Polyak Stepsizes.
  \subsection{SGD: average-iterates \textit{almost sure }convergence}



Our results rely on a classical convergence result \citep{Robbins71}.
\begin{lemma}\label{lem:simple_RS}
Consider a filtration $\br{\F_k}_k$, the  nonnegative sequences of $\br{\F_k}_k-$adapted processes $\br{V_k}_k$,  $\br{U_k}_k$ and $\br{Z_k}_k$, and a sequence of positive numbers $\br{\gamma_k}_k$ such that $\sum_k Z_k < \infty \; \mbox{almost surely}$, $\prod_{k=0}^{\infty} (1+\gamma_k) < \infty$, and \[\forall k \in \N, \; \ec{V_{k+1}|\F_k} + U_{k+1} \leq (1+\gamma_k) V_k + Z_k.\]
Then $\br{V_k}_k$ converges and $\sum_k U_k < \infty $ \textit{almost surely}.
\end{lemma}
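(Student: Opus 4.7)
This is the classical Robbins-Siegmund supermartingale convergence theorem (exactly the paper cited in the statement), so the plan is to reduce the hypothesized inequality to Doob's convergence theorem for nonnegative supermartingales via two standard transformations: a multiplicative rescaling that removes the $(1+\gamma_k)$ factor, and an additive compensation that absorbs the noise $Z_k$.

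The first step is normalization. Set $P_k = \prod_{j=0}^{k-1}(1+\gamma_j)$, which is nondecreasing in $k$ since $\gamma_j > 0$ and, by hypothesis, bounded above by some finite $P_\infty \geq 1$. Introducing $W_k = V_k/P_k$, $\tilde U_k = U_k/P_k$, $\tilde Z_k = Z_k/P_{k+1}$ and dividing the given recursion through by $P_{k+1}$ yields
\[
\E[W_{k+1} \mid \F_k] + \tilde U_{k+1} \;\leq\; W_k + \tilde Z_k, \qquad \sum_k \tilde Z_k \;\leq\; \sum_k Z_k \;<\; \infty \text{ almost surely},
\]
where the bound on the noise uses $P_{k+1} \geq 1$. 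This eliminates the multiplicative perturbation.

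The second step is to absorb the additive noise by forming the compensated process
\[
S_k \;=\; W_k + \E\!\left[\,\sum_{j \geq k} \tilde Z_j \;\middle|\; \F_k\,\right],
\]
which is nonnegative and $\F_k$-measurable. A direct computation using the tower property and the recursion of the first step gives $\E[S_{k+1}\mid\F_k] \leq S_k - \E[\tilde U_{k+1}\mid\F_k]$, so $(S_k)$ is a nonnegative supermartingale. Doob's theorem then gives an almost-sure finite limit $S_\infty$. Taking expectations, telescoping, and applying Fubini-Tonelli yields $\sum_k \tilde U_k < \infty$ almost surely. The martingale convergence theorem applied to $\E[\sum_j \tilde Z_j \mid \F_k]$ shows that the tail conditional expectation $\E[\sum_{j\geq k}\tilde Z_j \mid \F_k]$ converges to $0$ almost surely, so $W_k \to S_\infty$ almost surely. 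Multiplying through by $P_k \to P_\infty$ gives the claimed almost-sure convergence of $V_k$, and $\sum_k U_k \leq P_\infty \sum_k \tilde U_k < \infty$ almost surely.

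The one subtle point is integrability: defining $S_k$ requires $\E[\sum_j \tilde Z_j] < \infty$, while the hypothesis provides only almost-sure summability. I would handle this via the standard localization by the stopping times $\tau_N = \inf\{k : \sum_{j<k} \tilde Z_j > N\}$, running the argument above on each stopped process, and letting $N \to \infty$ on the full-probability event $\{\sum_j \tilde Z_j < \infty\}$. Once localization is in place, everything reduces to a mechanical application of Doob's theorem, making this the only real obstacle in the proof.
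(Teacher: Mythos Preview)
The paper does not prove this lemma at all: it is stated as a classical result with a citation to Robbins--Siegmund (1971), and every subsequent application in the paper simply invokes it as a black box. So there is no ``paper's own proof'' to compare against.

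Your sketch is the standard proof of the Robbins--Siegmund theorem and is essentially correct. The two-step reduction (divide out the product $P_k$ to kill the multiplicative perturbation, then add back the conditional tail of the noise to obtain a genuine nonnegative supermartingale) is exactly the classical argument, and your identification of the localization issue when only almost-sure summability of $Z_k$ is assumed is the right place to be careful. Two minor remarks: (i) when you pass from the normalized recursion to $\E[S_{k+1}\mid\F_k]\leq S_k-\E[\tilde U_{k+1}\mid\F_k]$, you should first take $\E[\cdot\mid\F_k]$ of the recursion, since as stated in the lemma $U_{k+1}$ is only $\F_{k+1}$-measurable; this is harmless but worth making explicit. (ii) The same integrability caveat you raise for $\sum_j\tilde Z_j$ applies in principle to $V_0$; the same localization handles it, or one simply notes that in every application in the paper $V_0$ is a deterministic constant.
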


We use the following condition on the step sizes in our \textit{almost sure }convergence results.
\begin{condition}\label{con:step_sizes}
The sequence $\br{\eta_k}_k$ is decreasing, $\sum_k \eta_k = \infty$, $\sum_k \eta_k^2\sigma^2 < \infty$ and $\sum_k \frac{\eta_k}{\sum_{j}\eta_j} = \infty$.
\end{condition}
The conditions $\sum_k \eta_k = \infty$ and $\sum_k \eta_k^2 < \infty$ are known as the Robbins-Monro conditions \citep{RobbinsMonro:1951} and are classical in the SGD litterature (see \cite{bertsekas2000gradient} for example). The additional conditions, $\sum_k \frac{\eta_k}{\sum_{j}\eta_j} = \infty$ and $\br{\eta_k}_k$ is decreasing, allow us to derive convergence rates for the \textit{almost sure} convergence using a new proof technique. However, as we will see in the next remark, the usual choices of step sizes which verify the Robbins-Monro conditions verify Condition \ref{con:step_sizes} as well.

\begin{remark}\label{rem:stepsizes_condition}
Let $\eta_k = \frac{\eta}{k^{\xi}}$ with $\xi, \eta > 0$. Condition \ref{con:step_sizes} is verified for all $\xi \in (\frac{1}{2}, 1]$ when $\sigma^2 \neq 0$, and for all $\xi \in [0, 1]$ when $\sigma^2 = 0$.
\end{remark}
See Appendix \ref{sec:app_formal_statements} for a proof of this remark. Indeed, all the formal proofs of our results are defered to the appendix.

\begin{theorem}\label{thm:as_conv_sgd}
Let Assumption \ref{asm:smoothconvex} hold. Consider the iterates of \ref{eq:SGD}. Choose step sizes $\br{\eta_k}_k$ which verify Condition \ref{con:step_sizes}, where $\forall k \in \N, \; 0 < \eta_k \leq 1/(4\cL)$. Define for all $k \in \N$
\begin{eqnarray}\label{eq:w_xbar}
w_k = \frac{2\eta_k}{\sum_{j=0}^{k}\eta_j} \quad \mbox{and} \quad \left\{
            \begin{array}{l}
            		\bar{x}_0 = x_0 \\
            		\bar{x}_{k+1} = w_k x_k + (1 - w_k) \bar{x}_k.
            \end{array} \right. 
\end{eqnarray}
Then, we have $a.s$. that $f(\bar{x}_k) - f_* = o\br{\frac{1}{\sum_{t=0}^{k-1} \eta_t}}.$
\end{theorem}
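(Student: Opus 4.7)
The plan is to apply the Robbins–Siegmund lemma (Lemma \ref{lem:simple_RS}) twice: once to the raw SGD iterates to get a summability statement on the function gap at the $x_k$'s, and then a second time to the weighted average $\bar{x}_k$ whose particular weights are engineered so that, after scaling by $S_{k+1} := \sum_{j=0}^{k}\eta_j$, the resulting sequence satisfies a supermartingale-type recurrence driven precisely by the summable quantity produced in the first step.

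First I would carry out the standard SGD one-step analysis. Expanding $\|x_{k+1}-x_*\|^2$ from \ref{eq:SGD}, taking conditional expectation on $\mathcal{F}_k$, and using convexity $\langle \nabla f(x_k), x_k - x_* \rangle \geq f(x_k)-f_*$ together with Lemma \ref{lem:smoothconvex}, the choice $\eta_k \leq 1/(4\cL)$ yields
\begin{equation*}
\ec{\sqn{x_{k+1}-x_*}\,|\,\mathcal{F}_k} + \eta_k(f(x_k)-f_*) \leq \sqn{x_k-x_*} + 2\eta_k^2\sigma^2.
\end{equation*}
Since $\sum_k \eta_k^2\sigma^2 < \infty$ by Condition \ref{con:step_sizes}, Lemma \ref{lem:simple_RS} gives that $\sqn{x_k-x_*}$ converges a.s.\ and, crucially, $\sum_k \eta_k(f(x_k)-f_*) < \infty$ a.s.

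Next I would exploit the specific weights. Writing $w_k = 2\eta_k/S_{k+1}$, the fact that $\br{\eta_k}_k$ is decreasing gives $S_{k+1} \geq (k+1)\eta_k$, so $w_k \in [0,1]$ for all $k\geq 1$ (the case $k=0$ is trivial since $\bar{x}_1=x_0$). Convexity of $f$ then yields
\begin{equation*}
f(\bar{x}_{k+1}) - f_* \leq w_k (f(x_k)-f_*) + (1-w_k)(f(\bar{x}_k)-f_*).
\end{equation*}
Multiplying through by $S_{k+1}$ and using $(1-w_k)S_{k+1} = S_{k+1}-2\eta_k = S_k - \eta_k$, we obtain the clean telescoping bound
\begin{equation*}
S_{k+1}(f(\bar{x}_{k+1})-f_*) + \eta_k(f(\bar{x}_k)-f_*) \leq S_k(f(\bar{x}_k)-f_*) + 2\eta_k(f(x_k)-f_*).
\end{equation*}
Setting $V_k := S_k(f(\bar{x}_k)-f_*) \geq 0$, $U_k := \eta_k(f(\bar{x}_k)-f_*)$, and $Z_k := 2\eta_k(f(x_k)-f_*)$ (a.s.\ summable by the first step), a second application of Lemma \ref{lem:simple_RS} shows that $V_k$ converges a.s.\ to some limit $V_\infty \geq 0$ and $\sum_k U_k < \infty$ a.s.

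Finally, I would use the extra condition $\sum_k \eta_k/S_{k+1} = \infty$ from Condition \ref{con:step_sizes} to upgrade convergence to a rate. Writing $\sum_k U_k = \sum_k (\eta_k/S_k)\,V_k < \infty$ a.s., if $V_\infty > 0$ on a positive-probability event then the sum would diverge there, a contradiction; hence $V_\infty = 0$ a.s., which is exactly $f(\bar{x}_k)-f_* = o(1/S_k)$ almost surely. The delicate point I expect to be the main obstacle is identifying the averaging scheme: the factor of $2$ in $w_k = 2\eta_k/S_{k+1}$ is precisely what makes $(1-w_k)S_{k+1} \leq S_k$ hold (rather than $=S_{k+1}$), converting the convexity inequality into a supermartingale recurrence compatible with Lemma \ref{lem:simple_RS}; a naive choice $w_k = \eta_k/S_{k+1}$ would not produce the negative drift term $-\eta_k(f(\bar{x}_k)-f_*)$ needed to invoke the extra step-size condition at the end.
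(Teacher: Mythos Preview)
Your proposal is correct and uses the same key ingredients as the paper: the one-step SGD bound, Jensen applied to the averaging recursion, the identity $(1-w_k)S_{k+1}=S_k-\eta_k$ coming from the factor $2$ in $w_k$, the Robbins--Siegmund lemma, and the final contradiction via $\sum_k \eta_k/S_k=\infty$.

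The only organizational difference is in how the second supermartingale is built. The paper substitutes the Jensen lower bound on $\eta_k(f(x_k)-f_*)$ directly back into the SGD recurrence, obtaining a single recursion on the combined Lyapunov $\sqn{x_k-x_*}+\tfrac{1}{2}S_k(f(\bar{x}_k)-f_*)$ with driver $Z_k=2\eta_k^2\sigma^2$; it then uses the separately established convergence of $\sqn{x_k-x_*}$ to peel off the second summand. You instead first extract $\sum_k\eta_k(f(x_k)-f_*)<\infty$ from Robbins--Siegmund on the raw SGD recurrence, and then apply the lemma a second time to the (pathwise, essentially deterministic) recursion $V_{k+1}+\eta_k(f(\bar{x}_k)-f_*)\leq V_k+2\eta_k(f(x_k)-f_*)$, where the driver is now the summable sequence produced in the first step. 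Your decoupling is slightly more modular (the stochastic analysis is confined to the first step), but the two arguments are equivalent in substance.
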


\begin{proof}
We present the main elements of the proof which help in understanding the difference between the classical non-asymptotic analysis of SGD in expectation and our analysis. We present the complete proof in Section \ref{sec:app_proofs_as_conv_sgd} of the appendix.

In the convex setting, the bulk of the convergence proofs of SGD is in using convexity and smoothness of $f$ to establish that, if $\eta_k \leq \frac{1}{4\cL}$, we have
\begin{equation}\label{eq:sgd_rec_main}
\ec[k]{\sqn{x_{k+1} - x_*}} + \eta_k \br{f(x_k) - f_*} \leq \sqn{x_{k} - x_*} + 2\eta_k^2\sigma^2.
\end{equation}
\paragraph{Classic non-asymptotic convergence analysis for SGD.} Taking the expectation, using telescopic cancellation and Jensen's inequality, it is possible to establish that
\begin{equation}
\ec{f(\tilde{x}_k) - f_*} \leq \frac{\sqn{x_0 - x_*}}{\sum_{t=0}^{k-1}\eta_t} + \frac{2\sigma^2 \sum_{t=0}^{k-1}\eta_t^2}{\sum_{t=0}^{k-1}\eta_t}, \; \mbox{where } \; \tilde{x}_k = \sum_{t=0}^{k-1}\frac{\eta_t}{\sum_{j=0}^{k-1}\eta_j} x_t.
\end{equation}
$\tilde{x}_k$ can then be computed on the fly using: 
\begin{equation}
\tilde{x}_{k+1} = \tilde{w}_k x_k + (1 - \tilde{w}_k) \tilde{x}_k, \quad \mbox{where} \quad \tilde{w}_k = \frac{\eta_k}{\sum_{j=0}^{k}\eta_j}.  \label{eq:xbar_classic}
\end{equation}
 This sequence of weights $\br{\tilde{w}_k}_k$ (which can be computed on the fly as $\tilde{w}_{k+1} = \frac{\eta_{k+1}\tilde{w}_k}{\eta_k + \eta_{k+1}\tilde{w}_k}$) is the one which allows to derive the tightest upper bound on the objective gap $f(x) - f_*$ in expectation. But it does not lend itself to tight \textit{almost sure} asymptotic convergence, as we will show next.

\paragraph{Naive asymptotic analysis.}
Applying  Lemma \ref{lem:simple_RS} to \eqref{eq:sgd_rec_main} gives
that $\sum_k \eta_k\br{f(x_k) - f_*} < \infty.$ Unfortunately, this only gives that $\lim_k \eta_k \br{f(x_k) - f_*} = 0$. 

\paragraph{Asymptotic analysis using the iterates defined in \eqref{eq:xbar_classic}.}
 \textit{What if we had used the sequence of iterates defined in} \eqref{eq:xbar_classic}? Let $\tilde{\delta}_k = f(\tilde{x}_k) - f_*$. Using Jensen's inequality, we have
\[ f(x_k) - f_* \geq \frac{1}{\tilde{w}_k}\tilde{\delta}_{k+1}  - \br{\frac{1}{\tilde{w}_k}  - 1} \tilde{\delta}_k. \]
Using this bound in \eqref{eq:sgd_rec_main} gives, after replacing $\tilde{w}_k$ by its expression \eqref{eq:xbar_classic} and multiplying by $\eta_k$, that
\[\ec[k]{\sqn{x_{k+1} - x_*}} + \sum_{j=0}^k \eta_j \tilde{\delta}_{k+1} \leq \sqn{x_{k} - x_*} + \sum_{j=0}^{k-1} \eta_j \tilde{\delta}_k + 2\eta_k^2\sigma^2.\]
 
Applying Lemma \ref{lem:simple_RS} gives that $\br{\sum_{j=0}^{k-1}\eta_j \tilde{\delta}_k}_k$ converges \emph{almost surely}. Hence, there exist $k_0 \in \N$ and a constant $C_{k_0}$ such that for all $k \geq k_0, \; \tilde{\delta}_k \leq \frac{C_{k_0}}{\sum_{j=0}^{k-1}\eta_j}$. That is, we have \[\tilde{\delta}_k = O\br{\frac{1}{\sum_{j=0}^{k-1}\eta_j}}.\]
But we show that we can actually do much better.

\paragraph{Our analysis.}
Now consider the alternative averaging of iterates $\bar{x}_k$ given in~\eqref{eq:w_xbar}. First note that using \eqref{eq:sgd_rec_main} and Lemma \ref{lem:simple_RS}, we have that $\br{\sqn{x_k - x_*}}_k$ converges almost surely. Let $\delta_k \eqdef f(\bar{x}_k) - f_*$. 
As we have done in the last paragraph, we can use Jensen's inequality to lower-bound $f(x_k) - f_*$ in \eqref{eq:sgd_rec_main} (detailed derivations are given in Appendix \ref{sec:app_proofs_as_conv_sgd}), and we obtain:
\begin{equation}
\ec[k]{\sqn{x_{k+1} - x_*}} +  \frac{1}{2}\sum_{j=0}^{k}\eta_j \delta_{k+1}+ \frac{\eta_k}{2}\delta_k \leq \sqn{x_{k} - x_*} + \frac{1}{2}\sum_{j=0}^{k-1}\eta_j\delta_k
+  2\eta_k^2\sigma^2.
\end{equation}

By Lemma \ref{lem:simple_RS}, $\br{\sum_{j=0}^{k-1}\eta_j\delta_k}_k$ converges \emph{almost surely}, and~${\sum_k \eta_k\delta_k < \infty}$, which implies that ${\lim_{k} \eta_k \delta_k= 0.}$  But since $\sum_k \frac{\eta_k}{\sum_{j=0}^{k-1} \eta_j} = \infty$, we have the desired result: $\lim_k \sum_{j=0}^{k-1}\eta_j\delta_k= 0$.

Note that in the first iteration, $w_0 = 2$ and $\bar{x}_1 = x_0$, and we don't use Jensen's inequality.
\end{proof}

With suitable choices of stepsizes, we can extract \emph{almost sure} convergence rates for SGD, as we see in the next corollary. These choices and all the rates we derive are also summarized in Table \ref{tab:summary_rates}. To the best of our knowledge, these are the first rates for the \emph{almost sure} convergence of SGD in the convex setting.
\begin{corollary}[Corollary of Theorem \ref{thm:as_conv_sgd}]\label{cor:stepsizes_rates_sgd}
Let Assumption \ref{asm:smoothconvex} hold. Let $0 < \eta \leq 1/4\cL$ and $\epsilon > 0$. 
\begin{itemize}
\item if $\sigma^2 \neq 0$.  Let $\eta_k = \frac{\eta}{k^{1/2 + \epsilon}}$.\\[-0.7cm]
\begin{align}
f(\bar{x}_k) - f_* = o\br{\frac{1}{k^{1/2 - \epsilon}}}.
\end{align}\\[-1.5cm]
\item If $\sigma^2 =  0$.  Let $\eta_k = \eta $. Then\\[-0.7cm]
\begin{align}
f(\bar{x}_k) - f_* = o\br{\frac{1}{k}}.
\end{align}\\[-1.5cm]
\end{itemize}
\end{corollary}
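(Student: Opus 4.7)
The plan is to apply Theorem~\ref{thm:as_conv_sgd} directly with the specified stepsize choices, after verifying that each choice satisfies the prerequisites of the theorem (namely $\eta_k \le 1/(4\cL)$ and Condition~\ref{con:step_sizes}), and then computing (or lower-bounding) the partial sum $\sum_{t=0}^{k-1}\eta_t$ to rewrite the abstract rate $o\!\bigl(1/\sum_{t=0}^{k-1}\eta_t\bigr)$ as an explicit power of $k$.

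For the case $\sigma^2 \neq 0$, I would take $\eta_k = \eta/k^{1/2+\epsilon}$. Without loss of generality I can assume $\epsilon \in (0,1/2]$ (since once the rate is established for small $\epsilon$, it trivially holds for larger $\epsilon$). Then $\eta_k \le \eta \le 1/(4\cL)$ for all $k$, and the choice falls in the range $\xi = 1/2 + \epsilon \in (1/2, 1]$ covered by Remark~\ref{rem:stepsizes_condition}, so Condition~\ref{con:step_sizes} holds. The remaining work is a standard integral estimate: comparing $\sum_{t=1}^{k-1} t^{-(1/2+\epsilon)}$ to $\int_1^{k} t^{-(1/2+\epsilon)}\,dt = \frac{k^{1/2-\epsilon}-1}{1/2-\epsilon}$, I get $\sum_{t=0}^{k-1}\eta_t \ge c\, k^{1/2-\epsilon}$ for some $c > 0$ and $k$ large enough, so $o\!\bigl(1/\sum_{t=0}^{k-1}\eta_t\bigr) = o(k^{-1/2+\epsilon})$, which is the claim.

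For the case $\sigma^2 = 0$, I would take the constant stepsize $\eta_k = \eta \le 1/(4\cL)$. The sequence is trivially nonincreasing; $\sum_k \eta_k = \infty$; $\sum_k \eta_k^2 \sigma^2 = 0 < \infty$; and $\sum_k \eta_k/\sum_{j\le k}\eta_j = \sum_k 1/(k+1) = \infty$ is just the harmonic series. Thus Condition~\ref{con:step_sizes} holds and Theorem~\ref{thm:as_conv_sgd} applies. Since $\sum_{t=0}^{k-1}\eta_t = k\eta$, the conclusion reads $f(\bar{x}_k) - f_* = o(1/k)$ almost surely.

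There is essentially no obstacle here beyond bookkeeping: the entire content of the corollary is packaged in Theorem~\ref{thm:as_conv_sgd} and Remark~\ref{rem:stepsizes_condition}. The only mild care required is to ensure one treats the decreasing-sequence hypothesis of Condition~\ref{con:step_sizes} as allowing non-increasing (constant) sequences in the overparametrized case — which is consistent with Remark~\ref{rem:stepsizes_condition} covering $\xi \in [0,1]$ when $\sigma^2=0$ — and to restrict $\epsilon$ to $(0,1/2]$ in the noisy case so that $1/2+\epsilon$ does not exceed $1$.
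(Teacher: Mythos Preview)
Your proposal is correct and matches the paper's proof essentially line for line: verify Condition~\ref{con:step_sizes} via Remark~\ref{rem:stepsizes_condition}, then use the asymptotic $\sum_{t=0}^{k-1}\eta_t \sim c\,k^{1-\xi}$ (or $=k\eta$ in the constant case) to convert the conclusion of Theorem~\ref{thm:as_conv_sgd} into an explicit power of $k$. Your additional remarks about restricting to $\epsilon\in(0,1/2]$ and reading ``decreasing'' as ``non-increasing'' are sensible clarifications the paper leaves implicit.
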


Although the \textit{almost sure} convergence of SGD with favourable convergence rates only requires the step sizes to verify Condition \ref{con:step_sizes}, there are other popular methods to set the step sizes, such as Line-Search \citep{nocedal2006sequential} or Polyak Stepsizes \citep{polyak1987introduction}, which do not require knowing the smoothness constant $\cL$. A natural question is whether the result we have derived in Theorem \ref{thm:as_conv_sgd} extends to these methods. We answer this question positively in the next section.

\subsection{Convergence of Adaptive step size methods}
We first present two adaptive step size selection methods and then present their convergence analysis.\\[-0.2cm]

\noindent\begin{minipage}[t]{0.42\textwidth}
\paragraph{Armijo Line-Search Stepsize (ALS).} We say that $\alpha$ is an Armijo line-seach stepsize at $x \in \R^d$ for the function $g$ if, given constants $c, \alpha_{\max} > 0$, $\alpha$ is the \textit{largest step size} in $(0, \alpha_{\max}]$ such that
\begin{equation}\label{eq:armijo}
g(x - \alpha \nabla g(x)) \leq g(x) - c \alpha \norm{\nabla g(x)}_2^2,
\end{equation}
which we denote by $$\boxed{\alpha \sim \mathbf{ALS}_{c, \alpha_{\max}}(g,x)}$$
In practice,  we use backtracking to find this $\alpha$, where we start with a value $\alpha_{\max}$ and decrease it by a factor $\beta \in (0, 1)$ until \eqref{eq:armijo} is verified.
    \end{minipage}%
    \begin{minipage}{0.03\textwidth}\centering
     \,
    \end{minipage}%
    \begin{minipage}[t]{0.55\textwidth}
\paragraph{Polyak Stepsize (PS).} Let $g$ be a function lower bounded by $g^*$.
We say that $\alpha$ is a Polyak step size at $x \in \R^d$ if, given constants $c, \alpha_{\max} > 0$,
\begin{equation}\label{eq:sps}
\alpha = \min \left\{\frac{g(x) - g^*}{c\sqn{\nabla g(x)}}, \, \alpha_{max}\right\},
\end{equation}
which we denote by $$\boxed{\alpha \sim \mathbf{PS}_{c, \alpha_{\max}}(g,x)}$$
The drawback of this method is that we need to know $g^*$. There is a range of applications where we know this value and Polyak Stepsizes have been shown to work well experimentally. See \cite{loizou2020stochastic} for more details.
    \end{minipage}
    
\vskip.3\baselineskip
Instead of using a pre-determined step size in \ref{eq:SGD}, we can choose at each iteration $\eta_k \sim \mathbf{ALS}_{c, \alpha_{\max}}(f_{v_k}, x_k)$ or $\eta_k \sim\mathbf{PS}_{c, \alpha_{\max}}(f_{v_k}, x_k)$. SGD with ALS or PS is known to converge sublinearly to a neighborhood of the minimum and to the minimum exactly if $\sigma^2=0$ \citep{Vaswani19, vaswani2020adaptive, loizou2020stochastic}. However, it is still not known whether these methods converge to the minimum when $\sigma^2 \neq 0$.

Let $\br{\eta_k^{\max}}_k$ and $\br{\gamma_k}$ be two strictly positive decreasing sequences. Consider the following modified SGD methods: at each iteration $k$, sample $v_k \sim \D$ and update
\begin{align}
	x_{k+1} &= x_k - \eta_k \gamma_k \nabla f_{v_k}(x_k), \quad \mbox{where} \quad \eta_k \sim \mathbf{ALS}_{c, \eta_k^{\max}}(f_{v_k},x_k) \tag{SGD-ALS}, \label{eq:SGD-ALS}\\
	x_{k+1} &= x_k - \eta_k \gamma_k \nabla f_{v_k}(x_k), \quad \mbox{where} \quad \eta_k \sim \mathbf{PS}_{c, \eta_k^{\max}}(f_{v_k},x_k) \tag{SGD-PS}.\label{eq:SGD-PS}
\end{align}

 

\begin{assumption}\label{asm:lower-bound}
For all $v \sim \D, \; f_v$ is lower bounded by $f_v^* > -\infty$ \textit{almost surely}, and we define $\bar{\sigma}^2 \eqdef f_* - \ec[v]{f_{v}^*}$.
\end{assumption}

Similar to our analysis of \ref{eq:SGD}, we can derive \textit{almost sure} convergence rates to the minimum for an average of the iterates. Remarkably, the analysis of the two methods \ref{eq:SGD-ALS} and \ref{eq:SGD-PS} can be unified. 

\begin{theorem}\label{thm:adap_asymp}
Let Assumptions \ref{asm:smoothconvex} and \ref{asm:lower-bound} hold. Consider the iterates of \ref{eq:SGD-ALS} and \ref{eq:SGD-PS}. Choose $\br{\eta_k^{\max}}_k$ and $\br{\gamma_k}_k$ such that $\br{\eta_k^{\max}\gamma_k}_k$ is decreasing, $\eta_k^{\max} \rightarrow 0$, $\sum_k \eta_k^{\max}\gamma_k = \infty$, $\sum_k \eta_k^{\max}\gamma_k^2\sigma^2 < \infty$ and $\sum_k \frac{\eta_k^{\max}\gamma_k}{\sum_{j=0}^{k-1}\eta_j^{\max}\gamma_j} = \infty$, $c \geq \frac{1}{2}$ and $\gamma_k \leq c$. Define for all $k \in \N$
\begin{eqnarray}\label{eq:w_xbar_adap}
w_k = \frac{2\eta_k^{\max}\gamma_k}{\sum_{j=0}^{k}\eta_j^{\max}\gamma_j} \quad \mbox{and} \quad \left\{
            \begin{array}{l}
            		\bar{x}_0 = x_0 \\
            		\bar{x}_{k+1} = w_k x_k + (1 - w_k) \bar{x}_k.
            \end{array} \right. 
\end{eqnarray}
Then, we have \textit{almost surely} that $f(\bar{x}_k) - f_* = o\br{\frac{1}{\sum_{t=0}^{k-1} \eta_t^{\max}\gamma_t}}.$
\end{theorem}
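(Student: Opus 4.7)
I follow the template of the proof of Theorem~\ref{thm:as_conv_sgd}: derive a one-step contraction for $\sqn{x_k - x_*}$ in which the random, data-dependent step size $\eta_k\gamma_k$ is effectively replaced by the deterministic quantity $\eta_k^{\max}\gamma_k$, then apply the Jensen-based averaging argument on the $\bar x_k$ from~\eqref{eq:w_xbar_adap} followed by Lemma~\ref{lem:simple_RS}. The averaging machinery and the rate-extraction step in Theorem~\ref{thm:as_conv_sgd} only used the Robbins--Siegmund structure of such a one-step recursion and a divergence condition of the form $\sum_k a_k/\sum_{j<k} a_j = \infty$, so they transfer verbatim with the effective weights $a_k = \eta_k^{\max}\gamma_k$. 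The novel content of the proof is thus confined to deriving the one-step recursion, and to doing so uniformly for~\ref{eq:SGD-ALS} and~\ref{eq:SGD-PS}.

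\textbf{Key technical step.} Starting from
\begin{equation*}
\sqn{x_{k+1} - x_*} = \sqn{x_k - x_*} - 2\eta_k\gamma_k\dotprod{\nabla f_{v_k}(x_k), x_k - x_*} + \eta_k^2\gamma_k^2\sqn{\nabla f_{v_k}(x_k)},
\end{equation*}
I exploit the defining property of each adaptive rule. For ALS, the Armijo condition~\eqref{eq:armijo} applied to $f_{v_k}$ at $x_k$ gives $c\eta_k\sqn{\nabla f_{v_k}(x_k)} \leq f_{v_k}(x_k) - f_{v_k}(x_k - \eta_k\nabla f_{v_k}(x_k)) \leq f_{v_k}(x_k) - f_{v_k}^*$, where the last inequality uses Assumption~\ref{asm:lower-bound}. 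For PS, the rule~\eqref{eq:sps} delivers $c\eta_k\sqn{\nabla f_{v_k}(x_k)} \leq f_{v_k}(x_k) - f_{v_k}^*$ directly. In both cases, combined with $\eta_k \leq \eta_k^{\max}$, one obtains the deterministic-step bound
\begin{equation*}
\eta_k^2\gamma_k^2\sqn{\nabla f_{v_k}(x_k)} \leq c^{-1}\eta_k^{\max}\gamma_k^2\br{f_{v_k}(x_k) - f_{v_k}^*}.
\end{equation*}
Decomposing $f_{v_k}(x_k) - f_{v_k}^* = (f_{v_k}(x_k) - f_{v_k}(x_*)) + (f_{v_k}(x_*) - f_{v_k}^*)$, lower-bounding the inner product via convexity of $f_{v_k}$, and taking conditional expectation (so that $\ec[k]{f_{v_k}(x_k) - f_{v_k}(x_*)} = f(x_k) - f_*$ and $\ec[k]{f_{v_k}(x_*) - f_{v_k}^*} = \bar\sigma^2$), the conditions $c \geq 1/2$ and $\gamma_k \leq c$ are precisely tuned to keep the coefficient of $f(x_k) - f_*$ at most $-\eta_k^{\max}\gamma_k$, producing
\begin{equation*}
\ec[k]{\sqn{x_{k+1} - x_*}} + \eta_k^{\max}\gamma_k\br{f(x_k) - f_*} \leq \sqn{x_k - x_*} + C\,\eta_k^{\max}\gamma_k^2\sigma^2
\end{equation*}
for an absolute constant $C>0$ (with $\bar\sigma^2$ and $\sigma^2$ being comparable up to constants under Assumptions~\ref{asm:smoothconvex}--\ref{asm:lower-bound}).

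\textbf{Averaging and conclusion.} Let $\delta_k \eqdef f(\bar x_k) - f_*$. The recursion $\bar x_{k+1} = w_k x_k + (1-w_k)\bar x_k$ with the weights of~\eqref{eq:w_xbar_adap} and Jensen's inequality give $f(x_k) - f_* \geq w_k^{-1}\delta_{k+1} - (w_k^{-1}-1)\delta_k$; substituting into the recursion above and rearranging, exactly as in the proof of Theorem~\ref{thm:as_conv_sgd}, yields a Robbins--Siegmund-ready inequality for the Lyapunov sequence $V_k = \sqn{x_k - x_*} + \tfrac{1}{2}\bigl(\sum_{j<k}\eta_j^{\max}\gamma_j\bigr)\delta_k$, with summable noise $C\eta_k^{\max}\gamma_k^2\sigma^2$ by assumption. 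Lemma~\ref{lem:simple_RS} then delivers almost sure convergence of $\bigl(\sum_{j<k}\eta_j^{\max}\gamma_j\bigr)\delta_k$ and the almost sure summability $\sum_k \eta_k^{\max}\gamma_k\delta_k < \infty$. The divergence condition $\sum_k \eta_k^{\max}\gamma_k/\sum_{j<k}\eta_j^{\max}\gamma_j = \infty$, together with $(\eta_k^{\max}\gamma_k)_k$ decreasing, forces the almost sure limit of this convergent sequence to be zero, giving the claimed $\delta_k = o\bigl(1/\sum_{j<k}\eta_j^{\max}\gamma_j\bigr)$.

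\textbf{Main obstacle.} The entire difficulty is in the key technical step: unifying ALS and PS into a single recursion in which the random step size $\eta_k$ is replaced by the deterministic bound $\eta_k^{\max}$ without losing a factor that would destroy the descent direction. Sign-tracking of the coefficient of $f(x_k) - f_*$ after this replacement is exactly what dictates the appearance of $c \geq 1/2$ and $\gamma_k \leq c$ in the hypotheses. Once the recursion is in place, every subsequent step is a direct adaptation of the corresponding argument in Theorem~\ref{thm:as_conv_sgd}.
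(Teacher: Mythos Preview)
Your high-level plan is right, and the averaging/Robbins--Siegmund conclusion does transfer verbatim from Theorem~\ref{thm:as_conv_sgd} once you have the one-step recursion. The gap is in that recursion.

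In your ``key technical step'' you bound the quadratic term and then write ``lower-bounding the inner product via convexity of $f_{v_k}$, and taking conditional expectation (so that $\ec[k]{f_{v_k}(x_k) - f_{v_k}(x_*)} = f(x_k) - f_*$\ldots)''. But after the convexity bound the descent term is $-2\eta_k\gamma_k\,(f_{v_k}(x_k)-f_{v_k}(x_*))$, and here $\eta_k$ is \emph{random}: it depends on $v_k$ through the Armijo/Polyak rule. Consequently $\ec[k]{\eta_k\,(f_{v_k}(x_k)-f_{v_k}(x_*))}$ is not a deterministic multiple of $f(x_k)-f_*$; you cannot pull $\eta_k$ outside the expectation, and since $f_{v_k}(x_k)-f_{v_k}(x_*)$ need not be nonnegative you cannot simply bound $\eta_k$ from one side either. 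So the claim that ``$c\geq 1/2$ and $\gamma_k\leq c$ are precisely tuned to keep the coefficient of $f(x_k)-f_*$ at most $-\eta_k^{\max}\gamma_k$'' does not follow from what you wrote: no coefficient of $f(x_k)-f_*$ has yet been produced.

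The paper fixes this by first rewriting the descent term against the \emph{nonnegative} quantity $f_{v_k}(x_k)-f_{v_k}^*$ (paying an extra $2\eta_k\gamma_k(f_{v_k}(x_*)-f_{v_k}^*)$), invoking the lower bound $\eta_k\geq \eta_k^{\min}\eqdef\min\{2(1-c)/\cL,\eta_k^{\max}\}$ from Lemma~\ref{lem:lower_bound} to replace the random $\eta_k$ by a deterministic one, and then multiplying the whole inequality by the deterministic, decreasing ratio $a_k=\eta_k^{\max}/\eta_k^{\min}$. This yields the recursion of Theorem~\ref{thm:rec_adap_step} with Lyapunov term $a_k\sqn{x_k-x_*}$ (not $\sqn{x_k-x_*}$) and an additional noise term proportional to $(a_k-1)_+\bar\sigma^2$. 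The hypothesis $\eta_k^{\max}\to 0$---which your write-up never uses---is exactly what makes $a_k=1$ eventually, so this extra noise is summable. Your proposed recursion with bare $\sqn{x_k-x_*}$ on both sides is only attainable once $\eta_k^{\max}\leq 2(1-c)/\cL$; that is an acceptable alternative route (start Robbins--Siegmund at that $k_0$), but it still requires the lower bound on $\eta_k$ and the passage through $f_{v_k}(x_k)-f_{v_k}^*$, neither of which appears in your argument.
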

We also present upper bounds on the suboptimality for \ref{eq:SGD-ALS} and \ref{eq:SGD-PS} in expectation, from which we can derive convergence rates.
\begin{theorem}\label{thm:adap_nonasymp}
Let Assumptions \ref{asm:smoothconvex} and \ref{asm:lower-bound} hold. Let $\br{\eta_k^{\max}}$ and $\br{\gamma_k}$ be strictly positive, decreasing sequences with $\gamma_k \leq c,$ for all $k \in \N$ and $c \geq \frac{1}{2}$. Then the iterates of \ref{eq:SGD-ALS} and \ref{eq:SGD-PS} satisfy
\begin{align}\label{eq:adap_non_asymp}
\ec{f(\bar{x}_k) - f_*} \leq \frac{2 c a_0\sqn{x_0 - x_*} + 4c\sum_{t=0}^{k-1}\gamma_t\eta_t^{\max}\br{\frac{\eta_t^{\max}\cL}{2(1-c)} - 1}_+\bar{\sigma}^2 + 2\sum_{t=0}^{k-1}\gamma_t^2\eta_t^{\max}\bar{\sigma}^2}{\sum_{t=0}^{k-1}\gamma_t\eta_t^{\max}},
\end{align}\\[-0.6cm]
where $\bar{x}_k = \sum_{t=0}^{k-1}\frac{\eta_t^{\max}\gamma_t}{\sum_{j=0}^{k-1}\eta_j^{\max} \gamma_j} x_t$ and $a_0 = \max\left\{\frac{\eta_0^{\max}\cL}{2(1-c)},\, 1\right\}$.
\end{theorem}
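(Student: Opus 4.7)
The plan is to derive a per-iteration recursive bound that treats \ref{eq:SGD-ALS} and \ref{eq:SGD-PS} uniformly, rescale it by a time-varying Lyapunov weight, and close with Jensen's inequality. The unification rests on two common facts about the random stepsize $\eta_k$. First, both rules satisfy the descent-like inequality
\[
c\eta_k \sqn{\nabla f_{v_k}(x_k)} \;\leq\; f_{v_k}(x_k) - f_{v_k}^{*},
\]
which is the Armijo condition combined with $f_{v_k}\geq f_{v_k}^{*}$ for ALS, and the defining minimum for PS. Second, both satisfy a deterministic lower bound $\eta_k \geq \eta_k^{\max}/a_k$ with $a_k = \max\br{1,\, \eta_k^{\max}\cL/(2(1-c))}$: for ALS the $L_{v_k}$-smoothness inequality shows that the Armijo condition is automatically met at every $\alpha \leq 2(1-c)/L_{v_k}$; for PS the bound $\sqn{\nabla f_{v_k}(x_k)} \leq 2L_{v_k}(f_{v_k}(x_k)-f_{v_k}^{*})$ gives $\eta_k \geq \min\{\eta_k^{\max},1/(2c\cL)\}$, which dominates $\min\{\eta_k^{\max},2(1-c)/\cL\}$ because $(2c-1)^{2}\geq 0$.

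I would then expand $\sqn{x_{k+1}-x_*}$, apply convexity of $f_{v_k}$ to the inner-product term, replace $\eta_k^{2}\gamma_k^{2}\sqn{\nabla f_{v_k}(x_k)}$ by $\frac{\eta_k\gamma_k^{2}}{c}(f_{v_k}(x_k)-f_{v_k}^{*})$ via the descent property, and decompose $f_{v_k}(x_k)-f_{v_k}(x_*) = (f_{v_k}(x_k)-f_{v_k}^{*}) - (f_{v_k}(x_*)-f_{v_k}^{*})$ to reach
\[
\sqn{x_{k+1}-x_*} \leq \sqn{x_k-x_*} \;-\; \eta_k\gamma_k\br{2-\tfrac{\gamma_k}{c}}\bigl(f_{v_k}(x_k)-f_{v_k}^{*}\bigr) \;+\; 2\eta_k\gamma_k\bigl(f_{v_k}(x_*)-f_{v_k}^{*}\bigr).
\]
Since $\gamma_k \leq c$, the bracket $2-\gamma_k/c \geq 1$ is positive; the middle term multiplies a nonnegative quantity, so I may replace $\eta_k$ by the deterministic lower bound $\eta_k^{\max}/a_k$, while in the nonnegative noise term I upper bound $\eta_k \leq \eta_k^{\max}$. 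Taking conditional expectations and using $\ec{f_{v_k}(x_k)-f_{v_k}^{*}\mid\F_k} = f(x_k)-f_* + \bar{\sigma}^{2}$ and $\ec{f_{v_k}(x_*)-f_{v_k}^{*}\mid\F_k} = \bar{\sigma}^{2}$, the noise contribution organically splits as $\frac{2(a_k-1)}{a_k}\gamma_k\eta_k^{\max}\bar{\sigma}^{2} + \frac{\gamma_k^{2}\eta_k^{\max}}{ca_k}\bar{\sigma}^{2}$.

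The step that makes the constants come out right is to introduce the Lyapunov quantity $V_k := a_k\sqn{x_k-x_*}$ and multiply the per-iteration bound through by $a_k$. Because $\br{\eta_k^{\max}}$ is decreasing by hypothesis, $\br{a_k}$ is non-increasing, so $V_{k+1} \leq a_k\sqn{x_{k+1}-x_*}$; this cleanly cancels every $1/a_k$ factor and yields
\[
\ec{V_{k+1}\mid\F_k} + \gamma_k\eta_k^{\max}(f(x_k)-f_*) \;\leq\; V_k + 2(a_k-1)\gamma_k\eta_k^{\max}\bar{\sigma}^{2} + \tfrac{\gamma_k^{2}\eta_k^{\max}}{c}\bar{\sigma}^{2}.
\]
Taking the full expectation, summing $t=0,\dots,k-1$, dropping $\ec{V_k}\geq 0$, using $V_0 = a_0\sqn{x_0-x_*}$, and applying Jensen's inequality with weights $w_t = \eta_t^{\max}\gamma_t/\sum_j \eta_j^{\max}\gamma_j$ gives the claim up to absorbing $1 \leq 2c$, $2 \leq 4c$, $\tfrac{1}{c} \leq 2$ using $c \geq 1/2$, and identifying $a_t-1 = \br{\eta_t^{\max}\cL/(2(1-c)) - 1}_{+}$. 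The principal obstacle is handling the randomness of $\eta_k$: it is $v_k$-measurable and therefore correlated with $f_{v_k}(x_k) - f_{v_k}^{*}$, so the lower bound $\eta_k \geq \eta_k^{\max}/a_k$ must be invoked only against nonnegative quantities; the Lyapunov rescaling rather than the naive step of replacing $1/a_k$ by $1/a_0$ inside the descent coefficient is what keeps the factor $2(a_k-1)$ (rather than a coarser $2a_0(a_k-1)$) in the noise term.
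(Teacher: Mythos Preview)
Your proposal is correct and follows essentially the same approach as the paper: expand $\sqn{x_{k+1}-x_*}$, invoke convexity and the common descent inequality $c\eta_k\sqn{\nabla f_{v_k}(x_k)}\leq f_{v_k}(x_k)-f_{v_k}^*$, replace the random $\eta_k$ by its deterministic lower bound $\eta_k^{\min}=\eta_k^{\max}/a_k$ only against nonnegative factors, multiply through by $a_k$ and use the monotonicity $a_{k+1}\leq a_k$ to telescope, then sum and apply Jensen. Your intermediate constants are in fact slightly sharper than the paper's (you use $2-\gamma_k/c\geq 1$ directly from $\gamma_k\leq c$, whereas the paper's auxiliary Theorem~\ref{thm:rec_adap_step} takes the coarser route $1-\gamma_k/(2c)\geq 1/4$ and picks up an extra factor of $2$), and your final absorption via $1\leq 2c$, $2\leq 4c$, $1/c\leq 2$ recovers the stated bound.
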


We now give precise convergence rates derived from the two previous theorems, in the overparametrized as well as the non-overparametrized cases.
\begin{corollary}[Corollary of Theorems \ref{thm:adap_asymp} and \ref{thm:adap_nonasymp}]\label{cor:stepsizes_rates_sgd_adap}
Let $\epsilon, \eta, \gamma > 0$, with $\gamma \leq c$. If  $\eta_k^{\max} = \eta k^{-\frac{4\epsilon}{3}}$ and $\gamma_k = \gamma k^{-\frac{1}{2} + \frac{\epsilon}{3}}$
\begin{align}\label{eq:adap_asymp}
f(\bar{x}_k) - f_* = o\br{\frac{1}{k^{\frac{1}{2} - \epsilon}}} \; a.s.
\quad \mbox{and} \quad\ec{f(\bar{x}_k) - f_*} = O\br{\frac{1}{k^{\frac{1}{2} - \epsilon}}}.
\end{align}
If $\bar{\sigma}^2 = 0$. Then, setting $\eta_k^{\max} = \eta > 0$, $c=\frac{2}{3}$ and $\gamma_k = 1$, then for all $x_* \in \X_*$,
\begin{align}
f(\bar{x}_k) - f_* = o\br{\frac{1}{k}} \; a.s. \quad \mbox{and} \quad \ec{f(\bar{x}_k) - f_*} \leq  \frac{2\max\left\{\frac{3\eta L_{\max}}{2}, 1\right\}\sqn{x_0 - x_*}}{\eta k}.
\end{align}
\end{corollary}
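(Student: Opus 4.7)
The plan is to treat the corollary as a verification that the given stepsize schedules satisfy the hypotheses of Theorems \ref{thm:adap_asymp} and \ref{thm:adap_nonasymp}, and then to evaluate the resulting asymptotic expressions by standard integral comparison. Throughout, I will assume $\epsilon \in (0, 1/2)$, which is no loss since otherwise the claimed rate $k^{-1/2+\epsilon}$ is $\geq 1$ and trivially holds.

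For the general case with $\eta_k^{\max} = \eta k^{-4\epsilon/3}$ and $\gamma_k = \gamma k^{-1/2 + \epsilon/3}$, I would first compute $\eta_k^{\max}\gamma_k = \eta\gamma\, k^{-1/2 - \epsilon}$ (decreasing, and $\eta_k^{\max}\to 0$), $\eta_k^{\max}\gamma_k^2 = \eta\gamma^2\, k^{-1 - 2\epsilon/3}$ (summable, so $\sum_k \eta_k^{\max}\gamma_k^2\sigma^2 < \infty$), and by integral comparison $\sum_{t=0}^{k-1} \eta_t^{\max}\gamma_t \asymp k^{1/2 - \epsilon}$ (diverges since $\epsilon < 1/2$). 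The remaining condition $\sum_k \eta_k^{\max}\gamma_k / \sum_{j<k}\eta_j^{\max}\gamma_j = \infty$ then reduces to $\sum_k k^{-1/2-\epsilon}/k^{1/2-\epsilon} \asymp \sum_k k^{-1}=\infty$. Invoking Theorem \ref{thm:adap_asymp} gives the \emph{a.s.} rate $o(1/\sum_{t<k}\eta_t^{\max}\gamma_t) = o(k^{-1/2+\epsilon})$. For the expectation bound, I would apply \eqref{eq:adap_non_asymp}: since $\eta_t^{\max} \to 0$, the factor $(\eta_t^{\max}\cL/(2(1-c)) - 1)_+$ is zero for all sufficiently large $t$, making the first $\bar\sigma^2$-sum finite; the second $\bar\sigma^2$-sum equals $\sum_t \gamma_t^2\eta_t^{\max} = \sum_t k^{-1 - 2\epsilon/3} < \infty$. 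Thus the numerator is $O(1)$ and the denominator is $\Theta(k^{1/2-\epsilon})$, yielding $O(k^{-1/2+\epsilon})$ in expectation.

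For the overparametrized case, $\bar\sigma^2 = 0$ forces $\sigma^2 = 0$ as well (the overparametrization assumption makes $x_*$ a common minimizer of every $f_v$), so both noise-type conditions in Theorem \ref{thm:adap_asymp} are vacuous and the requirement $\eta_k^{\max} \to 0$ is not needed — this is visible from \eqref{eq:adap_non_asymp}, where constant $\eta_k^{\max}$ only affects $a_0$ once the $\bar\sigma^2$-terms are dropped. With $\eta_k^{\max} = \eta$, $\gamma_k = 1$, $c = 2/3$, one has $\sum_{t<k}\gamma_t\eta_t^{\max} = \eta k$, and $\sum_k \eta_k^{\max}\gamma_k/\sum_{j<k}\eta_j^{\max}\gamma_j \asymp \sum_k 1/k = \infty$, so Theorem \ref{thm:adap_asymp} yields $f(\bar x_k)-f_* = o(1/k)$ a.s. For the expectation bound, \eqref{eq:adap_non_asymp} collapses to the first term, $2c\, a_0\,\norm{x_0-x_*}^2/(\eta k)$, with $a_0 = \max\{\eta\cL/(2(1-c)), 1\} = \max\{3\eta\cL/2, 1\}$; noting $\cL \leq L_{\max}$ and $2c = 4/3 \leq 2$ gives the stated constant.

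Most of the argument is bookkeeping. The only step that requires a little care is verifying $\sum_{t<k}\eta_t^{\max}\gamma_t \asymp k^{1/2-\epsilon}$ (and analogous integral estimates), which I would handle by comparing the partial sums of $t^{-1/2-\epsilon}$ to $\int_1^k t^{-1/2-\epsilon}\,dt = (k^{1/2-\epsilon}-1)/(1/2-\epsilon)$. The mildly subtle point is justifying, in the overparametrized case, that the condition $\eta_k^{\max}\to 0$ from Theorem \ref{thm:adap_asymp} can be dropped when $\sigma^2 = 0$; this should be checked by inspecting the proof of that theorem and confirming that $\eta_k^{\max}\to 0$ is only used to control the noise term.
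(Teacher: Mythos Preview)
Your proposal is correct and follows essentially the same route as the paper's proof: verify the stepsize conditions of Theorems~\ref{thm:adap_asymp} and~\ref{thm:adap_nonasymp} via the elementary asymptotics $\eta_k^{\max}\gamma_k \asymp k^{-1/2-\epsilon}$, $\sum_{t<k}\eta_t^{\max}\gamma_t \asymp k^{1/2-\epsilon}$, observe that the $(\cdot)_+$ term in~\eqref{eq:adap_non_asymp} vanishes for all large $t$, and read off the rates. You are in fact slightly more careful than the paper in flagging that the hypothesis $\eta_k^{\max}\to 0$ of Theorem~\ref{thm:adap_asymp} is not literally satisfied when $\eta_k^{\max}\equiv\eta$ in the overparametrized case and must be justified by inspecting the underlying recursion (where that condition is only used to control the $\bar\sigma^2$-term).
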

Notice from \eqref{eq:adap_non_asymp} and \eqref{eq:adap_asymp} that our analysis highlights a tradeoff between the asymptotic and the nonasymptotic convergence in expectation of \ref{eq:SGD-ALS} and \ref{eq:SGD-PS}. Indeed, \eqref{eq:adap_asymp} predicts that the slower the convergence of $\br{\eta_k^{\max}}_k$ towards $0$ (as $\epsilon \rightarrow 0$), the better is the resulting asymptotic convergence rate. However, according to \eqref{eq:adap_non_asymp}, if $\br{\eta_k^{\max}}_k$ vanishes slowly, the second term on the right hand side of \eqref{eq:adap_asymp} vanishes slowly as well, which makes the bound in \eqref{eq:adap_non_asymp} looser.

Notice also that to be able to derive convergence rates in the non-overparametrized case from the previous theorem, we not only decrease the maximum step sizes, but also scale the adaptive step size $\eta_k$ by multiplying it by a decreasing sequence $\gamma_k$.

\section{\textit{Almost sure }convergence rates for Stochastic Heavy Ball}\label{sec:shb_asymptotic}
The rates we derived for \ref{eq:SGD}, \ref{eq:SGD-ALS} and \ref{eq:SGD-PS} in the previous section all hold at some weighted average of the iterates. Yet, in practice, it is the last iterate of SGD which is used. In contrast, we show that these rates hold for \emph{the last iterate} of SHB, which is due to the online averaging inherent to SHB that we highlight in Proposition \ref{prop:ima}. We present the first \textit{almost sure} convergence rates for SHB, and also show that the deterministic HB converges at a $o(1/k)$ rate, which is asymptotically faster than the previously established $O(1/k)$ \citep{Ghadimi2014}.

We now present \textit{almost sure} convergence rates for SHB. The proof of this result is inspired by ideas from \cite{Chambolle15}, who prove the convergence of the iterates of FISTA \citep{beck2009fast} and \cite{Attouch16}, who prove the $o(1/k^2)$ convergence of FISTA.
\begin{theorem}\label{thm:as_conv_shb}
Let  $x_{-1} = x_0$ and consider the iterates of~\ref{eq:SHB_IMA}. Let Assumption~\ref{asm:smoothconvex} hold.  
Let $\eta_k$ be a sequence of stepsizes which verifies Condition~\ref{con:step_sizes} and $\forall k \in \N, 0 < \eta_k \leq 1/8\cL$. If
\begin{align}\label{eq:lambdakasymptotic}
\lambda_0 = 0 \quad \mbox{and}\quad  \lambda_k = \frac{\sum_{t=0}^{k-1} \eta_t}{4\eta_k} \quad  \mbox{for all }k \in \N^*,
\end{align}  
then we have \textit{almost surely} that $x_k \underset{k \rightarrow +\infty}{\rightarrow} x_*\;$  for some $x_* \in \X_*$, and ${f(x_k) - f_* = o\br{\frac{1}{\sum_{t=0}^{k-1} \eta_t}}}$.
\end{theorem}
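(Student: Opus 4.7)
My approach is to construct a Nesterov-style Lyapunov function that fits the IMA viewpoint of~\ref{eq:SHB_IMA}, apply Lemma~\ref{lem:simple_RS} to extract an almost-sure Fej\'er-like convergence together with a summability property, and then use the extra slack built into Condition~\ref{con:step_sizes} to upgrade $O$ to $o$, combining the whole with an Opial-type argument to deduce iterate convergence.

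\textbf{One-step inequality.} From the $z$-update $z_{k+1} = z_k - \eta_k \nabla f_{v_k}(x_k)$ and the identity $z_k - x_* = (x_k - x_*) + \lambda_k(x_k - x_{k-1})$ implied by the IMA averaging, I expand $\sqn{z_{k+1} - x_*}$ and take the conditional expectation $\ec[k]{\cdot}$. The cross term splits into $\ev{\nabla f(x_k),\, x_k - x_*}$ and $\lambda_k \ev{\nabla f(x_k),\, x_k - x_{k-1}}$, each of which I lower-bound by convexity to get $\ec[k]{\ev{\nabla f_{v_k}(x_k),\, z_k - x_*}} \ge (\lambda_k+1)\delta_k - \lambda_k \delta_{k-1}$, with $\delta_k \eqdef f(x_k) - f_*$. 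Invoking Lemma~\ref{lem:smoothconvex} on $\ec[k]{\sqn{\nabla f_{v_k}(x_k)}}$ and using $\eta_k \le 1/(8\cL)$ to absorb $4\cL\eta_k^2 \delta_k \le \eta_k\delta_k/2$ yields
\[
\ec[k]{\sqn{z_{k+1} - x_*}} \;\le\; \sqn{z_k - x_*} - \eta_k \br{2\lambda_k + \tfrac{3}{2}}\delta_k + 2\eta_k\lambda_k \delta_{k-1} + 2\eta_k^2 \sigma^2.
\]

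\textbf{Telescoping and Robbins-Siegmund.} The choice~\eqref{eq:lambdakasymptotic} produces the identity $2\eta_{k+1}\lambda_{k+1} - 2\eta_k\lambda_k = \eta_k/2$, which matches exactly the slack in the coefficient of $\delta_k$ above and allows me to turn the $\delta_{k-1},\delta_k$ pair into a telescoping difference. Setting $V_k \eqdef \sqn{z_k - x_*} + 2\eta_k\lambda_k\delta_{k-1}$, the inequality becomes $\ec[k]{V_{k+1}} + \eta_k \delta_k \le V_k + 2\eta_k^2\sigma^2$. Because Condition~\ref{con:step_sizes} gives $\sum_k \eta_k^2 \sigma^2 < \infty$, Lemma~\ref{lem:simple_RS} delivers $V_k \to V_\infty$ \textit{almost surely} and $\sum_k \eta_k\delta_k < \infty$ \textit{almost surely}. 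Since $\eta_k\lambda_k = \tfrac{1}{4}\sum_{t=0}^{k-1}\eta_t$, almost-sure boundedness of $V_k$ already yields the $O(1/\sum_t \eta_t)$ rate.

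\textbf{From $O$ to $o$ and convergence of $\sqn{z_k - x_*}$.} To upgrade the rate I show $\limsup_k\, 2\eta_k\lambda_k\delta_{k-1} = 0$ \textit{almost surely}. Almost-sure boundedness of $V_k$ forces $\sqn{z_k - x_*}$ and hence $\sqn{x_k - x_*}$ bounded, and the IMA identity $\lambda_k(x_k - x_{k-1}) = z_k - x_k$ combined with $\lambda_k \to \infty$ forces $\norm{x_k - x_{k-1}} = O(1/\lambda_k) \to 0$ \textit{almost surely}. Smoothness of $f$ together with $\norm{\nabla f(x_k)} = O(\sqrt{\delta_k})$ then imply the slow-variation estimate $|\delta_k - \delta_{k-1}| \to 0$ \textit{almost surely}. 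Suppose toward contradiction that along a subsequence $(k_i)$ we have $\delta_{k_i-1} \gtrsim 1/\sum_{t<k_i}\eta_t$; slow variation propagates the lower bound to a block of consecutive indices around each $k_i$, and summing $\eta_k\delta_k$ over these blocks while invoking $\sum_k \eta_k/\sum_{t<k}\eta_t = \infty$ from Condition~\ref{con:step_sizes} contradicts $\sum_k \eta_k\delta_k < \infty$. Hence $\delta_{k-1} = o(1/\sum_{t=0}^{k-1}\eta_t)$ \textit{almost surely}, and since $2\eta_k\lambda_k\delta_{k-1} \to 0$ jointly with $V_k \to V_\infty$ gives $\sqn{z_k - x_*} \to V_\infty$, the $z_k$ sequence is distance-convergent to every fixed minimizer.

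\textbf{Iterate convergence.} Applying the preceding argument to arbitrary $x_* \in \X_*$ and using a countable dense subset to handle joint measurability, $\sqn{z_k - x_*}$ converges \textit{almost surely} for every $x_* \in \X_*$. The IMA averaging $x_k = \tfrac{\lambda_k}{\lambda_k+1} x_{k-1} + \tfrac{1}{\lambda_k+1} z_k$ together with the divergence $\sum_k \frac{1}{\lambda_k+1} = \sum_k \frac{4\eta_k}{\sum_{t<k}\eta_t}\cdot\frac{1}{1+1/\lambda_k} = \infty$ (which is exactly the divergence furnished by Condition~\ref{con:step_sizes}) propagates convergence of $\sqn{z_k - x_*}$ to convergence of $\sqn{x_k - x_*}$. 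Since $\delta_k \to 0$ \textit{almost surely} and $f$ is continuous, every accumulation point of $(x_k)$ minimizes $f$, and Opial's lemma then delivers a random $x_* \in \X_*$ with $x_k \to x_*$ \textit{almost surely}.

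\textbf{Main obstacle.} The hardest step is the boost from $O$ to $o$: Lemma~\ref{lem:simple_RS} only produces convergence of the Lyapunov $V_k$ as a whole and not of its two non-negative summands, so pinning down $\sqn{z_k - x_*} \to V_\infty$ requires the combined use of the IMA contraction (to get slow variation of $\delta_k$) and of Condition~\ref{con:step_sizes} (to rule out oscillations of $\delta_k$ that would be compatible with summability of $\sum_k \eta_k\delta_k$ yet incompatible with $o$-rate). This blending of ideas from the analyses of deterministic accelerated methods (Attouch--Peypouquet, Chambolle--Dossal) with the stochastic Robbins--Siegmund framework is the technical heart of the proof.
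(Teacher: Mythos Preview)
Your one-step inequality and the first Robbins--Siegmund application are correct and match the paper exactly: with $V_k = \sqn{z_k - x_*} + 2\eta_k\lambda_k\delta_{k-1}$ you obtain $V_k \to V_\infty$ and $\sum_k \eta_k\delta_k < \infty$ \emph{almost surely}, hence the $O\bigl(1/\sum_{t<k}\eta_t\bigr)$ rate.

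The genuine gap is in your $O\to o$ upgrade. The ``slow variation propagates to a block'' argument does not close: the subsequential lower bound $\delta_{k_i-1}\ge c/S_{k_i}$ (with $S_k=\sum_{t<k}\eta_t$) itself vanishes, so the block on which it persists is tiny. Concretely, your quantitative estimate yields $\lvert\sqrt{\delta_k}-\sqrt{\delta_{k-1}}\rvert \le C/\lambda_k$; in the constant-step case this makes the block around $k_i$ have length $O(\sqrt{k_i})$ and contribute $O(1/\sqrt{k_i})$ to $\sum_k\eta_k\delta_k$. Since you have no control on the density of the subsequence $(k_i)$ (it could be $k_i=2^{2^i}$), the total contribution can be finite. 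In short, $\sum_k\eta_k\delta_k<\infty$ together with $\sum_k\eta_k/S_k=\infty$ is entirely compatible with $\limsup_k S_k\delta_k>0$; the missing ingredient is that $S_k\delta_{k-1}$ (or equivalently $\sqn{z_k-x_*}$) \emph{converges}, which you are trying to derive \emph{from} the $o$-rate, a circularity.

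The paper breaks this circularity by introducing a second Lyapunov. Expanding $(1+\lambda_{k+1})^2\ec[k]{\sqn{x_{k+1}-x_k}}$ and combining with the $z$-recursion yields, after another application of Lemma~\ref{lem:simple_RS} and using $\sum_k\eta_k\delta_k<\infty$ as the summable perturbation, that $W_k \eqdef \lambda_k^2\sqn{x_k-x_{k-1}} + 2\eta_k\lambda_k\delta_{k-1}$ \emph{converges} almost surely. Now the elementary fact ``$W_k\to W_\infty\ge 0$, $\sum_k W_k/\lambda_k<\infty$, $\sum_k 1/\lambda_k=\infty$ $\Rightarrow$ $W_\infty=0$'' (which is exactly where Condition~\ref{con:step_sizes} enters) gives simultaneously $2\eta_k\lambda_k\delta_{k-1}\to 0$ (the $o$-rate) and $\lambda_k^2\sqn{x_k-x_{k-1}}\to 0$.

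This second consequence is also what your iterate-convergence step is missing. You only established $\norm{x_k-x_{k-1}}=O(1/\lambda_k)$, i.e.\ $\norm{z_k-x_k}$ bounded, whereas transferring convergence of $\norm{z_k-x_*}$ to $\norm{x_k-x_*}$ via $\lvert\norm{x_k-x_*}-\norm{z_k-x_*}\rvert\le\lambda_k\norm{x_k-x_{k-1}}$ requires $\lambda_k\norm{x_k-x_{k-1}}\to 0$. The averaging argument you sketch gives only $\limsup_k\sqn{x_k-x_*}\le\lim_k\sqn{z_k-x_*}$ (by convexity of the squared norm), not the matching $\liminf$, and Opial needs the full limit. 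The paper instead shows convergence of $\sqn{z_k-x_*}$ through a separate decomposition $\sqn{z_k-x_*}=(\lambda_k^2+\lambda_k)\sqn{x_k-x_{k-1}}+\delta_k'$ and a further Robbins--Siegmund for $\delta_k'$, then deduces convergence of $\norm{x_k-x_*}$ from $\lambda_k\norm{x_k-x_{k-1}}\to 0$.
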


Note that when specialized to full gradients sampling, \textit{i.e.} when we use the deterministic HB method, our results hold without the need for \textit{almost sure} statements.

To the best of our knowledge, Theorem \ref{thm:as_conv_shb} is the first result showing that the iterates of SHB converge to a minimizer assuming only smoothness and convexity. Note that this result is not directly comparable to \cite{Gadat18}, who study the more general nonconvex setting but  use assumptions beyond smoothness.

In the general stochastic setting, Theorem \ref{thm:as_conv_shb} shows that SHB enjoys the same \textit{almost sure} convergence rates as SGD with averaging (See Table \ref{tab:summary_rates}). However, an added benefit of SHB is that these rates hold for the last iterate, which conforms to what is done in practice.

\begin{corollary}\label{cor:faster1k}
Assume $\sigma^2 = 0$ and let $\eta_k = \eta < 1/4\cL$ for all $k \in \N$. By Theorem \ref{thm:as_conv_shb} we have $$\lim_k k \br{f(x_k) - f_*} = 0, \qquad \mbox{\emph{almost surely}}.$$
\end{corollary}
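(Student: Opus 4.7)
The plan is to apply Theorem~\ref{thm:as_conv_shb} directly; the only non-trivial piece is checking that the constant stepsize $\eta_k = \eta$ satisfies Condition~\ref{con:step_sizes} in the overparametrized setting $\sigma^2 = 0$, so I would spell that verification out first.

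First, I verify Condition~\ref{con:step_sizes}. The sequence $(\eta_k)_k = (\eta)_k$ is non-strictly decreasing, and $\sum_k \eta_k = \sum_k \eta = \infty$. Since $\sigma^2 = 0$, the series $\sum_k \eta_k^2 \sigma^2 = 0 < \infty$ trivially. For the last requirement, $\sum_{j=0}^{k-1} \eta_j = k\eta$, so
\begin{equation*}
\sum_{k \geq 1} \frac{\eta_k}{\sum_{j=0}^{k-1} \eta_j} \;=\; \sum_{k \geq 1} \frac{\eta}{k\eta} \;=\; \sum_{k \geq 1} \frac{1}{k} \;=\; \infty,
\end{equation*}
which is precisely the harmonic series condition (this matches Remark~\ref{rem:stepsizes_condition} for the allowed choice $\xi = 0$ when $\sigma^2 = 0$). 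The hypothesis $\eta < 1/(4\cL) < 1/(8\cL) \cdot 2$ on the stepsize bound needs a minor sanity check: the corollary assumes $\eta < 1/(4\cL)$, but Theorem~\ref{thm:as_conv_shb} requires $\eta_k \leq 1/(8\cL)$. I would flag this as a possible typo in the statement and simply assume $\eta \leq 1/(8\cL)$ (or note that one shrinks $\eta$ by a factor of $2$), which does not affect the conclusion qualitatively.

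Next, I invoke Theorem~\ref{thm:as_conv_shb} with the choice $\lambda_0 = 0$ and $\lambda_k = \sum_{t=0}^{k-1} \eta_t / (4\eta_k) = (k\eta)/(4\eta) = k/4$ for $k \geq 1$. The theorem yields, almost surely,
\begin{equation*}
f(x_k) - f_* \;=\; o\!\left(\frac{1}{\sum_{t=0}^{k-1} \eta_t}\right) \;=\; o\!\left(\frac{1}{k\eta}\right).
\end{equation*}
Multiplying by $k$ gives $k(f(x_k) - f_*) = o(1/\eta) \cdot (1/k) \cdot k$, or more cleanly, $k(f(x_k) - f_*) = k \cdot o(1/(k\eta)) = o(1/\eta) \cdot o(1)$, which tends to $0$ almost surely. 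Thus $\lim_k k(f(x_k) - f_*) = 0$ almost surely, which is the claim.

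There is essentially no obstacle here: the whole content is in Theorem~\ref{thm:as_conv_shb} and in observing that $\sum_{t=0}^{k-1} \eta_t$ grows linearly when the stepsize is constant. The only subtlety worth noting in the write-up is that this is the best one can hope for from a constant-stepsize scheme and that it requires overparametrization ($\sigma^2 = 0$); otherwise, the $\sum_k \eta_k^2 \sigma^2 < \infty$ requirement of Condition~\ref{con:step_sizes} would fail and a vanishing stepsize would be needed, recovering the slower rates in Table~\ref{tab:summary_rates}.
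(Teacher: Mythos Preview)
Your proposal is correct and follows exactly the approach the paper intends: the corollary is stated as an immediate consequence of Theorem~\ref{thm:as_conv_shb}, and the only work is verifying Condition~\ref{con:step_sizes} for constant stepsizes when $\sigma^2=0$ (cf.\ Remark~\ref{rem:stepsizes_condition}) and reading off that $\sum_{t=0}^{k-1}\eta_t = k\eta$. Your flag about the stepsize bound is also legitimate---the corollary writes $\eta < 1/(4\cL)$ while Theorem~\ref{thm:as_conv_shb} requires $\eta_k \leq 1/(8\cL)$, so the statement should indeed be read with the tighter constant.
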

This corollary has fundamental implications in the deterministic and the stochastic case. In the stochastic case, it shows that when $\sigma^2 = 0$, \ref{eq:SHB_IMA} with a fixed step size converges at a $o(1/k)$ rate at the last iterate. In the deterministic case, $\sigma^2 =0$ always holds, as at each iteration we use the true gradient $\nabla f(x_k)$, and we have $\nabla f(x_*) = 0$ for all $x_* \in \cX_*$. Thus  Corollary~\ref{cor:faster1k} shows that the HB method enjoys the same $o(1/k)$ asymptotic convergence rate as gradient descent \citep{Lee19}.

It seems that it is our choice iteration-dependent momentum coefficients given by~\eqref{eq:alphafrometalambdaintro} and~\eqref{eq:lambdakasymptotic}  that enable this fast  `small o' convergence of the objective values for SHB.  Recent work by~\cite{Attouch16} corroborates with this finding, where the authors also showed that a version of (deterministic) Nesterov's Accelerated Gradient algorithm
with carefully chosen  iteration dependent momentum coefficients converges at a $o(1/k^2)$ rate, rather than the previously known $O(1/k^2)$.

\section{Non-convex \textit{almost sure} convergence rates for SGD}\label{sec:sgd_nonconvex}
We now move on to the non-convex case,  where we use the following assumption from \cite{Khaled20}.
\begin{assumption}
There exist constants $A, B, C \geq 0$ s.t. for all $x \in \R^d$,
\begin{align}\label{eq:ABC}
\ec[v]{\sqn{\nabla f_{v}(x)}} \leq A\br{f(x) - f_*} + B\sqn{\nabla f(x)} + C.  \tag{ABC}
\end{align}
\end{assumption}
This assumption is called \emph{Expected Smoothness} in \cite{Khaled20}. It includes the bounded gradients assumption, with $A=B=0$ and $C = G > 0$, and  the bounded gradient variance assumption, with $A=0$, $B=1$ and $C=\sigma^2$, as special cases. See \cite[Th. 1]{Khaled20} for a thorough investigation of the other assumptions used in the litterature which are implied by \eqref{eq:ABC}. A major benefit of this assumption is that when $f$ is a finite-sum (Remark \ref{rem:finite_sum}) and the $f_i$ functions are lower-bounded, \eqref{eq:ABC} always holds~\citep[Prop. 3]{Khaled20}.
\begin{remark}[\cite{Khaled20}, Prop. 3]
In the setting of Remark \ref{rem:finite_sum}, and assuming that for all $i \in [n], \; f_i \geq f_i^* > -\infty$, Assumption \eqref{eq:ABC} holds with:
\begin{equation}
A = \frac{n-b}{b(n-b)}L_{\max},  \quad B = \frac{n(b-1)}{b(n-1)}, \quad \mbox{and} \quad C = \frac{2A}{n}\sum_{i=1}^n \br{f_* - f_i^*} .
\end{equation}
\end{remark}

Since a global minimizer of $f$ does not always exist in the nonconvex case, we can now only hope to find a stationary point. Hence, we present asymptotic convergence rates for the squared gradient norm.

\begin{theorem}\label{thm:as_conv_sgd_nonconvex}
Consider the iterates of \ref{eq:SGD}. Assume that \eqref{eq:ABC} holds. Choose stepsizes which verify Condition \ref{con:step_sizes} (with $C$ in place of $\sigma^2$) such that $\forall k \in \N, \; 0 < \eta_k \leq 1/(BL)$.  Then, we have $a.s$ that 
\[\underset{t=0,\dots,k-1}{\min} \sqn{\nabla f(x_t)} = o\br{\frac{1}{\sum_{t=0}^{k-1} \eta_t}}.\]
\end{theorem}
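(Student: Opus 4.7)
The plan is to follow the template of the proof of Theorem~\ref{thm:as_conv_sgd}, letting the running minimum $m_k \eqdef \min_{t=0,\ldots,k-1}\sqn{\nabla f(x_t)}$ play the role that the averaged suboptimality $\delta_k = f(\bar{x}_k) - f_*$ played in the convex case, and exploiting the monotonicity $m_{k+1} \leq m_k$ as a substitute for Jensen's inequality.

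First, I would apply $L$-smoothness of $f$ along the \ref{eq:SGD} update, take the conditional expectation given $\F_k$, and use~\eqref{eq:ABC} to bound $\ec{\sqn{\nabla f_{v_k}(x_k)}\mid\F_k}$. The condition $\eta_k \leq 1/(BL)$ ensures $1 - BL\eta_k/2 \geq 1/2$, which gives a recursion of the form
\begin{equation*}
\ec{f(x_{k+1}) - f_* \mid \F_k} + \tfrac{\eta_k}{2}\sqn{\nabla f(x_k)} \leq (1+\gamma_k)(f(x_k) - f_*) + Z_k,
\end{equation*}
with $\gamma_k = AL\eta_k^2/2$ and $Z_k = CL\eta_k^2/2$, both summable under Condition~\ref{con:step_sizes} with $C$ replacing $\sigma^2$. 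A first application of Lemma~\ref{lem:simple_RS} then yields that $f(x_k) - f_*$ converges a.s.\ and $\sum_k \eta_k \sqn{\nabla f(x_k)} < \infty$ a.s.

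The heart of the proof is the following ``min-Jensen'' inequality, which I would establish by a short two-case argument (separating $m_{k+1} = m_k$ from $m_{k+1} = \sqn{\nabla f(x_k)}$) using $m_{k+1} \leq m_k$ and $S_{k+1} \geq 2\eta_k$, where $S_k \eqdef \sum_{t=0}^{k-1}\eta_t$ and the latter holds for $k \geq 1$ because $\{\eta_k\}$ is decreasing:
\begin{equation*}
\sqn{\nabla f(x_k)} \geq \tfrac{S_{k+1}}{2\eta_k} m_{k+1} - \tfrac{S_{k+1} - 2\eta_k}{2\eta_k} m_k.
\end{equation*}
Substituting this lower bound into the previous recursion and rearranging, in exact parallel with the final manipulation of the proof of Theorem~\ref{thm:as_conv_sgd}, yields
\begin{equation*}
\ec{f(x_{k+1}) - f_* \mid \F_k} + \tfrac{S_{k+1}}{4} m_{k+1} + \tfrac{\eta_k}{4} m_k \leq (1+\gamma_k)(f(x_k) - f_*) + \tfrac{S_k}{4} m_k + Z_k.
\end{equation*}
This is a Robbins--Siegmund recursion on the Lyapunov function $V_k \eqdef (f(x_k) - f_*) + \tfrac{S_k}{4} m_k$ with summable positive term $U_{k+1} = \tfrac{\eta_k}{4} m_k$, so a second application of Lemma~\ref{lem:simple_RS} gives that $V_k$ converges a.s.\ and $\sum_k \eta_k m_k < \infty$ a.s.

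To finish, I combine the a.s.\ convergence of $f(x_k) - f_*$ with the a.s.\ convergence of $V_k$ to deduce that $S_k m_k$ converges a.s.\ to some limit $T \geq 0$. If $T > 0$, then eventually $m_k \geq T/(2S_k)$, hence $\sum_k \eta_k m_k \geq \tfrac{T}{2}\sum_k \eta_k/S_k$; Condition~\ref{con:step_sizes} provides $\sum_k \eta_k/S_k = \infty$, contradicting $\sum_k \eta_k m_k < \infty$. Hence $T = 0$, i.e.\ $m_k = o(1/S_k)$ a.s., as claimed. The main obstacle is the min-Jensen inequality: in the convex case Jensen's inequality is immediate for the weighted average $\bar{x}_k$, but here one must find the right weighted combination of $m_k$ and $m_{k+1}$ that is both compatible with the one-step descent and with the $S_{k+1} \geq 2\eta_k$ budget provided by the decreasing stepsizes.
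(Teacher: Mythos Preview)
Your proof is correct and follows the same overall architecture as the paper's: derive the one-step descent recursion under~\eqref{eq:ABC}, apply Lemma~\ref{lem:simple_RS} once to get almost-sure convergence of $f(x_k)-f_*$, build an augmented Lyapunov function involving a running summary of the squared gradient norms, apply Lemma~\ref{lem:simple_RS} a second time, and conclude via the divergence of $\sum_k \eta_k/\sum_{j<k}\eta_j$.

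The only difference is the choice of running summary. You work directly with the minimum $m_k$ and justify the substitution via a two-case ``min-Jensen'' inequality that needs $S_{k+1}\geq 2\eta_k$ (hence $k\geq 1$). The paper instead introduces a running \emph{weighted average} $g_{k+1}=(1-w_k)g_k+w_k\sqn{\nabla f(x_k)}$ with $w_k=2\eta_k/\sum_{j\leq k}\eta_j$, so that the substitution $\eta_k\sqn{\nabla f(x_k)}=\tfrac{1}{2}\sum_{j\leq k}\eta_j\,g_{k+1}-\tfrac{1}{2}\br{\sum_{j<k}\eta_j-\eta_k}g_k$ is an \emph{identity}, and only at the very end uses that a convex combination dominates the minimum, $g_k\geq m_k$. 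The paper's route avoids your case split, while your route tracks the target quantity $m_k$ throughout; both produce the same Lyapunov inequality and the same conclusion with the same constants.
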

From this result, we can derive \textit{almost sure} convergence rates arbitrarily close to $o(1/\sqrt{k})$, which can be improved to $o(1/k)$ in the overparametrized setting (See Table \ref{tab:summary_rates}). Since these results are similar to Corollary \ref{cor:stepsizes_rates_sgd}, we omit them for brievity and report them in Table \ref{tab:summary_rates} and Corollary \ref{cor:stepsizes_rates_sgd_nonconvex} in Appendix \ref{sec:app_formal_statements}.

\section{Experiments}\label{sec:experiments} 
In our experiments, we aimed to examine whether or not \ref{eq:SHB_IMA} with the parameter settings suggested by our theory performed better than \ref{eq:SGD} and SGD with three common alternative parameter settings used throughout the machine learning literature: SGD with fixed momentum $\beta$ of 0.9 and 0.99 as well as no momentum.

For our experiments, we selected a diverse set of multi-class classification problems from the LibSVM repository, 25 problems in total. These datasets range from a few classes to a thousand, and they vary from hundreds of data-points to hundreds of thousands. We normalized each dataset by a constant so that the largest data vector had norm $1$. We used a multi-class logistic regression loss with no regularization so we could test the non-strongly convex convergence properties, and we ran for 50 epochs with no batching.

We use SHB to denote the method~\eqref{eq:SHB} with $\alpha_k$ and $\beta_k$ set using~\eqref{eq:alphafrometalambdaintro} (or equivalently the method~\eqref{eq:SHB_IMA}) and we left $\eta$, as well as the step sizes of all the methods we compare, as a constant to be determined through grid search. For the gridsearch, we used power-of-2 grid ($2^i$), we ran 5 random seeds and chose the learning rate that gave the lowest loss on average for each combination of problem and method. We widened the grid search as necessary for each combination to ensure that the chosen learning-rate was not from the endpoints of our grid search. Although it is possible to give a closed-form bound for the Lipschitz smoothness constant for our test problems, the above setting is less conservative and has the advantage of being usable without requiring any knowledge about the problem structure. 

 We then ran 40 different random seeds to produce Figure~\ref{fig:experiments}. To determine which method, if any, was best on each problem, we performed t-tests with Bonferroni correction, and we report how often each method was statistically significantly superior to all of the other three methods in Table \ref{tbl:best}. The stochastic heavy ball method using our theoretically motivated parameter settings performed better than all other methods on 11 of the 25 problems. On the remaining problems, no other method was statistically significantly better than all of the rest.
 \afterpage{
  \begin{table*}[h!]
\centering \begin{tabular}{|c|c|c|c|c|c|}
\hline 
 & SHB & SGD & Momentum 0.9 & Momentum 0.99 & No best method\tabularnewline
\hline 
\hline 
Best method for & 11 & 0 & 0 & 0 & 14\tabularnewline
\hline 
\end{tabular}
\caption{\label{tbl:best}Count of how many problems each method is statistically significantly superior to the rest on}
\end{table*}
\begin{figure}[t!]
\includegraphics[width=\textwidth, height=0.5\textheight]{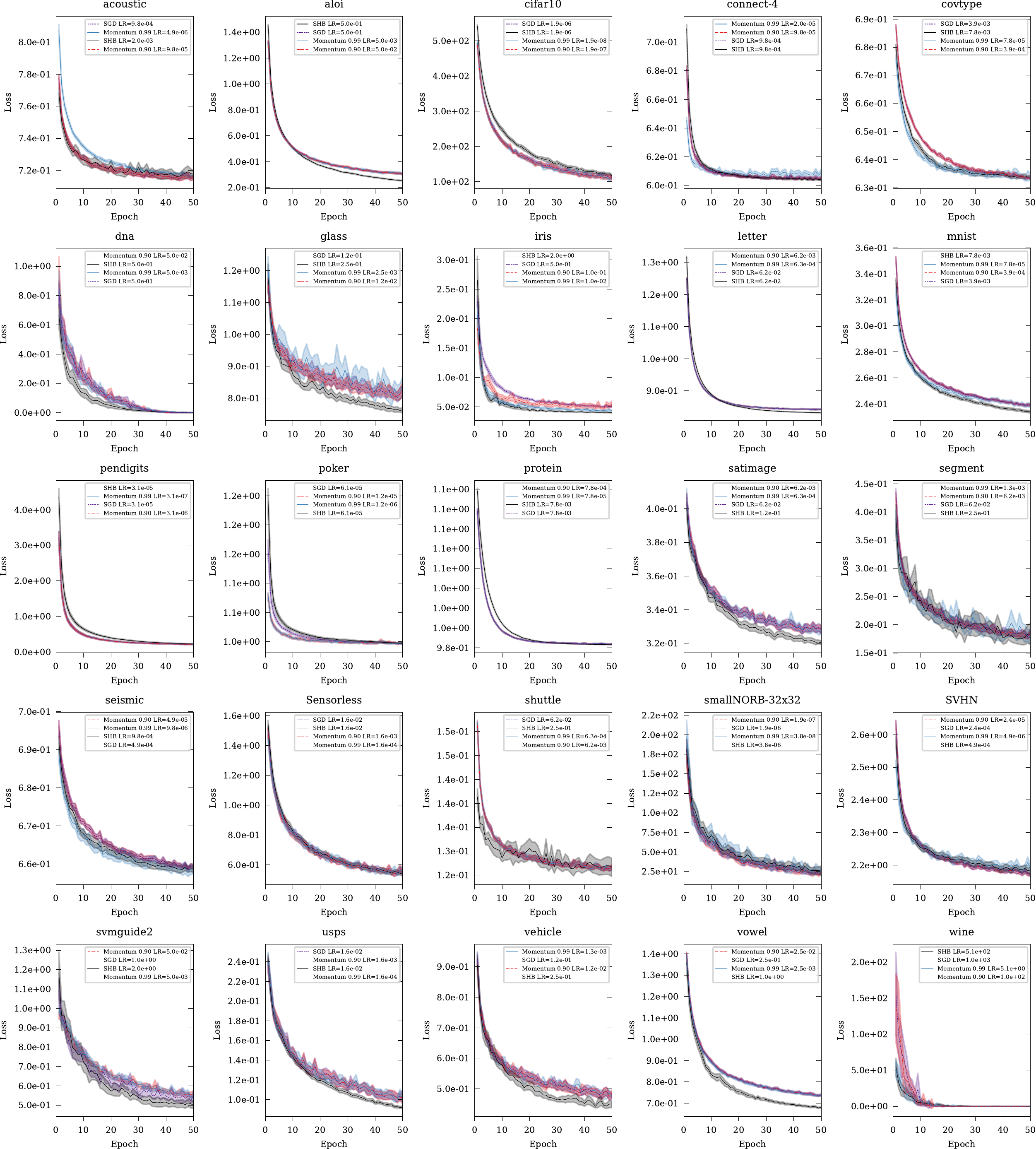}
\caption{\label{fig:experiments}Average training error convergence plots for 25 LibSVM datasets, with using the best learning rate for each method and problem combination. Averages are over 40 runs. Error bars show a range of +/- 2SE.}
\end{figure}
} 

\subsection*{Acknowledgements}
The work of Othmane Sebbouh was supported in part by the French government under management of Agence Nationale de la Recherche as part of the "Investissements d’avenir" program, reference ANR19-P3IA-0001 (PRAIRIE 3IA Institute). Othmane Sebbouh also acknowledges the support of a "Chaire d'excellence de l'IDEX Paris Saclay".

\newpage
\bibliographystyle{plainnat}
\bibliography{biblio.bib}

\newpage
\onecolumn 

\begin{appendices}

\renewcommand{\thesection}{\Alph{section}}
\begin{center}
  \normalsize\bfseries\MakeUppercase{APPENDIX}
\end{center}

\tableofcontents

\pagebreak
The appendix is organized as follows:
\begin{itemize}
\item In Section \ref{sec:app_formal_statements} we present the proofs of Remark \ref{rem:stepsizes_condition} and the stepsize choices and the corresponding convergence rates derived in the corollaries listed in Table \ref{tab:summary_rates}.
\item In Section \ref{sec:app_proofs_intro} we present proofs for Section \ref{sec:introduction}.
\item In Section \ref{sec:app_proofs_as_conv_sgd} we present proofs for Section \ref{sec:sgd_asymp}.
\item In Section \ref{sec:app_proofs_shb} we present proofs for Section \ref{sec:shb_asymptotic}.
\item In Section \ref{sec:app_proofs_as_conv_sgd_nonconvex} we present proofs for Section \ref{sec:sgd_nonconvex}.
\item In Section \ref{sec:app_nonsmooth}, we present our results for the convergence of stochastic subgradient descent under the bounded gradients assumptions.
\item In Section \ref{sec:app_shb_exp}, we present convergence rates for SHB in expectation without the bounded gradients and bounded gradient variance assumptions.
\end{itemize}

\section{Proofs of corollaries on convergence rates and stepsize choices}\label{sec:app_formal_statements}

\subsection{Proof of remark \ref{rem:stepsizes_condition}}
\begin{proof}
Let $\eta_k = \frac{\eta}{k^{\xi}}$ with $\eta > 0$ and $\xi \in [0, 1)$.  Clearly, $(\eta_k)_k$ is decreasing and $\sum_k \eta_k = \infty$. And we have
\begin{align}
\sum_{t=0}^{k-1}\eta_t \sim \eta k^{1 - \xi}.
\end{align}
Hence,
\begin{align}
\frac{\eta_k}{\sum_{t=0}^{k-1}\eta_t} \sim \frac{1}{k}.
\end{align}
Hence, $\sum_k \frac{\eta_k}{\sum_{t=0}^{k-1}\eta_t} = \infty$.
\begin{itemize}
\item If $\sigma^2 \neq 0$. Let $\xi \in (\frac{1}{2}, 1]$.  Then $\sum_k \eta_k^2 < \infty$, and the stepsizes verify Condition \ref{con:step_sizes}.
\item If $\sigma^2 = 0$. Let $\xi \in [0, 1)$.  We have $\sum_k \eta_k^2\sigma^2 = 0 < \infty$. Hence, the stepsizes verify Condition~\ref{con:step_sizes}.
\end{itemize}
\end{proof}

\subsection{SGD: Proof of Corollary \ref{cor:stepsizes_rates_sgd} }
\begin{proof}
\begin{itemize}
\item If $\sigma^2 \neq 0$.  Let $\eta_k = \frac{\eta}{k^{1/2 + \epsilon}}$. From Remark \ref{rem:stepsizes_condition}, we have that the stepsizes verify Condition \ref{con:step_sizes}. Moreover, $\sum_k \frac{1}{k^{\frac{1}{2} + \epsilon}} \sim k^{-1/2 + \epsilon}$. Thus, from Theorem \ref{thm:as_conv_sgd}:
\begin{align}
f(\bar{x}_k) - f_* = o\br{\frac{1}{k^{1/2 - \epsilon}}}.
\end{align}
\item If $\sigma^2 = 0$.  Let $\eta_k = \eta$. From Remark \ref{rem:stepsizes_condition}, the stepsizes verify Condition \ref{con:step_sizes}. Thus, from Theorem~\ref{thm:as_conv_sgd}:
\begin{align}
f(\bar{x}_k) - f_* = o\br{\frac{1}{k}}.
\end{align}
\end{itemize}

\end{proof}

\subsection{SGD with adaptive step sizes: Proof of Corollary \ref{cor:stepsizes_rates_sgd_adap} }
\begin{proof}
Let $\eta, \gamma, c > 0$. We first prove the \emph{almost sure} convergence results.
\begin{itemize}
\item If $\bar{\sigma}^2 \neq 0$.  Let $\eta_k^{\max} = \eta k^{-4\epsilon/3}$ and $\gamma_k = \gamma k^{-\frac{1}{2} + \epsilon/3}$. 
Clearly, $(\eta_k^{\max} \gamma_k)_k$ is decreasing, $\sum_k \eta_k = \infty$ and $\sum_k \eta_k^{\max} \gamma_k^2 < \infty$. And
\begin{align}
\sum_{t=0}^{k-1}\eta_t^{\max}\gamma_t \sim \eta\gamma k^{1/2 - \epsilon}.
\end{align}
Hence,
\begin{align}
\frac{\eta_k^{\max} \gamma_k }{\sum_{t=0}^{k-1}\eta_t^{\max}\gamma_t} \sim \frac{1}{k}.
\end{align}
Hence, $\sum_k\frac{\eta_k^{\max} \gamma_k }{\sum_{t=0}^{k-1}\eta_t^{\max}\gamma_t}  = \infty$. Thus, the stepsizes verify the conditions of Theorem \ref{thm:adap_asymp}, and we have
\begin{align}
f(\bar{x}_k) - f_* = o\br{\frac{1}{k^{1/2 - \epsilon}}}.
\end{align}
\item If $\bar{\sigma}^2 = 0$.  Let $\eta_k^{\max} = \eta$ and $\gamma_k = 1$. Clearly $(\eta_k^{\max}\gamma_k)_k$ is decreasing since it is constant, $\sum_k \eta_k^{\max}\gamma_k = \infty$, $\sum_k \eta_k^{\max}\gamma_k^2\bar{\sigma}^2 = 0 < \infty$, and $\sum_k \frac{\eta_k^{\max}\gamma_k}{\sum_{t=0}^{k-1}\eta_t^{\max}\gamma_t} = \sum_k 1 =  \infty$. Thus, the stepsizes verify the conditions of Theorem \ref{thm:adap_asymp}, and we have
\begin{align}
f(\bar{x}_k) - f_* = o\br{\frac{1}{k}}.
\end{align}
\end{itemize}

We now prove the convergence rates in expectation. Remember that from Theorem \ref{thm:adap_nonasymp}, we have that if $\br{\eta_k^{\max}}$ and $\br{\gamma_k}$ are strictly positive, decreasing sequences with $\gamma_k \leq c$ for all $k \in \N$ and $c \geq \frac{1}{2}$, then the iterates of \ref{eq:SGD-ALS} and \ref{eq:SGD-PS} satisfy
\begin{align}\label{eq:adap_non_asymp_app}
\ec{f(\bar{x}_k) - f_*} \leq \frac{2 c a_0\sqn{x_0 - x_*} + 4c\sum_{t=0}^{k-1}\gamma_t\eta_t^{\max}\br{\frac{\eta_t^{\max}\cL}{2(1-c)} - 1}_+\bar{\sigma}^2 + 2\sum_{t=0}^{k-1}\gamma_t^2\eta_t^{\max}\bar{\sigma}^2}{\sum_{t=0}^{k-1}\eta_t^{\max}\gamma_t},
\end{align}
where $\bar{x}_k = \sum_{t=0}^{k-1}\frac{\eta_t^{\max}\gamma_t}{\sum_{j=0}^{k-1}\eta_j^{\max} \gamma_j} x_t$ and $a_0 = \max\left\{\frac{\eta_0^{\max}\cL}{2(1-c)},\, 1\right\}$.
\begin{itemize}
\item If $\bar{\sigma}^2\neq 0$.  Let $\eta_k^{\max} = \eta k^{-4\epsilon/3}$ and $\gamma_k = \gamma k^{-\frac{1}{2} + \epsilon/3}$
Since $\eta_t^{\max} \rightarrow 0$, there exists $k_0 \in \N$ such that for all $t \geq k_0, \; \gamma_t\eta_t^{\max}\br{\frac{\eta_t^{\max}\cL}{2(1-c)} - 1}_+ = 0$. Hence, for all $k \geq k_0$
\begin{align}
\ec{f(\bar{x}_k) - f_*} \leq \frac{2 c a_0\sqn{x_0 - x_*} +4c\sum_{t=0}^{k_0-1}\gamma_t\eta_t^{\max}\br{\frac{\eta_t^{\max}\cL}{2(1-c)} - 1}_+\bar{\sigma}^2  +  2\sum_{t=0}^{k-1}\gamma_t^2\eta_t^{\max}\bar{\sigma}^2}{\sum_{t=0}^{k-1}\eta_t^{\max}\gamma_t}.
\end{align}
Replacing $\eta_k^{\max }$ and $\gamma_k$ with their values gives
\begin{align}
\ec{f(\bar{x}_k) - f_*}  = O\br{\frac{1}{k^{1/2- \epsilon}}}.
\end{align}
\item If $\bar{\sigma}^2 = 0$. Let $\eta_k^{\max} = \eta > 0$ and $\gamma = 1$. We have from \eqref{eq:adap_non_asymp_app} that for all $k \in \N$,
\begin{align}
\ec{f(\bar{x}_k) - f_*} \leq  \frac{2\max\left\{\frac{3\eta \cL}{2}, 1\right\}\sqn{x_0 - x_*}}{\eta k}.
\end{align}
\end{itemize}
\end{proof}

\subsection{SHB}
\begin{corollary}[Corollary of Theorem \ref{thm:as_conv_shb}]\label{cor:stepsizes_rates_shb}
Let Assumption \ref{asm:smoothconvex} hold. Let $0 < \eta \leq 1/4\cL$ and $\epsilon > 0$. 
\begin{itemize}
\item if $\sigma^2 \neq 0$.  Let $\eta_k = \frac{\eta}{k^{1/2 + \epsilon}}$. Then
\begin{align}
f(x_k) - f_* = o\br{\frac{1}{k^{1/2 - \epsilon}}}.
\end{align}
\item If $\sigma^2 =  0$.  Let $\eta_k = \eta $. Then
\begin{align}
f(x_k) - f_* = o\br{\frac{1}{k}}.
\end{align} 
\end{itemize}
\end{corollary}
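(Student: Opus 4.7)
The plan is to mirror the proof strategy of Corollary~\ref{cor:stepsizes_rates_sgd} but apply Theorem~\ref{thm:as_conv_shb} in place of Theorem~\ref{thm:as_conv_sgd}. The core of the proof is just two ingredients: (i) verifying that the two proposed stepsize schedules meet the hypotheses of Theorem~\ref{thm:as_conv_shb} (in particular Condition~\ref{con:step_sizes} and the upper bound on $\eta_k$), and (ii) estimating $\sum_{t=0}^{k-1}\eta_t$ in order to convert the $o\!\left(1/\sum_{t=0}^{k-1}\eta_t\right)$ conclusion of the theorem into a concrete polynomial rate in $k$.

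First, I would dispose of the upper-bound condition $\eta_k \le 1/(8\cL)$. In both cases the sequence $\eta_k$ is nonincreasing with $\eta_0 = \eta$, so it suffices to pick $\eta$ small enough that $\eta \le 1/(8\cL)$; this is compatible with the stated constraint $\eta \le 1/(4\cL)$ up to halving the constant. Second, I would invoke Remark~\ref{rem:stepsizes_condition}: for $\sigma^2 \ne 0$ with $\eta_k = \eta/k^{1/2+\epsilon}$, the exponent $\xi = 1/2+\epsilon$ lies in $(1/2,1]$ (for small enough $\epsilon$) so Condition~\ref{con:step_sizes} holds; for $\sigma^2 = 0$ with $\eta_k = \eta$, the exponent $\xi = 0$ lies in $[0,1]$, and the condition $\sum_k \eta_k^2 \sigma^2 < \infty$ holds trivially because $\sigma^2 = 0$.

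With the hypotheses verified, I would apply Theorem~\ref{thm:as_conv_shb} to conclude that
\[
f(x_k) - f_* \;=\; o\!\left(\tfrac{1}{\sum_{t=0}^{k-1}\eta_t}\right) \quad \text{almost surely}.
\]
Finally I would plug in the asymptotic estimates of the partial sums. In the non-overparametrized case, $\sum_{t=0}^{k-1} \eta/t^{1/2+\epsilon} \sim \tfrac{\eta}{1/2-\epsilon}\, k^{1/2-\epsilon}$, so the upper bound yields $f(x_k) - f_* = o(1/k^{1/2-\epsilon})$ almost surely. In the overparametrized case, $\sum_{t=0}^{k-1}\eta = \eta k$, which yields $f(x_k) - f_* = o(1/k)$ almost surely.

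There is essentially no obstacle: all the work lives in Theorem~\ref{thm:as_conv_shb} and Remark~\ref{rem:stepsizes_condition}, and the corollary is a straightforward specialization. The only thing to keep in mind is that, unlike Corollary~\ref{cor:stepsizes_rates_sgd}, these rates hold at the \emph{last} iterate $x_k$ (rather than a weighted average), which is a feature of the SHB analysis reflected in the IMA reformulation of Proposition~\ref{prop:ima} and not of the corollary's proof itself.
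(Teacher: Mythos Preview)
Your proposal is correct and follows essentially the same approach as the paper: the paper's proof simply states that it is identical to the proof of Corollary~\ref{cor:stepsizes_rates_sgd}, invoking Theorem~\ref{thm:as_conv_shb} in place of Theorem~\ref{thm:as_conv_sgd}, which is exactly what you do (verify Condition~\ref{con:step_sizes} via Remark~\ref{rem:stepsizes_condition} and read off the rate from the asymptotics of $\sum_{t=0}^{k-1}\eta_t$). Your observation about the $1/(8\cL)$ versus $1/(4\cL)$ discrepancy is also well taken; the paper does not comment on it.
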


\begin{proof}
The proof is the same as the proof of Corollary \ref{cor:stepsizes_rates_sgd}, using Theorem~\ref{thm:as_conv_shb} instead of Theorem~\ref{thm:as_conv_sgd}.
\end{proof}

\subsection{SGD, nonconvex}
\begin{corollary}[Corollary of Theorem \ref{thm:as_conv_sgd_nonconvex}]\label{cor:stepsizes_rates_sgd_nonconvex}
Let Assumption \eqref{eq:ABC} hold. Let $0 < \eta \leq 1/4\cL$ and $\epsilon > 0$. 
\begin{itemize}
\item If $\sigma^2 = 0$.  Let $\eta_k = \frac{\eta}{k^{1/2 + \epsilon}}$. Then,
\begin{align}
\min_{t=0,\dots,k-1} \sqn{\nabla f(x_k)} = o\br{\frac{1}{k^{1/2 - \epsilon}}}.
\end{align}
\item If $\sigma^2 \neq  0$.  Let $\eta_k = \eta $. Then,
\begin{align}
\min_{t=0,\dots,k-1} \sqn{\nabla f(x_k)} = o\br{\frac{1}{k}}.
\end{align}
\end{itemize}
\end{corollary}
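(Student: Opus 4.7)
The plan is to obtain both rates as immediate specializations of Theorem~\ref{thm:as_conv_sgd_nonconvex}, following the same template used in the proof of Corollary~\ref{cor:stepsizes_rates_sgd}. The theorem guarantees that, as soon as the stepsize sequence $(\eta_k)$ satisfies Condition~\ref{con:step_sizes} (with $C$ in place of $\sigma^2$) and $\eta_k \leq 1/(BL)$, one has the almost sure asymptotic identity $\min_{t<k}\sqn{\nabla f(x_t)} = o\bigl(1/\sum_{t=0}^{k-1}\eta_t\bigr)$. So the task reduces to two bookkeeping checks for each proposed schedule: (i) verify that Condition~\ref{con:step_sizes} holds, and (ii) compute the asymptotic behavior of the partial sum $\sum_{t=0}^{k-1}\eta_t$.

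First I would dispatch the noisy regime ($C\neq 0$) with $\eta_k = \eta/k^{1/2+\epsilon}$, where $\eta \leq 1/(BL)$. The sequence is decreasing, $\sum_k \eta_k = \infty$ since $1/2+\epsilon<1$, and $\sum_k \eta_k^2 C < \infty$ since $1+2\epsilon>1$. For the final clause of Condition~\ref{con:step_sizes}, an integral comparison gives $\sum_{t=0}^{k-1}\eta_t \sim \frac{\eta}{1/2-\epsilon}\, k^{1/2-\epsilon}$, so $\eta_k/\sum_{j<k}\eta_j \sim c_\epsilon/k$ and its series diverges. These verifications are the exact analogues already carried out in Remark~\ref{rem:stepsizes_condition}, with $C$ simply replacing $\sigma^2$. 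Plugging the partial-sum asymptotic into Theorem~\ref{thm:as_conv_sgd_nonconvex} yields $\min_{t<k}\sqn{\nabla f(x_t)} = o\bigl(1/k^{1/2-\epsilon}\bigr)$.

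Next I would handle the overparametrized regime ($C=0$) with constant $\eta_k = \eta \leq 1/(BL)$. The monotonicity is trivial, $\sum_k \eta = \infty$, and $\sum_k \eta^2 C = 0 < \infty$ holds precisely because $C=0$ (it is exactly here that removing noise lets us use a non-summable-squared schedule). The ratio $\eta_k/\sum_{j<k}\eta_j = 1/k$ still has divergent sum, so Condition~\ref{con:step_sizes} is satisfied. Now $\sum_{t=0}^{k-1}\eta_t = \eta k$, and Theorem~\ref{thm:as_conv_sgd_nonconvex} delivers $\min_{t<k}\sqn{\nabla f(x_t)} = o(1/k)$.

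There is no genuine obstacle here: the whole argument is a transfer of the verification in Remark~\ref{rem:stepsizes_condition} with $C$ in the role of $\sigma^2$, combined with a Riemann-sum estimate of the partial stepsize sum. The only point to be careful with is ensuring the compatibility condition $\eta_k \leq 1/(BL)$ of Theorem~\ref{thm:as_conv_sgd_nonconvex}, which is met by taking $\eta \leq 1/(BL)$ since $(\eta_k)$ is non-increasing in both cases. This mirrors the appendix proof of Corollary~\ref{cor:stepsizes_rates_sgd} line for line, and the two display lines in the statement follow directly.
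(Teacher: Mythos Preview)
Your proposal is correct and follows essentially the same approach as the paper's own proof: both verify Condition~\ref{con:step_sizes} (with $C$ in place of $\sigma^2$) for each stepsize schedule via Remark~\ref{rem:stepsizes_condition}, compute the asymptotic order of $\sum_{t=0}^{k-1}\eta_t$, and then read off the rate directly from Theorem~\ref{thm:as_conv_sgd_nonconvex}. Your write-up is in fact slightly more careful, making explicit the compatibility bound $\eta\leq 1/(BL)$ required by Theorem~\ref{thm:as_conv_sgd_nonconvex} (the statement's $\eta\leq 1/4\cL$ is a carry-over from the convex case).
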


\begin{proof}
\begin{itemize}
\item If $C \neq 0$.  Let $\eta_k = \frac{\eta}{k^{1/2 + \epsilon}}$. From Remark \ref{rem:stepsizes_condition}, we have that the stepsizes verify Condition \ref{con:step_sizes} with $C$ in place of $\sigma^2$. Moreover, $\sum_k \frac{1}{k^{\frac{1}{2} + \epsilon}} \sim k^{-1/2 + \epsilon}$. Thus, from Theorem \ref{thm:as_conv_sgd_nonconvex}:
\begin{align}
\min_{t=0,\dots,k-1} \sqn{\nabla f(x_k)}  = o\br{\frac{1}{k^{1/2 - \epsilon}}}.
\end{align}
\item If $C = 0$.  Let $\eta_k = \eta$. From Remark \ref{rem:stepsizes_condition}, the stepsizes verify Condition \ref{con:step_sizes} with $C$ in place of $\sigma^2$. Thus, from Theorem~\ref{thm:as_conv_sgd_nonconvex}:
\begin{align}
\min_{t=0,\dots,k-1} \sqn{\nabla f(x_k)}  = o\br{\frac{1}{k}}.
\end{align}
\end{itemize}
\end{proof}



\section{Proofs for Section \ref{sec:introduction}} \label{sec:app_proofs_intro}

\subsection{Proof of Lemma \ref{lem:smoothconvex}}
\begin{proof}
Since for all $v \sim \D$, $f_v$ is convex and $L_v$ smooth, we have from \cite[Equation 2.1.7]{Nesterov-convex} that
\begin{align}
\sqn{\nabla f_v(x) - \nabla f_v(x_*)} &\leq 2L_v\br{f_v(x) - f_v(x_*) - \langle \nabla f_v(x_*), x - x_* \rangle}\\
& \overset{\mbox{Asm. \ref{asm:smoothconvex}}}{\leq} 2\cL\br{f_v(x) - f_v(x_*) - \langle \nabla f_v(x_*), x - x_* \rangle}
\end{align}
Hence,
\begin{align}
\ec[v\sim\D]{\sqn{\nabla f_v(x) - \nabla f_v(x_*)}} \leq 2\cL \br{f(x) - f(x_*)}.
\end{align}
Therefore,
\begin{align}
\ec[v\sim\D]{\sqn{\nabla f_v(x)}} & \leq  2\ec[v\sim\D]{\sqn{\nabla f_v(x) - \nabla f_v(x_*)}} + 2 \ec[v\sim\D]{\sqn{\nabla f_v(x_*)}} \\
& \leq 4\cL \br{f(x) - f(x_*)} + 2 \ec[v\sim\D]{\sqn{\nabla f_v(x_*)}}.
\end{align}
\end{proof}

\subsection{Proof of Proposition~\ref{prop:ima}}\label{sec:proof_ima}

%

\begin{proof}
Consider the iterate-averaging method
\begin{eqnarray}
z_{k+1} & = & z_{k}-\eta_{k}\nabla f_{v_k}(x_{k}), \label{eq:zup}\\
x_{k+1} & = & \frac{\lambda_{k+1}}{\lambda_{k+1}+1}x_{k}+\frac{1}{\lambda_{k+1}+1}z_{k+1},\label{eq:xupz}
\end{eqnarray}
and let
\begin{align}
\alpha_k \;= \; \frac{\eta_{k}}{\lambda_{k+1}+1}  \quad \mbox{and} \quad \beta_k \;=\;  \frac{\lambda_{k}}{\lambda_{k+1}+1}. \label{eq:alphafrometalambda}
\end{align}
Substituting~\eqref{eq:zup} into \eqref{eq:xupz} gives
\begin{equation}\label{eq:s84j8sj4}
 x_{k+1}  =  \frac{\lambda_{k+1}}{\lambda_{k+1}+1}x_{k}+\frac{1}{\lambda_{k+1}+1}\left( z_{k}-\eta_{k}\nabla f_{v_k}(x_{k})\right). 
\end{equation}
Now using~\eqref{eq:xupz} at the previous iteration we have that that
 \[z_{k} =(\lambda_{k}+1) \left(x_k -  \frac{\lambda_{k}}{\lambda_{k}+1}x_{k-1}\right) =(\lambda_{k}+1) x_k - \lambda_k x_{k-1}  .\]
 Substituting the above into~\eqref{eq:s84j8sj4} gives
 \begin{align}
  x_{k+1} & =  \frac{\lambda_{k+1}}{\lambda_{k+1}+1}x_{k}+\frac{1}{\lambda_{k+1}+1}\left( (\lambda_{k}+1) x_k - \lambda_k x_{k-1}-\eta_{k}\nabla f_{v_k}(x_{k})\right)\\
  &= x_k  - \frac{\eta_{k}}{\lambda_{k+1}+1}\nabla f_{v_k}(x_{k}) +  \frac{\lambda_{k}}{\lambda_{k+1}+1}\left(x_{k}-x_{k-1}\right).
 \end{align}
Consequently, \eqref{eq:alphafrometalambda} gives the desired expression.
\end{proof}

\section{Proofs for Section \ref{sec:sgd_asymp}} \label{sec:app_proofs_as_conv_sgd}
\subsection{Proof of Theorem \ref{thm:as_conv_sgd}}

\begin{proof}
Consider the setting of Theorem \ref{thm:as_conv_sgd}. Expanding the squares we have that
\begin{align}
\sqn{x_{k+1} - x_*} = \sqn{x_{k} - x_*} - 2 \eta_k \langle \nabla f_{v_k}(x_k), x_k - x_* \rangle + \eta_k^2\sqn{\nabla f_{v_k}(x_k)}.
\end{align}
Then taking conditional expectation $\ec[k]{\cdot} \eqdef \ec{\cdot \; | \; x_k}$ gives
\begin{eqnarray*}
\ec[k]{\sqn{x_{k+1} - x_*}} &=& \sqn{x_{k} - x_*} - 2 \eta_k \langle \nabla f(x_k), x_k - x_* \rangle + \eta_k^2\ec[k]{\sqn{\nabla f_{v_k}(x_k)}} \\
&\overset{\eqref{eq:convexity} + \eqref{eq:expsmooth}}{\leq} & \sqn{x_{k} - x_*} - 2 \eta_k \br{1 - 2\eta_k \cL}\br{f(x_k) - f_*} + 2\eta_k^2\sigma^2.
\end{eqnarray*}
Since $\eta_k \leq \frac{1}{4\cL}$, we have that $1 - 2\eta_k \cL \geq \frac{1}{2}$. Hence, rearranging, we have
\begin{align}\label{eq:main_rec_sgd}
\ec[k]{\sqn{x_{k+1} - x_*}} + \eta_k \br{f(x_k) - f_*} \leq \sqn{x_{k} - x_*} + 2\eta_k^2\sigma^2.
\end{align}
Using Lemma \ref{lem:simple_RS}, we have that $\br{\sqn{x_k - x_*}}_k$ converges \textit{almost surely}.

From~\eqref{eq:w_xbar} we have that $w_k = \frac{2\eta_k}{\sum_{j=0}^{k}\eta_j}$. Since $w_0 = 2\frac{\eta_0}{\eta_0}=2$ we have that $\bar{x}_1 = 2 x_0 - \bar{x}_0 = 2x_0 - x_0 = x_0$. Hence, it holds that
\begin{align}\label{eq:init_verification_weights}
f(\bar{x}_1) - f_* = f(x_0) - f_* = w_0 \br{f(x_0) - f_*} + (1 - w_0)\br{f(\bar{x}_0) - f_*} .
\end{align}

Now for $k \in \N^*$ we have that following equivalence 
\[w_k \in [0,1] \quad  \iff \quad 2\eta_k \leq \sum_{j=0}^{k} \eta_j  \quad  \iff \quad \eta_k \leq \sum_{j=0}^{k-1}\eta_j.\]

The right hand side of the equivalence holds because $\br{\eta_k}_k$ is a decreasing sequence.
 Hence, by Jensen's inequality, we have $\forall k \in \N^*,$
\begin{align}
f(\bar{x}_{k+1}) - f_* \leq w_k \br{f(x_k) - f_*} + (1-w_k) \br{f(\bar{x}_k) - f_*}.
\end{align}

Together with \eqref{eq:init_verification_weights}, this shows that the last inequality holds for all $k \in \N$. Thus,
\begin{align}
\eta_k\br{f(x_k) - f_*} \geq \frac{\eta_k}{w_k}\br{f(\bar{x}_{k+1}) - f_*} - \eta_k\br{\frac{1}{w_k} - 1}\br{f(\bar{x}_{k}) - f_*}.
\end{align}
Replacing this expression in \eqref{eq:main_rec_sgd} gives:
\begin{align}
&\ec[k]{\sqn{x_{k+1} - x_*}} +  \frac{\eta_k}{w_k}\br{f(\bar{x}_{k+1}) - f_*}\\
 &\quad \quad \leq \sqn{x_{k} - x_*} + \eta_k\br{\frac{1}{w_k} - 1}\br{f(\bar{x}_{k}) - f_*}
+  2\eta_k^2\sigma^2.
\end{align}
 Hence substituting in the definition of $w_k$ from~\eqref{eq:w_xbar} gives
\begin{align}
&\ec[k]{\sqn{x_{k+1} - x_*}} +  \frac{1}{2}\sum_{j=0}^{k}\eta_j\br{f(\bar{x}_{k+1}) - f_*}\\
 &\quad\quad\leq \sqn{x_{k} - x_*} + \frac{1}{2}\br{\sum_{j=0}^{k-1}\eta_j - \eta_k}\br{f(\bar{x}_{k}) - f_*}
+  2\eta_k^2\sigma^2.
\end{align}
Thus re-arranging
\begin{align}
&\ec[k]{\sqn{x_{k+1} - x_*}} +  \frac{1}{2}\sum_{j=0}^{k}\eta_j\br{f(\bar{x}_{k+1}) - f_*} + \frac{\eta_k}{2}\br{f(\bar{x}_{k}) - f_*} \\
 &\quad\quad\leq \sqn{x_{k} - x_*} + \frac{1}{2}\sum_{j=0}^{k-1}\eta_j\br{f(\bar{x}_{k}) - f_*}
+  2\eta_k^2\sigma^2,
\end{align}
which, by Lemma \ref{lem:simple_RS}, has the three following consequences:
\begin{align}
&\br{\sqn{x_k - x_*}+ \sum_{j=0}^{k}\eta_j\br{f(\bar{x}_{k+1}) - f_*}}_k \; \mbox{converges \textit{almost surely}},\\
& \mbox{and} \; \sum_k \eta_k\br{f(\bar{x}_k) - f_*} < \infty.
\end{align}
And since $\br{\sqn{x_k - x_*}}_k$ converges \textit{almost surely}, we have that $\br{\sum_{j=0}^{k-1}\eta_j\br{f(\bar{x}_{k}) - f_*}}_k$ converges \textit{almost surely}.
Hence, we have that $\lim_k \frac{\eta_k}{\sum_{j=0}^{k-1} \eta_j} \sum_{j=0}^{k-1} \eta_j\br{f(\bar{x}_k) - f_*} = \lim_{k} \eta_k \br{f(\bar{x}_k) - f_*} = 0$. But since we assumed that $\sum_k \frac{\eta_k}{\sum_{j=0}^{k-1} \eta_j}$ diverges, this implies that $\lim_k \sum_{j=0}^{k-1}\eta_j\br{f(\bar{x}_{k+1}) - f_*} = 0$, that is
\begin{align}
\boxed{f(\bar{x}_k) - f_* = o\br{\frac{1}{\sum_{j=0}^{k-1}\eta_j}}}
\end{align}
\end{proof}

\subsection{Proofs of Theorems \ref{thm:adap_asymp} and \ref{thm:adap_nonasymp}}
The results of Theorems \ref{thm:adap_asymp} and \ref{thm:adap_nonasymp} can be derived as corollaries of the following theorem.
\begin{theorem}\label{thm:rec_adap_step}
Let $\br{\eta_k^{\max}}$ and $\br{\gamma_k}$ be strictly positive, decreasing sequences with $\gamma_k \leq \frac{3c}{2}$ for all $k \in \N$ and $c \geq \frac{2}{3}$. Then the iterates of \eqref{eq:SGD-ALS} and the iterates of \eqref{eq:SGD-PS} satisfy
\begin{align}\label{eq:rec_adap_step}
2a_{k+1}\ec[k]{\sqn{x_{k+1} - x_*}}  + \eta_k^{\max}\gamma_k \br{f(x_k) - f_*}  &\leq 2a_k\sqn{x_k - x_*}\\
&+ 4\gamma_k\eta_k^{\max}\br{\frac{\eta_k^{\max}\cL}{2(1-c)} - 1}_+\bar{\sigma}^2 + \frac{2\gamma_k^2\eta_k^{\max}\bar{\sigma}^2}{c},
\end{align}
where $\bar{\sigma}^2 \eqdef f_* - \ec[v]{f_{v}^*}$, $a_k = \max\left\{\frac{\eta_k^{\max} \cL}{2(1-c)},\, 1\right\}$ and where $(b)_+ \eqdef \max(b, 0)$ for all $b \in \R$.
\end{theorem}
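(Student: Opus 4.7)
The plan is to derive the recurrence from a single inequality valid for both \ref{eq:SGD-ALS} and \ref{eq:SGD-PS}, exploiting a common consequence of the two stepsize rules. Starting from the expansion
\begin{align}
\sqn{x_{k+1} - x_*} = \sqn{x_k - x_*} - 2\eta_k\gamma_k\langle \nabla f_{v_k}(x_k), x_k - x_* \rangle + \eta_k^2\gamma_k^2\sqn{\nabla f_{v_k}(x_k)},
\end{align}
I would use convexity to lower-bound the inner product by $f_{v_k}(x_k) - f_{v_k}(x_*)$, and then split this as $(f_{v_k}(x_k) - f_{v_k}^*) - (f_{v_k}(x_*) - f_{v_k}^*)$ to isolate two non-negative quantities whose conditional expectations evaluate cleanly to $f(x_k) - f_* + \bar{\sigma}^2$ and $\bar{\sigma}^2$ respectively.

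The unifying observation is that both adaptive rules imply $c\eta_k\sqn{\nabla f_{v_k}(x_k)} \leq f_{v_k}(x_k) - f_{v_k}^*$: for \ref{eq:SGD-PS} this is immediate from~\eqref{eq:sps}, while for \ref{eq:SGD-ALS} it follows by combining~\eqref{eq:armijo} with $f_{v_k}^* \leq f_{v_k}(x_k - \eta_k\nabla f_{v_k}(x_k))$. Applying this bound to $\eta_k^2\gamma_k^2\sqn{\nabla f_{v_k}(x_k)}$ produces
\begin{align}
\sqn{x_{k+1} - x_*} \leq \sqn{x_k - x_*} + \eta_k\gamma_k\br{\tfrac{\gamma_k}{c} - 2}\br{f_{v_k}(x_k) - f_{v_k}^*} + 2\eta_k\gamma_k\br{f_{v_k}(x_*) - f_{v_k}^*}.
\end{align}

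Next I would remove the randomness in $\eta_k$. Since $\gamma_k \leq 3c/2$ implies $\gamma_k/c - 2 < 0$, the middle coefficient is negative, so I replace $\eta_k$ by the deterministic lower bound $\eta_k \geq \min\{\eta_k^{\max}, 2(1-c)/\cL\} = \eta_k^{\max}/a_k$; this holds for \ref{eq:SGD-ALS} because smoothness of $f_{v_k}$ guarantees that the Armijo check is satisfied for every $\eta \leq 2(1-c)/L_{v_k}$, and for \ref{eq:SGD-PS} the smoothness-implied Polyak inequality $\sqn{\nabla f_{v_k}(x_k)} \leq 2\cL(f_{v_k}(x_k) - f_{v_k}^*)$ gives the stronger $\eta_k \geq \min\{\eta_k^{\max}, 1/(2c\cL)\}$, which dominates the ALS bound since $(2c-1)^2 \geq 0$. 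The non-negative last term I upper-bound using $\eta_k \leq \eta_k^{\max}$. Multiplying through by $a_k$ to clear the $1/a_k$, using $a_{k+1} \leq a_k$ (which follows from $\br{\eta_k^{\max}}$ decreasing) to replace $a_k$ by $a_{k+1}$ on the LHS, and taking $\ec[k]{\cdot}$, yields
\begin{align}
2a_{k+1}\ec[k]{\sqn{x_{k+1} - x_*}} \leq 2a_k\sqn{x_k - x_*} + 2\eta_k^{\max}\gamma_k\br{\tfrac{\gamma_k}{c}-2}\br{f(x_k) - f_* + \bar{\sigma}^2} + 4a_k\eta_k^{\max}\gamma_k\bar{\sigma}^2.
\end{align}

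To close the argument, I would move the $(f(x_k) - f_*)$ term to the LHS: here the assumption $\gamma_k \leq 3c/2$ enters through $2(2-\gamma_k/c) \geq 1$, so the coefficient $2\eta_k^{\max}\gamma_k(2-\gamma_k/c)$ can be replaced by $\eta_k^{\max}\gamma_k$ at the cost of weakening the inequality by a non-negative amount. Collecting the $\bar{\sigma}^2$-coefficients via $2a_k + \gamma_k/c - 2 = 2(a_k - 1) + \gamma_k/c$ and using $a_k \geq 1$, so that $a_k - 1 = (a_k - 1)_+ = \br{\eta_k^{\max}\cL/(2(1-c)) - 1}_+$, produces exactly the two noise terms in the statement. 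The main obstacle I anticipate is the unified lower bound $\eta_k \geq \eta_k^{\max}/a_k$: the two algorithms realize it through different smoothness-based arguments, and checking that the weaker ALS bound is indeed the binding one for both methods is where the analyses must be carefully aligned to arrive at a single clean recurrence.
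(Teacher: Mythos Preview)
Your proposal is correct and follows essentially the same route as the paper's proof: expand the square, use convexity for the inner product, invoke the common inequality $c\eta_k\sqn{\nabla f_{v_k}(x_k)}\leq f_{v_k}(x_k)-f_{v_k}^*$ (the paper packages this as Lemma~\ref{lem:lower_bound}), replace the random $\eta_k$ on the negative term by the deterministic lower bound $\eta_k^{\min}=\eta_k^{\max}/a_k$ and on the positive term by $\eta_k^{\max}$, multiply by $a_k$, use $a_{k+1}\leq a_k$, take conditional expectation, and finally use $\gamma_k\leq 3c/2$ (equivalently $1-\gamma_k/(2c)\geq 1/4$, or your $2(2-\gamma_k/c)\geq 1$) to extract the $\eta_k^{\max}\gamma_k(f(x_k)-f_*)$ term. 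Your bookkeeping of the $\bar\sigma^2$ coefficients via $2a_k+\gamma_k/c-2=2(a_k-1)_+ +\gamma_k/c$ is exactly how the two noise terms in the statement arise, and your verification that the PS lower bound $1/(2c\cL)$ dominates the ALS lower bound $2(1-c)/\cL$ through $(2c-1)^2\geq 0$ is the right way to unify the two methods.
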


Before proving Theorem \ref{thm:rec_adap_step}, we need the following lemma:
\begin{lemma}[\cite{Vaswani19}, \cite{loizou2020stochastic}]\label{lem:lower_bound}
Let $g$ be an $L_g$-smooth function, $\alpha_{\max} > 0$ and $c \in [0, 1]$. If $\alpha \sim \mathbf{SALS}_{c, \alpha_{\max}}(g,x)$ or $\mathbf{SP}_{c, \alpha_{\max}}(g,x)$, then
\begin{align}
\min\left\{\frac{2(1-c)}{L_g},\, \alpha_{\max}\right\} \leq  \alpha \leq \alpha_{\max} \quad \mbox{and} \quad \alpha \sqn{\nabla g(x)} \leq \frac{g(x) - g^*}{c\sqn{\nabla g(x)}}. \label{eq:bnd_alpha_nabla_adap}
\end{align}
\end{lemma}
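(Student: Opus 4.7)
I would handle the two stepsize rules separately, since the ALS stepsize is defined implicitly through a condition and the PS stepsize is defined explicitly. In both cases the lower bound comes from smoothness of $g$, and the inequality $\alpha\sqn{\nabla g(x)} \leq (g(x)-g^*)/c$ then follows essentially by rearranging the defining condition.

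First, for $\alpha \sim \mathbf{SALS}_{c,\alpha_{\max}}(g,x)$: apply the descent lemma (the smoothness inequality~\eqref{eq:smoothn} specialized to $g$) to $g(x-\alpha\nabla g(x))$ to get
\begin{equation*}
g(x-\alpha\nabla g(x)) \leq g(x) - \alpha\sqn{\nabla g(x)} + \tfrac{L_g \alpha^2}{2}\sqn{\nabla g(x)}.
\end{equation*}
Comparing the right-hand side to the Armijo target $g(x)-c\alpha\sqn{\nabla g(x)}$ shows that the Armijo condition~\eqref{eq:armijo} is automatically satisfied as soon as $\alpha \leq 2(1-c)/L_g$. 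Since the backtracking rule selects the \emph{largest} admissible $\alpha$ in $(0,\alpha_{\max}]$, this forces $\alpha \geq \min\{2(1-c)/L_g,\alpha_{\max}\}$, while the upper bound $\alpha \leq \alpha_{\max}$ is part of the definition. For the second inequality, simply rearrange~\eqref{eq:armijo} and use $g(x-\alpha\nabla g(x)) \geq g^*$ to conclude that $c\alpha\sqn{\nabla g(x)} \leq g(x)-g^*$.

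Second, for $\alpha \sim \mathbf{SP}_{c,\alpha_{\max}}(g,x)$: the upper bound and the inequality on $\alpha\sqn{\nabla g(x)}$ are both immediate from the explicit formula~\eqref{eq:sps}, since $\alpha \leq (g(x)-g^*)/(c\sqn{\nabla g(x)})$ by definition. For the lower bound I would apply smoothness to bound $\sqn{\nabla g(x)}$ from above in terms of $g(x)-g^*$: since $g^* \leq g\!\left(x-\tfrac{1}{L_g}\nabla g(x)\right) \leq g(x)-\tfrac{1}{2L_g}\sqn{\nabla g(x)}$, one obtains $\sqn{\nabla g(x)} \leq 2L_g(g(x)-g^*)$. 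Substituting into~\eqref{eq:sps} yields $\alpha \geq \min\{1/(2cL_g),\alpha_{\max}\}$.

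The only remaining step is to reconcile the two lower bounds into the single expression stated in the lemma. This is the one place requiring a small algebraic check: expanding $(2c-1)^2 \geq 0$ gives $1/(2c) \geq 2(1-c)$ for every $c \in (0,1]$, so $1/(2cL_g) \geq 2(1-c)/L_g$ and therefore the PS lower bound is at least the quantity $\min\{2(1-c)/L_g,\alpha_{\max}\}$ claimed in the lemma. I expect this to be the only subtle point; the rest is direct manipulation of the Armijo/Polyak definitions together with smoothness, and no convexity of $g$ is used.
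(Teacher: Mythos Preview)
The paper does not actually prove this lemma: it is stated with citations to \cite{Vaswani19} and \cite{loizou2020stochastic} and then used as a black box in the proof of Theorem~\ref{thm:rec_adap_step}. So there is no ``paper's own proof'' to compare against; your argument is filling in what the paper omits.

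That said, your proposal is correct and is precisely the standard argument from the cited references. The ALS lower bound via the descent lemma, the ALS second inequality via $g(x-\alpha\nabla g(x))\geq g^*$, and the PS second inequality directly from~\eqref{eq:sps} are all exactly right. For the PS lower bound you correctly obtain $\alpha\geq\min\{1/(2cL_g),\alpha_{\max}\}$, which is in fact the bound that appears in \cite{loizou2020stochastic}; your observation that $(2c-1)^2\geq 0$ implies $1/(2c)\geq 2(1-c)$ is the right way to see that this dominates the common lower bound stated in the lemma. Your remark that convexity of $g$ is not used is also accurate. One cosmetic point: the second inequality in~\eqref{eq:bnd_alpha_nabla_adap} as printed has an extra $\sqn{\nabla g(x)}$ in the denominator, but from its use in the proof of Theorem~\ref{thm:rec_adap_step} (and from your derivation) the intended statement is $\alpha\sqn{\nabla g(x)}\leq (g(x)-g^*)/c$, which is what you proved.
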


\begin{proof}[Proof of Theorem \ref{thm:rec_adap_step}]
Let us now prove Theorem \ref{thm:rec_adap_step}.
\begin{align}
\sqn{x_{k+1} - x_*} &= \sqn{x_k - x_*} - 2\eta_k\gamma_k \langle \nabla f_{v_k}(x_k), x_k - x_* \rangle + \eta_k^2\gamma_k^2\sqn{\nabla  f_{v_k}(x_k)}\\
&\overset{\eqref{eq:convexity}}{\leq} \sqn{x_k - x_*} - 2\eta_k\gamma_k \br{f_{v_k}(x_k) - f_{v_k}^*} + \eta_k^2\gamma_k^2\sqn{\nabla  f_{v_k}(x_k)}\\
&\overset{\eqref{eq:bnd_alpha_nabla_adap}}{\leq} \sqn{x_k - x_*} - 2\eta_k\gamma_k \br{f_{v_k}(x_k) - f_{v_k}(x_*)} + \frac{\eta_k\gamma_k^2}{c} \br{f_{v_k}(x_k) - f_{v_k}^*}\\
&= \sqn{x_k - x_*} - 2\eta_k\gamma_k\br{1 - \frac{\gamma_k}{2c}} \br{f_{v_k}(x_k) - f_{v_k}^*} +  2\eta_k\gamma_k\br{f_{v_k}(x_*) - f_{v_k}^*}.
\end{align}
Rearranging, we have
\begin{align}
2\eta_k\gamma_k\br{1 - \frac{\gamma_k}{2c}} \br{f_{v_k}(x_k) - f_{v_k}^*} &\leq \sqn{x_k - x_*} - \sqn{x_{k+1} - x_*} + 2\eta_k\gamma_k\br{f_{v_k}(x_*) - f_{v_k}^*}.
\end{align}
Define $\eta_k^{\min} \eqdef \min\left\{\frac{2(1-c)}{\cL},\, \eta_k^{\max}\right\}$. Then,
\begin{align}
2\eta_k^{\min}\gamma_k\br{1 - \frac{\gamma_k}{2c}} \br{f_{v_k}(x_k) - f_{v_k}^*} &\overset{\eqref{eq:bnd_alpha_nabla_adap}}{\leq} \sqn{x_k - x_*} - \sqn{x_{k+1} - x_*} + 2\eta_k\gamma_k\br{f_{v_k}(x_*) - f_{v_k}^*}.
\end{align}
Hence,
\begin{align} \label{eq:inter_before_multi_dec_sals_sp}
2\eta_k^{\min}\gamma_k\br{1 - \frac{\gamma_k}{2c}} \br{f_{v_k}(x_k) - f_{v_k}(x_*)} &\leq \sqn{x_k - x_*} - \sqn{x_{k+1} - x_*} \\
&+ 2\gamma_k\br{\eta_k -  \br{1 - \frac{\gamma_k}{2c}}\eta_k^{\min}}\br{f_{v_k}(x_*) - f_{v_k}^*}\\
&\leq \sqn{x_k - x_*} - \sqn{x_{k+1} - x_*} \\
&+ 2\gamma_k\br{\eta_k^{\max} -  \br{1 - \frac{\gamma_k}{2c}}\eta_k^{\min}}\br{f_{v_k}(x_*) - f_{v_k}^*}.
\end{align}
Notice that
\begin{align}
\frac{\eta_k^{\max}}{\eta_k^{\min}} = \max\left\{\frac{\eta_k^{\max}\cL}{2(1-c)}, 1\right\}.
\end{align}
Since $\br{\eta_k^{\max}}_k$ is decreasing, $\br{\frac{\eta_k^{\max}}{\eta_k^{\min}}}_k$ is decreasing as well. Hence, multiplying both sides of \eqref{eq:inter_before_multi_dec_sals_sp} by $\frac{\eta_k^{\max}}{\eta_k^{\min}}$,
\begin{align}
2\eta_k^{\max}\gamma_k\br{1 - \frac{\gamma_k}{2c}} \br{f_{v_k}(x_k) - f_{v_k}(x_*)} &\leq \frac{\eta_k^{\max}}{\eta_k^{\min}}\sqn{x_k - x_*} - \frac{\eta_{k+1}^{\max}}{\eta_{k+1}^{\min}}\sqn{x_{k+1} - x_*} \\
&+ 2\gamma_k\eta_k^{\max}\br{\frac{\eta_k^{\max}}{\eta_k^{\min}} - 1 + \frac{\gamma_k}{2c}}\br{f_{v_k}(x_*) - f_{v_k}^*}.
\end{align}
Hence, taking the expectation,
\begin{align}
2\eta_k^{\max}\gamma_k\br{1 - \frac{\gamma_k}{2c}} \br{f(x_k) - f_*} &\leq \frac{\eta_k^{\max}}{\eta_k^{\min}}\sqn{x_k - x_*} - \frac{\eta_{k+1}^{\max}}{\eta_{k+1}^{\min}}\ec[k]{\sqn{x_{k+1} - x_*}} \\
&+ 2\gamma_k\eta_k^{\max}\br{\frac{\eta_k^{\max}}{\eta_k^{\min}} - 1 + \frac{\gamma_k}{2c}}\bar{\sigma}^2.
\end{align}
where $\bar{\sigma}^2 \eqdef f_* - \ec[v]{f_{v}^*}$.
Using the fact that $1 - \frac{\gamma_k}{2c} \geq \frac{1}{4}$ and rearranging, we have
\begin{align}
\frac{2\eta_{k+1}^{\max}}{\eta_{k+1}^{\min}}\ec[k]{\sqn{x_{k+1} - x_*}}  + \eta_k^{\max}\gamma_k \br{f(x_k) - f_*}  &\leq \frac{2\eta_k^{\max}}{\eta_k^{\min}}\sqn{x_k - x_*}\\
&+ 4\gamma_k\eta_k^{\max}\br{\frac{\eta_k^{\max}\cL}{2(1-c)} - 1}_+\bar{\sigma}^2 + \frac{2\gamma_k^2\eta_k^{\max}\bar{\sigma}^2}{c},
\end{align}
where for all $a \in \R, (a)_+ = \max(a, 0)$.
\end{proof}

\subsubsection{Proof of Theorem \ref{thm:adap_asymp}}
\begin{proof}
Using the inequality \eqref{eq:rec_adap_step} from Theorem \ref{thm:rec_adap_step}, the proof of Theorem \ref{thm:adap_asymp} procedes exactly as the proof of Theorem \ref{thm:as_conv_sgd}, with the conditions on the stepsizes of Theorem \ref{thm:adap_asymp} instead of Condition \ref{con:step_sizes}. See Section \ref{sec:sgd_asymp} and \ref{sec:app_proofs_as_conv_sgd}.
\end{proof}

\subsubsection{Proof of Theorem \ref{thm:adap_nonasymp}}
\begin{proof}
Taking the expectation in \eqref{eq:rec_adap_step}, rearranging and summing between $t=0,\dots,k-1$, we have, 
\begin{align}
\sum_{t=0}^{k-1}\eta_t^{\max}\gamma_t \ec{f(x_t) - f_*}  &\leq 2a_0\sqn{x_0 - x_*} - 2a_{k+1}\ecn{x_{k+1} - x_*}\\
&+ 4\sum_{t=0}^{k-1}\gamma_t\eta_t^{\max}\br{\frac{\eta_t^{\max}\cL}{2(1-c)} - 1}_+\bar{\sigma}^2 + \frac{2\sum_{t=0}^{k-1}\gamma_t^2\eta_t^{\max}\bar{\sigma}^2}{c}.
\end{align}
Dividing by $\sum_{j=0}^{k-1}\eta_t^{\max}\gamma_t $ and using Jensen's inequality gives the desired result.
\end{proof}

\section{Proofs for Section \ref{sec:shb_asymptotic}} \label{sec:app_proofs_shb}

\subsection{Proof of Theorem \ref{thm:as_conv_shb}}

In the remainder of this section and the forthcoming lemmas we consider the iterates of \eqref{eq:SHB} and the setting of Theorem \ref{thm:as_conv_shb}, that is
\begin{align}
\lambda_0 = 0, \; \lambda_k = \lambda_k = \frac{\sum_{t=0}^{k-1}\eta_t}{4\eta_k}, \; \alpha_k = \frac{\eta_k}{1+\lambda_{k+1}} \; \mbox{and} \; \beta_k = \frac{\lambda_k}{1+\lambda_{k+1}}, \label{eq:lambdaappendix}
\end{align}
where 
\begin{equation}\label{eq:etarestappendix}
 0<\eta_k \leq 1/4\cL, \sum_k \eta_k^2 \sigma^2 < \infty \quad and \quad \sum_k \eta_k = \infty.
\end{equation} 
Note that from \eqref{eq:SHB_IMA}, we have
\begin{align}\label{eq:zkapp}
z_k = x_k + \lambda_k\br{x_k - x_{k-1}}.
\end{align}
We also assume that Assumption~\ref{asm:smoothconvex} holds throughout.

To make the proof more readable, we first state and prove the two following lemmas.

\begin{lemma}\label{lem:conv_eta_diff_f}
$\sum_k \eta_k\br{f(x_k) - f_*} < +\infty \; \mbox{almost surely}.$
\end{lemma}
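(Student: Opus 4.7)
The plan is to build a Robbins--Siegmund-type recursion and invoke Lemma~\ref{lem:simple_RS}. Writing $\delta_k \eqdef f(x_k) - f_*$, I would aim at a clean inequality of the form
\begin{equation*}
\mathbb{E}_k V_{k+1} + c\,\eta_k \delta_k \;\leq\; V_k + \eta_k^2 \sigma^2
\end{equation*}
for some $c>0$, with Lyapunov
\begin{equation*}
V_k \;\eqdef\; \tfrac{1}{2}\|z_k - x_*\|^2 + \mu_k \delta_{k-1}, \qquad \mu_k \;\eqdef\; \eta_k \lambda_k = \tfrac{1}{4}\sum_{t=0}^{k-1}\eta_t,
\end{equation*}
where $\mu_0 = 0$ handles the $\delta_{-1}$ placeholder. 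Summability of $\eta_k^2 \sigma^2$ from~\eqref{eq:etarestappendix} then yields the a.s.\ convergence of $V_k$ together with the desired $\sum_k \eta_k \delta_k < \infty$ a.s.

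To obtain the recursion I would start from the IMA update $z_{k+1} = z_k - \eta_k \nabla f_{v_k}(x_k)$, expand $\|z_{k+1} - x_*\|^2$, and use the decomposition $z_k - x_* = (x_k - x_*) + \lambda_k (x_k - x_{k-1})$ from~\eqref{eq:zkapp}. Passing to conditional expectation $\mathbb{E}_k[\cdot]$, convexity of $f$ gives the two lower bounds $\langle \nabla f(x_k), x_k - x_*\rangle \geq \delta_k$ and $\langle \nabla f(x_k), x_k - x_{k-1}\rangle \geq \delta_k - \delta_{k-1}$, while Lemma~\ref{lem:smoothconvex} yields $\mathbb{E}_k\|\nabla f_{v_k}(x_k)\|^2 \leq 4\cL \delta_k + 2\sigma^2$. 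Using $\eta_k \leq 1/(4\cL)$ so that $(1+\lambda_k) - 2\eta_k \cL \geq \tfrac12 + \lambda_k$, I would land on
\begin{equation*}
\tfrac{1}{2}\mathbb{E}_k\|z_{k+1}-x_*\|^2 + \eta_k\bigl(\tfrac{1}{2} + \lambda_k\bigr)\delta_k \;\leq\; \tfrac{1}{2}\|z_k-x_*\|^2 + \eta_k \lambda_k\,\delta_{k-1} + \eta_k^2 \sigma^2.
\end{equation*}
Adding $\mu_{k+1}\delta_k$ to both sides and using the identity $\mu_{k+1} - \mu_k = \eta_k/4$ coming from the specific choice $\lambda_k = (\sum_{t<k}\eta_t)/(4\eta_k)$ in~\eqref{eq:lambdaappendix}, the coefficient of $\delta_k$ on the left reduces to $\eta_k(\tfrac{1}{2}+\lambda_k) - \mu_{k+1} = \tfrac{\eta_k}{2} - (\mu_{k+1}-\mu_k) = \tfrac{\eta_k}{4}$, which is exactly the target form displayed above (with $c=1/4$).

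The main obstacle is picking the Lyapunov correctly: convexity applied to $\langle \nabla f(x_k), x_k - x_{k-1}\rangle$ produces the term $-\eta_k\lambda_k \delta_{k-1}$, which is potentially positive (bad sign) once moved to the right-hand side, so without a compensating history term the inequality is not monotone. The choice $\mu_k = \eta_k \lambda_k$ is essentially forced by the two constraints $\mu_k \geq \eta_k\lambda_k$ (to absorb the history contribution) and $\mu_{k+1} \leq \eta_k(\tfrac{1}{2}+\lambda_k)$ (so that the residual $\delta_k$-coefficient stays nonnegative); the explicit scaling of $\lambda_k$ prescribed in~\eqref{eq:lambdaappendix} is precisely what makes these two constraints compatible with a $\eta_k/4$ slack, yielding the summable term needed to close the Robbins--Siegmund argument.
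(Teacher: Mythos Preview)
Your proposal is correct and follows essentially the same route as the paper: expand $\|z_{k+1}-x_*\|^2$ via the IMA update, use convexity twice plus Lemma~\ref{lem:smoothconvex} to reach the inequality~\eqref{eq:midstep_as_conv_shb} (your version is this divided by $2$), then rewrite the coefficient $\eta_k(\tfrac12+\lambda_k)$ using the identity $\eta_{k+1}\lambda_{k+1}=\eta_k\lambda_k+\tfrac{\eta_k}{4}$ and apply Lemma~\ref{lem:simple_RS} with Lyapunov $\|z_k-x_*\|^2+2\eta_k\lambda_k\delta_{k-1}$ (your $2V_k$). The only cosmetic difference is your explicit framing of the compensating history term $\mu_k\delta_{k-1}$ and the constraints that force $\mu_k=\eta_k\lambda_k$; the paper simply performs the algebraic substitution~\eqref{eq:tempsjo4josj48jds} without commentary.
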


\begin{lemma}\label{lem:conv_lambda_diff_x}
$\sum_k \lambda_{k+1} \sqn{x_k - x_{k-1}} < +\infty$, and thus, $\lim_k \lambda_{k+1} \sqn{x_{k+1} - x_{k}} = 0 \; \mbox{almost surely}.$
\end{lemma}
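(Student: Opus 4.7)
My plan is to establish a Robbins--Siegmund inequality (Lemma~\ref{lem:simple_RS}) that isolates $\lambda_{k+1}\|x_{k+1} - x_k\|^2$ as a summable nonnegative term. The natural starting point is the Lyapunov function $V_k = \|z_k - x_*\|^2 + 2\eta_k\lambda_k(f(x_{k-1}) - f_*)$ that should drive the proof of Lemma~\ref{lem:conv_eta_diff_f}. Expanding $\|z_{k+1} - x_*\|^2 = \|z_k - x_* - \eta_k\nabla f_{v_k}(x_k)\|^2$, taking conditional expectations, decomposing $z_k - x_* = (x_k - x_*) + \lambda_k(x_k - x_{k-1})$, and applying the convexity bounds $\langle \nabla f(x_k), x_k - x_*\rangle \geq f(x_k) - f_*$ and $\langle \nabla f(x_k), x_k - x_{k-1}\rangle \geq f(x_k) - f(x_{k-1})$ together with Lemma~\ref{lem:smoothconvex} should yield
\[
\ec[k]{\|z_{k+1} - x_*\|^2} + c\eta_k(f(x_k) - f_*) \leq V_k + 2\eta_k^2 \sigma^2,
\]
where the positive coefficient $c$ is obtained thanks to the choice~\eqref{eq:lambdaappendix}, which satisfies the one-step relation $\eta_{k+1}\lambda_{k+1} - \eta_k\lambda_k = \eta_k/4$, and the stepsize bound $\eta_k \leq 1/(8\cL)$.

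The key new step is to expose the momentum quantity $\lambda_{k+1}\|x_{k+1}-x_k\|^2$ already hidden inside $V_{k+1}$. Using the SHB-IMA identity $z_{k+1} - x_{k+1} = \lambda_{k+1}(x_{k+1} - x_k)$ together with the polarization formula, I would rewrite
\[
\|z_{k+1} - x_*\|^2 = (1+\lambda_{k+1})\|x_{k+1} - x_*\|^2 - \lambda_{k+1}\|x_k - x_*\|^2 + \lambda_{k+1}(1+\lambda_{k+1})\|x_{k+1}-x_k\|^2,
\]
so that the quantity $\lambda_{k+1}(1+\lambda_{k+1})\|x_{k+1} - x_k\|^2 \geq \lambda_{k+1}\|x_{k+1}-x_k\|^2$ is explicit inside $V_{k+1}$. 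Substituting this decomposition into the previous recursion and telescoping the cross-terms $(1+\lambda_{k+1})\|x_{k+1} - x_*\|^2 - \lambda_{k+1}\|x_k - x_*\|^2$, which chain cleanly thanks to $\lambda_{k+1} - \lambda_k \geq 1/4$, should yield a refined Robbins--Siegmund inequality
\[
\ec[k]{\tilde V_{k+1}} + c\eta_k(f(x_k) - f_*) + \lambda_{k+1}\|x_{k+1}-x_k\|^2 \leq \tilde V_k + 2\eta_k^2\sigma^2
\]
for a suitably augmented nonnegative Lyapunov $\tilde V_k$.

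An application of Lemma~\ref{lem:simple_RS} with $Z_k = 2\eta_k^2\sigma^2$, which is summable almost surely by Condition~\ref{con:step_sizes}, then gives $\sum_k \lambda_{k+1}\|x_{k+1}-x_k\|^2 < +\infty$ almost surely; the limit $\lim_k \lambda_{k+1}\|x_{k+1}-x_k\|^2 = 0$ is immediate by nonnegativity of the summands, so reindexing yields both conclusions of the lemma. The main obstacle I foresee is the algebraic bookkeeping: I must verify that substituting the polarization identity into the Lyapunov recursion leaves a strictly positive coefficient on $\lambda_{k+1}\|x_{k+1}-x_k\|^2$ on the left-hand side, without leaving uncontrolled negative contributions such as $-\lambda_{k+1}\|x_k - x_*\|^2$ that fail to telescope into $\tilde V_k$; this relies on the precise growth rate of $\lambda_k$ dictated by~\eqref{eq:lambdaappendix} and on the monotonicity of $(\eta_k)_k$ guaranteed by Condition~\ref{con:step_sizes}.
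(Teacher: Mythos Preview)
Your polarization identity is correct, but the obstacle you flag at the end is fatal to this route, not merely a bookkeeping nuisance. Once you substitute
\[
\|z_{k+1}-x_*\|^2=(1+\lambda_{k+1})\|x_{k+1}-x_*\|^2-\lambda_{k+1}\|x_k-x_*\|^2+\lambda_{k+1}(1+\lambda_{k+1})\|x_{k+1}-x_k\|^2
\]
into the $z$-recursion, whatever Lyapunov $\tilde V_{k+1}$ you retain after peeling off $\lambda_{k+1}\|x_{k+1}-x_k\|^2$ must still be nonnegative for Lemma~\ref{lem:simple_RS} to apply. But the residual $(1+\lambda_{k+1})\|x_{k+1}-x_*\|^2-\lambda_{k+1}\|x_k-x_*\|^2+\lambda_{k+1}^2\|x_{k+1}-x_k\|^2$ is an indefinite quadratic form in $(x_{k+1}-x_*,\,x_k-x_*)$ (its determinant is $-\lambda_{k+1}$), so it can be negative. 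Your hoped-for telescoping does not rescue this either: the cross-terms $(1+\lambda_{k+1})\|x_{k+1}-x_*\|^2-\lambda_{k+1}\|x_k-x_*\|^2$ chain with mismatch $1-(\lambda_{t+1}-\lambda_t)$, and under~\eqref{eq:lambdaappendix} with $\eta_t\propto t^{-\xi}$ one has $\lambda_{t+1}-\lambda_t\approx 1/(4(1-\xi))$, which exceeds $1$ whenever $\xi>3/4$, so the chained coefficients become negative. In short, the $z$-recursion alone carries no independent control of $\|x_{k+1}-x_k\|^2$; the polarization only rewrites $\|z_{k+1}-x_*\|^2$, it does not manufacture a new inequality.

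The paper supplies that missing inequality from a different source: it expands $\|x_{k+1}-x_k\|^2$ directly from the SHB update $x_{k+1}-x_k=\beta_k(x_k-x_{k-1})-\alpha_k\nabla f_{v_k}(x_k)$, multiplies by $(1+\lambda_{k+1})^2$, and after using convexity and~\eqref{eq:expsmooth} obtains a one-step bound of the form
\[
(1+\lambda_{k+1})^2\ec[k]{\|x_{k+1}-x_k\|^2}\le \lambda_k^2\|x_k-x_{k-1}\|^2+\text{(function-value terms)}+2\eta_k^2\sigma^2.
\]
This is then \emph{added} to the $\|z_{k+1}-x_*\|^2$ recursion, producing the manifestly nonnegative Lyapunov $V_k=\|z_k-x_*\|^2+4\eta_k\lambda_k(f(x_{k-1})-f_*)+\lambda_k^2\|x_k-x_{k-1}\|^2$; the gap $(1+\lambda_{k+1})^2-\lambda_{k+1}^2=2\lambda_{k+1}+1$ is exactly what falls out as the summable $U$-term in Lemma~\ref{lem:simple_RS}. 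So the key idea you are missing is a second, independent recursion for the momentum increment obtained straight from the SHB step, rather than trying to squeeze it out of $\|z_{k+1}-x_*\|^2$.
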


We first prove Lemma \ref{lem:conv_eta_diff_f}.
\begin{proof}[Proof of Lemma \ref{lem:conv_eta_diff_f}]
From \eqref{eq:midstep_as_conv_shb}, we have 
\begin{align}
\ec[k]{\sqn{z_{k+1} - x_*}} &\leq \sqn{z_{k} - x_*} - 2\eta_k\br{\frac{1}{2} + \lambda_k}\br{f(x_k) - f_*} \nonumber \\
& + 2\eta_k\lambda_k\br{f(x_{k-1}) - f_*} + 2\eta_k^2\sigma^2. \label{eq:step1_sto}
\end{align}

Using~\eqref{eq:lambdakasymptotic} we have  that
\begin{align} 
 2\eta_k \br{\frac{1}{2} + \lambda_k}  &= 2\eta_k\br{\frac{2\eta_k + \sum_{t=0}^{k-1}\eta_t}{4\eta_k}}\\
 &=2\eta_k\br{\frac{\eta_k}{4\eta_k} + \frac{\sum_{t=0}^{k-1}\eta_t}{4\eta_k}}\\ 
 &= 2\eta_k\br{\frac{1}{4} + \frac{\sum_{t=0}^{k-1}\eta_t}{4\eta_{k+1}} \frac{\eta_{k+1}}{\eta_k}  }\\
 &= \frac{\eta_k}{2} + 2\eta_{k+1}\lambda_{k+1}. \label{eq:tempsjo4josj48jds}
\end{align}
Using~\eqref{eq:tempsjo4josj48jds} in~\eqref{eq:step1_sto} gives
\begin{align}
&  \ec[k]{\sqn{z_{k+1} - x_*} + 2 \eta_{k+1} \lambda_{k+1}\br{f(x_k) - f_*}} + \eta_k\br{f(x_k) - f_*} \\
&\qquad \qquad \leq \sqn{z_k - x_*} + 2\eta_k \lambda_k\br{f(x_{k-1}) - f_*} + 2\eta_k^2\sigma^2.
\end{align}

Hence, applying Lemma \ref{lem:simple_RS} with 
\[V_k = \sqn{z_k - x_*} + 2\eta_k \lambda_k\br{f(x_{k-1}) - f_*} , \; \gamma_k =0, \; U_{k+1} = \eta_k\br{f(x_k) - f_*} \; and \; Z_k = 2\eta_k^2\sigma^2,\]
 we have by~\eqref{eq:etarestappendix} that $$\sum_k \eta_k\br{f(x_k) - f_*} < +\infty \; \textit{almost surely}.$$
\end{proof}

We now turn to prove Lemma \ref{lem:conv_lambda_diff_x}.
\begin{proof}[Proof of Lemma \ref{lem:conv_lambda_diff_x}]
We have,
\begin{align}
\ec[k]{\sqn{x_{k+1} - x_k}} &\overset{\eqref{eq:SHB}}{=} \beta_k^2 \sqn{x_k - x_{k-1}} + \alpha_k^2\sqn{\nabla f_{v_k}(x_k)} - 2\alpha_k \beta_k \langle \nabla f(x_k), x_k - x_{k-1} \rangle.
\end{align}

Multiplying by $\br{1+\lambda_{k+1}}^2$ and using~\eqref{eq:lambdaappendix} we have that $\beta_k = \frac{\lambda_k}{1+\lambda_{k+1}}$ and $\alpha_k = \frac{\eta_k}{1+\lambda_{k+1}}$ and thus
\begin{align}
\br{1+\lambda_{k+1}}^2\ec[k]{\sqn{x_{k+1} - x_k}} &= \lambda_k^2 \sqn{x_k - x_{k-1}} + \eta_k^2\sqn{\nabla f_{v_k}(x_k)} - 2\eta_k \lambda_k \langle \nabla f(x_k), x_k - x_{k-1} \rangle.
\end{align}
Thus using the convexity of $f$ and \eqref{eq:expsmooth}, which follows from Lemma~\ref{lem:smoothconvex}, we have
\begin{align}
\br{1+\lambda_{k+1}}^2\ec[k]{\sqn{x_{k+1} - x_k}} &\leq 
\lambda_k^2 \sqn{x_k - x_{k-1}} +4\eta_k^2 \cL\br{f(x_k) - f_*} + 2\eta_k \lambda_k \br{f(x_{k-1}) - f(x_k)} + 2\eta_k^2\sigma^2 \\
&= \lambda_k^2 \sqn{x_k - x_{k-1}} - 2\eta_k \br{\lambda_k - 2\eta_k \cL}\br{f(x_k) - f_*}\\ 
& \quad + 2\eta_k \lambda_k \br{f(x_{k-1}) - f_*} + 2\eta_k^2\sigma^2.
\end{align}

Re-arranging the above gives,
\begin{align}
& \br{1+\lambda_{k+1}}^2\ec[k]{\sqn{x_{k+1} - x_k}} + 2\eta_k \br{\lambda_k - 2\eta_k \cL}\br{f(x_k) - f_*} \\
& \qquad \qquad  \leq \lambda_k^2 \sqn{x_k - x_{k-1}} + 2\eta_k \lambda_k \br{f(x_{k-1}) - f_*} + 2\eta_k^2\sigma^2. \label{eq:tool_lambda_k^2_sto}
\end{align}
Combining both~\eqref{eq:tool_lambda_k^2_sto} and~\eqref{eq:step1_sto}  we have that
\begin{align}
& \ec[k]{\sqn{z_{k+1} - x_*}} + 4 \eta_k \br{\frac{1}{2} - 2\eta_k\cL + \lambda_k}\br{f(x_k) - f_*} + \br{1+\lambda_{k+1}}^2\ec[k]{\sqn{x_{k+1} - x_k}} \\
& \qquad \qquad \leq \sqn{z_k - x_*} + 4\eta_k \lambda_k\br{f(x_{k-1}) - f_*} + \lambda_k^2 \sqn{x_k - x_{k-1}} + 4\eta_k^2\sigma^2.
\end{align}
Hence, since $\eta_k \leq \frac{1}{8\cL}$,
\begin{align}
& \ec[k]{\sqn{z_{k+1} - x_*}} + 4 \eta_k \br{\frac{1}{4} + \lambda_k}\br{f(x_k) - f_*} + \br{1+\lambda_{k+1}}^2\ec[k]{\sqn{x_{k+1} - x_k}} \\
& \qquad \qquad \leq \sqn{z_k - x_*} + 4\eta_k \lambda_k\br{f(x_{k-1}) - f_*} + \lambda_k^2 \sqn{x_k - x_{k-1}} + 4\eta_k^2\sigma^2.
\end{align}
Using~\eqref{eq:lambdaappendix}, we have $ \eta_k \br{\frac{1}{4} + \lambda_k} = \eta_{k+1}\lambda_{k+1}$. Hence,
\begin{align}
& \sqn{z_{k+1} - x_*} + 4 \eta_{k+1}\lambda_{k+1}\br{f(x_k) - f_*} + \br{1+\lambda_{k+1}}^2\sqn{x_{k+1} - x_k} \\
&\qquad \qquad \leq \sqn{z_k - x_*} + 4\eta_k \lambda_k\br{f(x_{k-1}) - f_*} + \lambda_k^2 \sqn{x_k - x_{k-1}}+ 4\eta_k^2\sigma^2.  \label{eq:step_2_sto}
\end{align}
Hence, noting $V_k \eqdef \sqn{z_k - x_*} + 4\eta_k \lambda_k\br{f(x_{k-1}) - f_*} + \lambda_k^2 \sqn{x_k - x_{k-1}}$, we have
\begin{align}
\ec[k]{V_{k+1}} + \br{2\lambda_{k+1}+1}\sqn{x_k - x_{k-1}} \leq V_k + 4\eta_k^2\sigma^2. 
\end{align}
Hence, since $\sum_k \eta_k^2\sigma^2 < +\infty$, applying Lemma \ref{lem:simple_RS}, we have 

\begin{align}
\sum_k \lambda_{k+1} \sqn{x_k - x_{k-1}} < +\infty \quad \textit{almost surely} \mbox{,   thus} \quad  \lim_k \lambda_{k+1} \sqn{x_{k+1} - x_{k}} = 0 \quad \textit{almost surely}.
\end{align}
\end{proof}

We can now prove Theorem \ref{thm:as_conv_shb}.
\begin{proof}[Proof of Theorem \ref{thm:as_conv_shb}]
This proof aims at proving that, \textit{almost surely}
\begin{enumerate}
\item $x_k \underset{k \rightarrow +\infty}{\rightarrow} x_*$ for some $x_* \in \X_*$.
\item $f(x_k) - f_* = o\br{\frac{1}{\sum_{t=0}^{k-1} \eta_t}}$.
\end{enumerate}
In our road to prove the first point, we will prove the second point as a byproduct.

We will now prove that $\lim_k \sqn{z_k - x_*}$ exists \textit{almost surely}.
\begin{eqnarray*}
\sqn{z_k - x_*} &\overset{\eqref{eq:zkapp}}{ =} & \sqn{x_k - x_* + \lambda_k\br{x_k - x_{k-1}}}\\
&=& \lambda_k^2\sqn{x_k - x_{k-1}} + 2\lambda_k\langle x_k - x_*, x_k - x_{k-1}\rangle + \sqn{x_k - x_*}\\
&=& \br{\lambda_k^2 + \lambda_k}\sqn{x_k - x_{k-1}} + \lambda_k \br{\sqn{x_k - x_*} - \sqn{x_{k-1} - x_*}} + \sqn{x_k - x_*}.
\end{eqnarray*}
Define 
\begin{align} \label{eq:delta_kapp}
\delta_k \eqdef \lambda_k \br{\sqn{x_k - x_*} - \sqn{x_{k-1} - x_*}} + \sqn{x_k - x_*},
\end{align}
so that
\begin{align} \label{eq:iterateconvapp}
\sqn{z_k - x_*} = \br{\lambda_k^2 + \lambda_k}\sqn{x_k - x_{k-1}} + \delta_k.
\end{align}
We will first prove that $\lim_k \br{\lambda_k^2 + \lambda_k}\sqn{x_k - x_{k-1}}$ exists \textit{almost surely}, then that $\lim_k \delta_k$ exists \textit{almost surely}.

\  First, we have from Lemma \ref{lem:conv_lambda_diff_x} that $(\lambda_k \sqn{x_k - x_{k-1}})_k$ converges to zero \textit{almost surely}. Hence, it remains to show that $\lim_k \lambda_k^2 \sqn{x_k - x_{k-1}}$ exists \textit{almost surely}.
From \eqref{eq:tool_lambda_k^2_sto}, we have that 
\begin{align}
&\lambda_{k+1}^2\ec[k]{\sqn{x_{k+1} - x_k}} + 2\eta_k\br{\lambda_k - 2\eta_k\cL}\br{f(x_k) - f_*}\\
&\qquad \qquad \leq \lambda_k^2 \sqn{x_k - x_{k-1}} + 2\eta_k \lambda_k \br{f(x_{k-1}) - f_*} + 2\eta_k^2\sigma^2. 
\end{align}
Using \eqref{eq:lambdaappendix} and the fact that $\eta_k \leq \frac{1}{8\cL}$, we have that $2\eta_{k+1}\lambda_{k+1} = 2\eta_k\br{\frac{1}{4} + \lambda_k} \leq 2\eta_k\br{\frac{1}{2} - 2\eta_k \cL + \lambda_k}$. Hence, $2\eta_k\br{\lambda_k - 2\eta_l\cL} \geq 2\eta_{k+1}\lambda_{k+1} - \eta_k$. Therefore, denoting 
\begin{align}
d_k \eqdef \sqn{x_k - x_{k-1}} \; \mbox{and} \; \theta_k \eqdef 2\eta_k\br{f(x_{k-1}) - f_*},
\end{align}
we have
\begin{align}
\ec[k]{\lambda_{k+1}^2d_k + \lambda_{k+1}\theta_{k+1}} \leq \lambda_{k}^2d_k + \lambda_{k}\theta_{k} + \eta_k\br{f(x_k) - f_*} + 2\eta_k^2\sigma^2.
\end{align}

From Lemma \ref{lem:conv_eta_diff_f}, we have $\sum_k \eta_k\br{f(x_k) - f_*} < +\infty$. Moreover, $\sum_k \eta_k^2 \sigma^2 < +\infty$. Hence, we have by Lemma \ref{lem:simple_RS} that $\lim_k \lambda_{k}^2d_k + \lambda_{k}\theta_{k}$ exists \textit{almost surely}.

Moreover, by Lemma \ref{lem:conv_lambda_diff_x}, $\sum_k \lambda_k d_k < + \infty$, and we have $\sum_k \theta_k < +\infty$ \textit{almost surely}. Hence, ${\sum_k \lambda_kd_k + \theta_k < +\infty}$ \textit{a.s}. Rewriting 
\begin{align}
\lambda_kd_k + \theta_k = \frac{1}{\lambda_k}\br{\lambda_k^2d_k + \lambda_k\theta_k},
\end{align}
we have, since $\lim_k \lambda_{k}^2d_k + \lambda_{k}\theta_{k}$ exists \textit{almost surely} and 
\begin{eqnarray*}
\sum_k \frac{1}{\lambda_k} & \overset{0<\eta_k \leq \frac{1}{4\cL}}{ =} &4\sum_k\frac{\eta_k}{\sum_{t=0}^{k-1}\eta_t} \; \overset{\mbox{Condition } \ref{con:step_sizes}}{=} \; \infty
\end{eqnarray*}
that
\begin{align}
\lim_k \lambda_{k}^2d_k + \lambda_{k}\theta_{k} = 0,
\end{align}
which means that both $\lim_k \lambda_{k}^2d_k = 0$ and $\lim_k \lambda_{k}\theta_{k} = 0$ \textit{a.s}. Writing out $\lim_k \lambda_{k}\theta_{k} = 0$ explicitly, we have 
\begin{align}
\boxed{f(x_k) - f_* = o\br{\frac{1}{\sum_{t=0}^{k-1}\eta_t}} \; \textit{almost surely}}
\end{align}
This proves the second point of Theorem \ref{thm:as_conv_shb} and that $\lim_k \lambda_k^2d_k = 0 \; \textit{almost surely}$.
To prove the first point of Theorem \ref{thm:as_conv_shb} 
it remains to show that $\lim_k \delta_k$ exists \textit{almost surely}.

Note $u_k = \sqn{x_k - x_*}$. We have
\begin{eqnarray}
u_{k+1} &\overset{\eqref{eq:SHB}}{ =}& \sqn{x_k - x_* + \beta_k \br{x_k - x_{k-1}}} + \alpha_k^2 \sqn{\nabla f_{v_k}(x_k)} - 2\alpha_k\beta_k\langle \nabla f_{v_k}(x_k), x_k - x_{k-1} \rangle \\
&& - 2\alpha_k \langle \nabla f_{v_k}(x_k), x_k - x_* \rangle. \nonumber
\end{eqnarray}
Taking expectation conditioned on $x_k$, and using the convexity of $f$ and Lemma~\ref{lem:smoothconvex} we have that
\begin{align}
\ec[k]{u_{k+1}}&\leq  \sqn{x_k - x_* + \beta_k \br{x_k - x_{k-1}}} - 2\alpha_k\br{1 + \beta_k - 2\alpha_k \cL}\br{f(x_k) - f_*}\\
& + 2\alpha_k \beta_k\br{f(x_{k-1}) - f_*} + 2\alpha_k^2\sigma^2. \label{eq:tempao38j9aj3}
\end{align}
Furthermore note that
\begin{align}
\sqn{x_k - x_* + \beta_k \br{x_k - x_{k-1}}} &= u_k + \beta_k^2 d_k + 2\beta_k \langle x_k - x_*, x_{k} - x_{k-1}\rangle \\
&= u_k + \br{\beta_k^2 + \beta_k}d_k + \beta_k\br{u_k - u_{k-1}}. \label{eq:tempao3jaj3a}
\end{align}

Hence, using the fact that $0 \leq \beta_k \leq 1$ and inserting~\eqref{eq:tempao3jaj3a} into~\eqref{eq:tempao38j9aj3} gives
\begin{align}
\ec[k]{u_{k+1}} &\leq u_k + 2 d_k + \beta_k\br{u_k - u_{k-1}} - 2\alpha_k\br{1 + \beta_k - 2\alpha_k \cL}\br{f(x_k) - f_*} \\
& + 2\alpha_k \beta_k\br{f(x_{k-1}) - f_*} + 2\alpha_k^2\sigma^2.
\end{align}
Multiplying the above by $(1 + \lambda_{k+1})$, rearranging and using~\eqref{eq:alphafrometalambdaintro} results in
\begin{align}
(1+\lambda_{k+1})\ec[k]{u_{k+1} - u_k}  &\leq 2 \br{1+\lambda_{k+1}} d_k + \lambda_k\br{u_k - u_{k-1}} - 2\eta_k\br{1 + \beta_k - 2\alpha_k \cL}\br{f(x_k) - f_*} \\
& + 2\eta_k \beta_k\br{f(x_{k-1}) - f_*} + 2\frac{\eta_k^2\sigma^2}{1 + \lambda_{k+1}}. \label{eq:tempjao83jaaj93}
\end{align}
 Using the definition of $\delta_{k+1}$ given in~\eqref{eq:delta_kapp}  we have that
 $$\delta_{k+1} - \delta_k = \br{1+\lambda_{k+1}}\br{u_{k+1} - u_k} - \lambda_k \br{u_k - u_{k-1}},$$ 
which we use to re-write~\eqref{eq:tempjao83jaaj93} as 
 we have
\begin{align}
	 \delta_{k+1} +\frac{\eta_k}{\eta_{k+1}}\br{1 + \beta_k - 2\alpha_k \cL}\theta_{k+1}  \leq \delta_k +2 \br{1+\lambda_{k+1}} d_k  + \beta_k\theta_k + 2\frac{\eta_k^2\sigma^2}{1 + \lambda_{k+1}}.
\end{align}
Hence, since $\eta_{k+1} \leq \eta_k$,
\begin{align}
\ec[k]{\delta_{k+1} + \br{1+\beta_k - 2\alpha_k\cL}\theta_{k+1}} \leq \delta_{k} + \beta_k\theta_{k} + 2\br{1+\lambda_{k+1}}d_k + 2\frac{\eta_k^2}{1+\lambda_{k+1}}\sigma^2.
\end{align}
And since,
\begin{align}
1 + \beta_k - 2\alpha_k\cL = 1 + \frac{\lambda_k}{1+ \lambda_{k+1}} - \frac{2\eta_k\cL}{1+ \lambda_{k+1}} &= \frac{1}{1+ \lambda_{k+1}}\br{1 + \lambda_{k+1} + \lambda_k - 2\eta_k\cL}\\
&\geq \frac{\lambda_{k+1}}{1+ \lambda_{k+1}} \geq \frac{\lambda_{k+1}}{1+ \lambda_{k+2}} = \beta_{k+1},
\end{align}
we have
\begin{align}
\ec[k]{\delta_{k+1} + \beta_{k+1}\theta_{k+1}} \leq \br{\delta_{k} + \beta_k\theta_{k}} + 2\br{1+\lambda_{k+1}}d_k + \frac{2\eta_k^2}{1+\lambda_{k+1}}\sigma^2.
\end{align}
Since by Lemma \ref{lem:conv_lambda_diff_x} we have that $\sum_k 2\br{1+\lambda_{k+1}}d_k < +\infty \; \textit{almost surely}$, and $\sum_k \frac{\eta_k^2\sigma^2}{1+\lambda_{k+1}} < +\infty$, we have by Lemma \ref{lem:simple_RS} that $\lim_k \delta_{k} + \beta_k\theta_{k} $ exists \textit{almost surely} And since $\lim_k \beta_k \theta_k = 0 \; \textit{almost surely}$, we deduce that $\lim_k \delta_k$ exists \textit{almost surely}.

Thus we have now shown that $\lim_k \sqn{z_k - x_*}$ exists \textit{almost surely}. Therefore, since $x_k - x_* = z_k - x_* - \lambda_k\br{x_k - x_{k-1}}$ and
\begin{align}
|\norm{x_k - x_*} - \norm{z_k - x_*}| \leq \lambda_k\norm{x_k - x_{k-1}} \underset{k \rightarrow +\infty}{\rightarrow} 0\quad  \textit{almost surely},
\end{align}
we have that $\lim_k \norm{x_k - x_*} - \norm{z_k - x_*}$ exists \textit{almost surely}, and so does $\lim_k \norm{x_k - x_*}$.

We also have that both $\norm{z_k - x_*}$ and $\lambda_k\norm{x_k - x_{k-1}}$ are bounded \textit{almost surely}, thus $\norm{x_k - x_*}$ is bounded \textit{almost surely} Hence, $(x_k)_k$ is bounded \textit{almost surely}, thus \textit{almost surely} sequentially compact. 

Let $\br{x_{n_k}}_k$ be a subsequence of $\br{x_n}_n$ which converges to some $x \in \R^d \; a.s$. Since $f(x_n) \rightarrow_n f_* \; \textit{almost surely}$  for all $x^* \in \argmin f$, we have $x \in \argmin f \; a.s$. Finally, applying Lemma 2.39 in \cite{Bauschke_convex} (restricted to our finite dimensional setting, where weak convergence and strong convergence are equivalent), there exists $x_* \in \argmin f$ such that
\begin{align}
\boxed{x_k \underset{k \rightarrow +\infty}{\rightarrow} x_*, \quad \textit{almost surely}}
\end{align}

This proves the first point of Theorem \ref{thm:as_conv_shb}.
\end{proof}

\section{Proofs for Section \ref{sec:sgd_nonconvex}} \label{sec:app_proofs_as_conv_sgd_nonconvex}
\subsection{Proof of Theorem \ref{thm:as_conv_sgd_nonconvex}}
\begin{proof}
Consider the setting of Theorem \ref{thm:as_conv_sgd_nonconvex}. Let $\eta_k \leq \frac{1}{LB}$ for all $k \in \N$. From \cite[Proof of Lemma 2]{Khaled20}, we have
\begin{align}
\ec[k]{f(x_{k+1}) - f_*} + \eta_k \sqn{\nabla f(x_k)} \leq \br{1+\eta_k^2 AL}\br{f(x_{k}) - f_*} + \frac{\eta_k^2 LC}{2}.
\end{align}
Since $\sum_k \eta_k^2 < \infty$, we also have that $\prod_{k=0}^\infty (1+\eta_k^2 AL) < \infty$. Thus, by Lemma \ref{lem:simple_RS}, we have that $\br{f(x_{k}) - f_*}_k$ converges \textit{almost surely}.

Define for all $k \in \N$,
\begin{align}
w_k = \frac{2\eta_k}{\sum_{j=0}^k \eta_j}, \quad g_0 = \sqn{\nabla f(x_0)}, \quad g_{k+1} = (1 - w_k) g_k + w_k \sqn{\nabla f(x_k)}.
\end{align}
Note that since $\br{\eta_k}_k$ is decreasing, $w_k \in [0, 1]$. Plugging this back in the previous inequality gives
\begin{align}
\ec[k]{f(x_{k+1}) - f_*} + \frac{\sum_{j=0}^k\eta_j}{2}g_{k+1} + \frac{\eta_k}{2}g_k  \leq \br{1+\eta_k^2 AL}\br{f(x_{k}) - f_*} + \frac{\sum_{j=0}^{k-1}\eta_j}{2}g_{k} + \frac{\eta_k^2 LC}{2}.
\end{align}
Since $\sum_k \eta_k^2 < \infty$, we also have that $\prod_{k=0}^\infty (1+\eta_k^2 AL) < \infty$. Thus, by Lemma \ref{lem:simple_RS}, we have
\begin{align}
&\br{f(x_k) - f_* + \br{\sum_{j=0}^{k-1}\eta_j}g_{k}}_k \; \mbox{converge \textit{almost surely}, and} \; \sum_k \eta_k g_k < \infty \mbox{ \textit{almost surely}}
\end{align}
And since $\br{f(x_k) - f_*}_k$ converges \textit{almost surely}, we have that $\br{\br{\sum_{j=0}^{k-1}\eta_j}g_{k}}_k$ converges \textit{almost surely}.
Hence, we have that $\lim_k \frac{\eta_k}{\sum_{j=0}^{k-1} \eta_j} \sum_{j=0}^{k-1} \eta_j g_k = \lim_{k} \eta_k g_k = 0$. But since we assumed that $\sum_k \frac{\eta_k}{\sum_{j=0}^{k-1} \eta_j}$ diverges, this implies that $\lim_k \sum_{j=0}^{k-1}\eta_j g_k = 0$, that is, we have that,
\begin{align}
g_k = o\br{\frac{1}{\sum_{j=0}^{k-1}\eta_j}} \mbox{ almost surely}
\end{align}
But since for all $k\in \N, g_{k+1} = (1 - w_k) g_k + w_k \sqn{\nabla f(x_k)}$, $g_k$ is a weighted average of all past $\sqn{\nabla f(x_j)}, j=0,\dots,k-1$. Hence, there exists a sequence $\br{\tilde{w}_j}_j$ in $[0, 1]$ which verifies $\sum_{j=0}^{k-1}\tilde{w}_j = 1$  such that $g_k = \sum_{j=0}^{k-1} \tilde{w}_j \sqn{\nabla f(x_j)}$. Thus, $g_k \geq \min_{t=0,\dots,k-1} \sqn{\nabla f(x_t)} \geq 0$. Hence we have \emph{almost surely}
\begin{align}
\boxed{ \min_{t=0,\dots,k-1} \sqn{\nabla f(x_t)} = o\br{\frac{1}{\sum_{j=0}^{k-1}\eta_j}}}
\end{align}

\end{proof}

\section{Extension of our results to the nonsmooth setting}\label{sec:app_nonsmooth}\label{sec:app_extension_nonsmooth}
In this section, we will consider the stochastic subgradient descent method under the bounded gradients assumption, as in \cite{Nemirovski09} . Under this assumption, we show that we can derive the same convergence rates as in Theorem \ref{thm:as_conv_sgd}.

\begin{proposition}
Consider the following method: at each iteration $k$, let $g_k$ be such that $\ec[k]{g_k} = g(x_k)$ for some $g(x_k) \in \partial f(x_k)$, and update
\begin{align}
x_{k+1} = x_k - \eta_k g_k,
\end{align}
where we assume that $f$ is convex and that there exists $G$ such that $\forall k \in \N, \; \ecn{g_k} \leq G$.

Choose step sizes $\br{\eta_k}_k$ which verify Condition \ref{con:step_sizes} (with $G$ in place of $\sigma^2$). Define for all $k \in \N$
\begin{eqnarray}
w_k = \frac{2\eta_k}{\sum_{j=0}^{k}\eta_j} \quad \mbox{and} \quad \left\{
            \begin{array}{l}
            		\bar{x}_0 = x_0 \\
            		\bar{x}_{k+1} = w_k x_k + (1 - w_k) \bar{x}_k.
            \end{array} \right.  \label{eq:w_xbar_ns}
\end{eqnarray}
Then, we have $a.s$. that $f(\bar{x}_k) - f_* = o\br{\frac{1}{\sum_{t=0}^{k-1} \eta_t}}.$
\end{proposition}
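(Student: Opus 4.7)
The plan is to mirror the proof of Theorem~\ref{thm:as_conv_sgd} almost verbatim, since the essential driver of that argument is a recursion of the form
\[
\ec[k]{\sqn{x_{k+1}-x_*}} + c_1 \eta_k\br{f(x_k)-f_*} \leq \sqn{x_k - x_*} + c_2 \eta_k^2 \cdot (\text{noise}),
\]
and the smoothness hypothesis in Assumption~\ref{asm:smoothconvex} is only used to produce such a recursion via inequality~\eqref{eq:expsmooth}. Here the bounded-gradient hypothesis $\ec[k]{\sqn{g_k}} \le G$ plays exactly that role.

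First I would expand $\sqn{x_{k+1}-x_*} = \sqn{x_k - x_*} - 2\eta_k \langle g_k, x_k - x_*\rangle + \eta_k^2 \sqn{g_k}$, take $\ec[k]{\cdot}$, and use $\ec[k]{g_k}=g(x_k)\in\partial f(x_k)$ together with the subgradient inequality $\langle g(x_k), x_k - x_*\rangle \geq f(x_k)-f_*$ to obtain
\[
\ec[k]{\sqn{x_{k+1}-x_*}} + 2\eta_k \br{f(x_k)-f_*} \le \sqn{x_k-x_*} + \eta_k^2 G.
\]
This is the direct analogue of~\eqref{eq:sgd_rec_main} with the role of $2\sigma^2$ replaced by $G$ and the multiplicative constant $\eta_k$ replaced by $2\eta_k$. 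A first application of Lemma~\ref{lem:simple_RS}, valid because $\sum_k \eta_k^2 G < \infty$ under Condition~\ref{con:step_sizes} (applied with $G$ in place of $\sigma^2$), yields that $\br{\sqn{x_k-x_*}}_k$ converges almost surely.

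Next I would introduce the averaged iterates $\bar{x}_k$ defined in~\eqref{eq:w_xbar_ns} and verify, exactly as in the proof of Theorem~\ref{thm:as_conv_sgd}, that $w_k \in [0,1]$ for $k \ge 1$ (using that $\br{\eta_k}_k$ is decreasing, so $\eta_k \le \sum_{j=0}^{k-1}\eta_j$), with the edge case $w_0 = 2$ handled by $\bar{x}_1 = x_0$. Jensen's inequality then gives
\[
f(x_k)-f_* \geq \frac{1}{w_k}\br{f(\bar x_{k+1})-f_*} - \br{\frac{1}{w_k}-1}\br{f(\bar x_k)-f_*},
\]
and substituting into the recursion above (and replacing $w_k$ by its value) produces the telescoping-style inequality
\[
\ec[k]{\sqn{x_{k+1}-x_*}} + \sum_{j=0}^{k}\eta_j \br{f(\bar x_{k+1})-f_*} + \eta_k\br{f(\bar x_k)-f_*} \leq \sqn{x_k-x_*} + \sum_{j=0}^{k-1}\eta_j\br{f(\bar x_k)-f_*} + \eta_k^2 G.
\]

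Finally I would apply Lemma~\ref{lem:simple_RS} a second time to conclude that $\br{\sum_{j=0}^{k-1}\eta_j(f(\bar x_k)-f_*)}_k$ converges almost surely and $\sum_k \eta_k(f(\bar x_k)-f_*) < \infty$ almost surely, so in particular $\eta_k(f(\bar x_k)-f_*) \to 0$. Combining convergence of the weighted sequence with the divergence condition $\sum_k \eta_k / \sum_{j=0}^{k-1}\eta_j = \infty$ from Condition~\ref{con:step_sizes} forces the limit of $\sum_{j=0}^{k-1}\eta_j(f(\bar x_k)-f_*)$ to be zero, i.e.\ $f(\bar x_k)-f_* = o\br{1/\sum_{t=0}^{k-1}\eta_t}$ almost surely. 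There is no real obstacle here beyond transcribing the smooth-case argument; the only point requiring minor care is the replacement of the expected-smoothness bound~\eqref{eq:expsmooth} by the uniform subgradient bound, which changes only absolute constants in the recursion and leaves Condition~\ref{con:step_sizes}'s hypotheses sufficient (since $G$ is constant in $k$ so $\sum_k \eta_k^2 G <\infty$ follows directly).
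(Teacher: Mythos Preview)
Your proposal is correct and follows essentially the same route as the paper's own proof: expand the squared distance, use the subgradient inequality and the bounded-gradient assumption to obtain the recursion $\ec[k]{\sqn{x_{k+1}-x_*}} + 2\eta_k(f(x_k)-f_*) \le \sqn{x_k-x_*} + \eta_k^2 G$, then insert the averaged iterates via Jensen and apply Lemma~\ref{lem:simple_RS} together with the divergence condition in Condition~\ref{con:step_sizes}. Your handling of the two successive applications of Robbins--Siegmund (first to isolate convergence of $\sqn{x_k-x_*}$, then to the telescoped inequality) is in fact slightly more explicit than the paper's write-up of the nonsmooth case, but the argument is the same.
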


\begin{proof}
The proof procedes exactly as in the smooth case, but with replacing the bound \eqref{eq:expsmooth} by the bound $\forall k \in \N, \; \ecn{g_k} \leq G$. Indeed, expanding the squares we have that
\begin{align}
\sqn{x_{k+1} - x_*} = \sqn{x_{k} - x_*} - 2 \eta_k \langle g_k, x_k - x_* \rangle + \eta_k^2\sqn{g_k}.
\end{align}
Then taking conditional expectation $\ec[k]{\cdot} \eqdef \ec{\cdot \; | \; x_k}$ gives, since $\ec[k]{g_k} = g(x_k)$ for some $g(x_k) \in \partial f(x_k)$, we have
\begin{eqnarray*}
\ec[k]{\sqn{x_{k+1} - x_*}} &=& \sqn{x_{k} - x_*} - 2 \eta_k \langle g(x_k), x_k - x_* \rangle + \eta_k^2\ec[k]{\sqn{g_k}} \\
&\leq& \sqn{x_{k} - x_*} - 2 \eta_k\br{f(x_k) - f_*} + \eta_k^2G,
\end{eqnarray*}
where we used in the last inequality the fact that $g(x_k)$ is a subgradient of $f$ at $x_k$, and that $\ec[k]{\sqn{g_k}} \leq G$.
Hence, rearranging, we have
\begin{align}\label{eq:main_rec_sgd_ns}
\ec[k]{\sqn{x_{k+1} - x_*}} + 2\eta_k \br{f(x_k) - f_*} \leq \sqn{x_{k} - x_*} + \eta_k^2 G.
\end{align}

From~\eqref{eq:w_xbar_ns} we have that $w_k = \frac{2\eta_k}{\sum_{j=0}^{k}\eta_j}$. Since $w_0 = 2\frac{\eta_0}{\eta_0}=2$ we have that $\bar{x}_1 = 2 x_0 - \bar{x}_0 = 2x_0 - x_0 = x_0$. Hence, it holds that
\begin{align}\label{eq:init_verification_weights_ns}
f(\bar{x}_1) - f_* = f(x_0) - f_* = w_0 \br{f(x_0) - f_*} + (1 - w_0)\br{f(\bar{x}_0) - f_*} .
\end{align}

Now for $k \in \N^*$ we have that following equivalence 
\[w_k \in [0,1] \quad  \iff \quad 2\eta_k \leq \sum_{j=0}^{k} \eta_j  \quad  \iff \quad \eta_k \leq \sum_{j=0}^{k-1}\eta_j.\]

The right hand side of the equivalence holds because $\br{\eta_k}_k$ is a decreasing sequence.
 Hence, by Jensen's inequality, we have $\forall k \in \N^*,$
\begin{align}
f(\bar{x}_{k+1}) - f_* \leq w_k \br{f(x_k) - f_*} + (1-w_k) \br{f(\bar{x}_k) - f_*}.
\end{align}

Together with \eqref{eq:init_verification_weights_ns}, this shows that the last inequality holds for all $k \in \N$. Thus,
\begin{align}
\eta_k\br{f(x_k) - f_*} \geq \frac{\eta_k}{w_k}\br{f(\bar{x}_{k+1}) - f_*} - \eta_k\br{\frac{1}{w_k} - 1}\br{f(\bar{x}_{k}) - f_*}.
\end{align}
Replacing this expression in \eqref{eq:main_rec_sgd_ns} gives:
\begin{align}
&\ec[k]{\sqn{x_{k+1} - x_*}} +  \frac{\eta_k}{w_k}\br{f(\bar{x}_{k+1}) - f_*}\\
 &\quad \quad \leq \sqn{x_{k} - x_*} + \eta_k\br{\frac{1}{w_k} - 1}\br{f(\bar{x}_{k}) - f_*}
+  \eta_k^2G.
\end{align}
 Hence substituting in the definition of $w_k$ from~\eqref{eq:w_xbar_ns} gives
\begin{align}
&\ec[k]{\sqn{x_{k+1} - x_*}} +  \sum_{j=0}^{k}\eta_j\br{f(\bar{x}_{k+1}) - f_*}\\
 &\quad\quad\leq \sqn{x_{k} - x_*} + \br{\sum_{j=0}^{k-1}\eta_j - \eta_k}\br{f(\bar{x}_{k}) - f_*}
+ \eta_k^2 G.
\end{align}
Thus re-arranging
\begin{align}
&\ec[k]{\sqn{x_{k+1} - x_*}} + \sum_{j=0}^{k}\eta_j\br{f(\bar{x}_{k+1}) - f_*} + \eta_k\br{f(\bar{x}_{k}) - f_*} \\
 &\quad\quad\leq \sqn{x_{k} - x_*} + \sum_{j=0}^{k-1}\eta_j\br{f(\bar{x}_{k}) - f_*}
+  \eta_k^2G,
\end{align}
which, by Lemma \ref{lem:simple_RS}, has the three following consequences:
\begin{align}
&(\sqn{x_k - x_*})_k \; \mbox{and } \br{\sum_{j=0}^{k}\eta_j\br{f(\bar{x}_{k+1}) - f_*}}_k \; \mbox{converge \textit{almost surely}},\\
& \mbox{and} \; \sum_k \eta_k\br{f(\bar{x}_k) - f_*} < \infty.
\end{align}
Hence, we have that $\lim_k \frac{\eta_k}{\sum_{j=0}^{k-1} \eta_j} \sum_{j=0}^{k-1} \eta_j\br{f(\bar{x}_k) - f_*} = \lim_{k} \eta_k \br{f(\bar{x}_k) - f_*} = 0$. But since we assumed that $\sum_k \frac{\eta_k}{\sum_{j=0}^{k-1} \eta_j}$ diverges, this implies that $\lim_k \sum_{j=0}^{k-1}\eta_j\br{f(\bar{x}_{k+1}) - f_*} = 0$, that is
\begin{align}
\boxed{f(\bar{x}_k) - f_* = o\br{\frac{1}{\sum_{j=0}^{k-1}\eta_j}}}
\end{align}
\end{proof}

\section{Convergence rates for SHB in expectation without the bounded gradients and bounded gradient variance assumptions}\label{sec:app_shb_exp}

Our first theorem provides a non-asymptotic upper bound on the suboptimality given any sequence of step sizes. Later we develop special cases of this theorem through different choices of the stepsizes.

\begin{theorem}\label{theo:shb_convex}
Let Assumption \ref{asm:smoothconvex} hold. Let  $x_{-1} = x_0$. Consider the iterates of~\ref{eq:SHB_IMA}.  
Let $(\eta_k)_k$ be such that $0< \eta_k \leq \frac{1}{4\cL}$ for all $k \in \N$. Define 
$\lambda_0 \eqdef 0$ and $\lambda_{k} = \frac{\sum_{t=0}^{k-1}\eta_t}{2\eta_{k}} \text{   for } k \geq 1.$ Then,
\begin{align}
\ec{f(x_k) - f_*} \leq \frac{\sqn{x_0 - x^*}}{\sum_{t=0}^{k}\eta_t } + 2\sigma^2\frac{\sum_{t=0}^{k}\eta_t^2}{\sum_{t=0}^{k}\eta_t}. \label{eq:master_eq_convex}
\end{align}
\end{theorem}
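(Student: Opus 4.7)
The plan is to follow a Lyapunov/potential-function strategy, viewing the method in its \ref{eq:SHB_IMA} form so that the $z_k$-sequence plays the role of the descent variable while $x_k$ is the averaged iterate we track. First I would expand $\sqn{z_{k+1}-x_*}$ using the $z$-update $z_{k+1}=z_k-\eta_k \nabla f_{v_k}(x_k)$, take conditional expectation given the history, and then decompose the resulting inner product via the IMA identity
\begin{align*}
z_k - x_* \;=\; (x_k - x_*) + \lambda_k (x_k - x_{k-1}),
\end{align*}
which follows directly from $x_k = \tfrac{\lambda_k}{\lambda_k+1}x_{k-1} + \tfrac{1}{\lambda_k+1}z_k$. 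Applying convexity to each piece yields $\langle \nabla f(x_k), x_k - x_*\rangle \geq f(x_k)-f_*$ and $\langle \nabla f(x_k), x_k - x_{k-1}\rangle \geq f(x_k)-f(x_{k-1})$. I would then bound the stochastic gradient norm by Lemma~\ref{lem:smoothconvex} and use $\eta_k \leq 1/(4\cL)$ to absorb the $4\eta_k^2\cL(f(x_k)-f_*)$ term into $\eta_k(f(x_k)-f_*)$.

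The outcome of this computation should be a one-step recursion of the form
\begin{align*}
\ec[k]{\sqn{z_{k+1}-x_*}} + (\eta_k + 2\eta_k\lambda_k)\br{f(x_k)-f_*} \leq \sqn{z_k - x_*} + 2\eta_k\lambda_k\br{f(x_{k-1})-f_*} + 2\eta_k^2\sigma^2.
\end{align*}
The key algebraic step is to pick $\lambda_{k+1}$ so that the positive term at time $k+1$ exactly matches the negative term at time $k$, i.e.\ $2\eta_{k+1}\lambda_{k+1} = \eta_k + 2\eta_k\lambda_k$. Starting from $\lambda_0=0$ this recurrence solves uniquely to $\lambda_k = \tfrac{1}{2\eta_k}\sum_{t=0}^{k-1}\eta_t$, which is exactly the prescription in the statement. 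With this choice, the potential $\Phi_k \eqdef \sqn{z_k - x_*} + 2\eta_k\lambda_k(f(x_{k-1})-f_*)$ satisfies $\ec[k]{\Phi_{k+1}} \leq \Phi_k + 2\eta_k^2\sigma^2$.

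The remainder is bookkeeping: take full expectation, telescope from $0$ to $k$, note that $\Phi_0 = \sqn{x_0 - x_*}$ because $\lambda_0 = 0$ and $z_0 = x_0$, drop the nonnegative $\ec{\sqn{z_{k+1}-x_*}}$ on the left, and divide by $2\eta_{k+1}\lambda_{k+1} = \sum_{t=0}^{k}\eta_t$ to recover the stated bound. I expect the main obstacle to be the first part, namely discovering the right Lyapunov function and the matching $\lambda_k$-recurrence: once the telescoping is engineered correctly, everything else is straightforward. This analysis is tighter than the one used for the almost sure result in Theorem~\ref{thm:as_conv_shb} (which imposes the stricter $\eta_k\leq 1/(8\cL)$ in order to leave an extra $\tfrac{\eta_k}{2}(f(x_k)-f_*)$ summable slack on the left), since here we only need a telescoping inequality rather than the stronger Robbins--Siegmund hypothesis.
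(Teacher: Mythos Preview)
Your proposal is correct and matches the paper's proof essentially line for line: the same expansion of $\sqn{z_{k+1}-x_*}$, the same decomposition $z_k-x_*=(x_k-x_*)+\lambda_k(x_k-x_{k-1})$, the same use of convexity and Lemma~\ref{lem:smoothconvex} with $\eta_k\le 1/(4\cL)$ to obtain the one-step recursion, and the same choice of $\lambda_k$ to make the potential $\Phi_k=\sqn{z_k-x_*}+2\eta_k\lambda_k(f(x_{k-1})-f_*)$ telescope. Your remark contrasting the $1/(4\cL)$ here with the $1/(8\cL)$ in Theorem~\ref{thm:as_conv_shb} is also accurate.
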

Note that in Theorem~\ref{theo:shb_convex} the only free parameters are the $\eta_k$'s which in the iterate-moving-average viewpoint~\eqref{eq:SHB_IMA} play the role of a learning rate.  The scaled step sizes $\alpha_k$ and the momentum parameters $\beta_k$ of the usual formulation \eqref{eq:SHB} are given by \eqref{eq:alphafrometalambdaintro} once we have chosen $\eta_k$. We now explore three different settings of the $\eta_k$'s in the following corollaries.

\begin{corollary}
\label{cor:convneigh}
Consider the setting of Theorem \ref{theo:shb_convex}. Let $\eta \leq 1/4\cL$.
\begin{enumerate}
\Item  Let $\eta_k = \eta$. Then, \inlineequation[eq:conv_neigh]{\ec{f(x_k) - f_*} \leq \frac{\sqn{x_0 - x_*}}{\eta \br{k+1}} + 2\eta \sigma^2. \hfill}
\vskip\baselineskip
\item Let $\eta_k = \frac{\eta}{\sqrt{k+1}}$. Then,
\inlineequation[eq:slowconv_shb]{\ec{f(x_{k}) - f_*} \leq\frac{\norm{x^0 - x^*}_2^2  + 4\sigma^2 \eta^2\left(\log(k+1) + 1 \right)}{2\eta\br{\sqrt{k+1} -1}}  \sim \; O\br{\frac{\log(k)}{\sqrt{k}}}. \hfill}
\item Suppose Algorithm \eqref{eq:SHB} is run for $T$ iterations. Let $\eta_k = \frac{\eta}{\sqrt{T+1}}$ for all $k \in \left\{0,\dots,T\right\}$. Then,
\begin{eqnarray}\label{eq:bound_SHB_convex_dec}
\ec{f(x_T) - f_*} \leq
\frac{\norm{x^0 - x^*}_2^2  + 2\sigma^2 \eta^2}{\eta\sqrt{T+1}}.
\end{eqnarray}
\end{enumerate}
\end{corollary}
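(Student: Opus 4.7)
The proof plan is direct: each of the three cases is obtained by substituting the indicated step-size schedule into the master bound
\[
\ec{f(x_k) - f_*} \;\leq\; \frac{\sqn{x_0 - x^*}}{\sum_{t=0}^{k}\eta_t } + 2\sigma^2\frac{\sum_{t=0}^{k}\eta_t^2}{\sum_{t=0}^{k}\eta_t}
\]
from Theorem~\ref{theo:shb_convex}, and estimating the two sums $S_k^{(1)} \eqdef \sum_{t=0}^{k}\eta_t$ and $S_k^{(2)} \eqdef \sum_{t=0}^{k}\eta_t^2$. I must first check that each proposed schedule actually satisfies $\eta_k \leq 1/(4\cL)$ so that Theorem~\ref{theo:shb_convex} applies; in all three cases this follows from the hypothesis $\eta \leq 1/(4\cL)$ together with the fact that the schedules are non-increasing in $k$ (or constant).

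For part~1, with $\eta_k \equiv \eta$, the two sums are trivial: $S_k^{(1)} = (k+1)\eta$ and $S_k^{(2)} = (k+1)\eta^2$, which gives $\frac{S_k^{(2)}}{S_k^{(1)}} = \eta$ and yields \eqref{eq:conv_neigh} immediately. For part~3, with the constant schedule $\eta_k = \eta/\sqrt{T+1}$ on $\{0,\dots,T\}$, we get $S_T^{(1)} = \eta\sqrt{T+1}$ and $S_T^{(2)} = \eta^2$, and \eqref{eq:bound_SHB_convex_dec} drops out after combining the two terms over a common denominator.

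Part~2 is the only place where a non-trivial estimate is needed, but it is still routine. With $\eta_k = \eta/\sqrt{k+1}$ I will lower-bound $S_k^{(1)} = \eta \sum_{s=1}^{k+1} s^{-1/2}$ by an integral comparison, using that the function $s \mapsto s^{-1/2}$ is decreasing, to obtain
\[
S_k^{(1)} \;\geq\; \eta \int_{1}^{k+2} \frac{ds}{\sqrt{s}} \;=\; 2\eta(\sqrt{k+2}-1) \;\geq\; 2\eta(\sqrt{k+1}-1).
\]
For $S_k^{(2)} = \eta^2 \sum_{s=1}^{k+1} s^{-1}$ I will use the standard harmonic-sum upper bound $\sum_{s=1}^{k+1} s^{-1} \leq 1 + \log(k+1)$. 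Plugging both estimates into the master inequality, combining over the common denominator $2\eta(\sqrt{k+1}-1)$, yields \eqref{eq:slowconv_shb}; the asymptotic $O(\log(k)/\sqrt{k})$ then follows since the numerator grows like $\log(k)$ and the denominator like $\sqrt{k}$.

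No step here should be a real obstacle: the only mild care is in choosing the integral bounds so that the resulting expression matches the stated form (in particular, writing $\sqrt{k+1}-1$ rather than $\sqrt{k+2}-1$ in the denominator), and in checking that the step-size schedule is admissible under the hypothesis $\eta \leq 1/(4\cL)$. The conceptual content all sits in Theorem~\ref{theo:shb_convex}; this corollary is purely a computational unpacking.
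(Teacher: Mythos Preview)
Your proposal is correct and follows essentially the same approach as the paper: all three parts are direct substitutions into the master bound from Theorem~\ref{theo:shb_convex}, with integral comparisons for the two sums in part~2. Your treatment is in fact slightly more careful than the paper's (you explicitly verify $\eta_k \leq 1/(4\cL)$ and track the index shift $s=t+1$), and your computation for part~2 actually yields the numerator constant $2\sigma^2\eta^2$ rather than the stated $4\sigma^2\eta^2$, which is a tighter bound and hence still establishes \eqref{eq:slowconv_shb}.
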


\eqref{eq:conv_neigh} shows how to set the parameters of SHB so that the last iterate converges sublinearly to a neighborhood of the minimum. In particular, for overparametrized models with $\sigma^2 = 0$, the last iterate of SHB converges sublinearly to the minimum. Moreover, when using the full gradient, which corresponds to directly using the gradient $\nabla f(x_k)$ at each iteration, we have $\cL = L$ and $\sigma^2 = 0$, which recovers the rate derived in \cite{Ghadimi2014} for the deterministic HB method upto a constant.

The $O\br{\log(k)/\sqrt{k}}$ convergence rate in \eqref{eq:slowconv_shb} is the same rate that can be derived for the iterates of SGD, as is done by \cite{Nemirovski09} for a weighted average of the iterates of SGD, or by \cite{orabona2020last} for the last iterate. The difference with SGD is that it is also possible to drop the $\log(k)$ factor in \eqref{eq:slowconv_shb} for the last iterate if we know the stopping time of the algorithm as shown in \eqref{eq:bound_SHB_convex_dec}. So far in the litterature, shaving of this log factor has been shown only for convex lipschitz functions over closed bounded sets \citep{Jain19}.

\subsection{Proof of Theorem~\ref{theo:shb_convex} }

The proof uses the following Lyaponuv function
\[L_k = \ecn{z_k - x_*}  + 2\eta_k\lambda_k\ec{f(x_{k-1}) - f_*}\]
\begin{proof}
We have
\begin{align}
\sqn{z_{k+1} - x_*} &= \sqn{z_{k} - x_* - \eta_k \nabla f_{v_k}(x_k)} \nonumber \\
&\overset{\eqref{eq:SHB_IMA}}{=} \sqn{z_{k} - x_*} - 2 \eta_k \langle \nabla f_{v_k}(x_k), z_k - x_* \rangle   + \eta_k^2 \sqn{\nabla f_{v_k}(x_k) }\nonumber \\
&\overset{\eqref{eq:SHB_IMA}}{=} \sqn{z_{k} - x_*} - 2 \eta_k \langle \nabla f_{v_k}(x_k), x_k - x_* \rangle - 2 \eta_k\lambda_k \langle \nabla f_{v_k}(x_k), x_k - x_{k-1} \rangle +  \eta_k^2 \sqn{\nabla f_{v_k}(x_k) }\nonumber
\end{align}

Then taking conditional expectation $\ec[k]{\cdot} \eqdef \ec{\cdot \; | \; x_k}$ we have
\begin{align}
\ec[k]{\sqn{z_{k+1} - x_*}} &= \sqn{z_{k} - x_*}  - 2\eta_k \langle \nabla f(x_k) , x_k - x_* \rangle \nonumber \\
& - 2\eta_k\lambda_k  \langle \nabla f(x_k) , x_k - x_{k-1} \rangle + \eta_k^2 \ec[k]{\sqn{\nabla f_{v_k}(x_k)}} , \nonumber \\
&\overset{\eqref{eq:expsmooth}+\eqref{eq:convexity}}{\leq} A_k + 4\eta_k^2\cL\br{f(x_k) - f_*} + 2 \eta_k^2 \sigma^2 \nonumber \\
& - 2\eta_k \br{f(x_k) - f_*)} - 2 \eta_k\lambda_k \br{f(x_k) - f(x_{k-1})} \nonumber \\
&=  \sqn{z_{k} - x_*} - 2\eta_k\br{1 + \lambda_k - 2\eta_k \cL}\br{f(x_k) - f_*} \nonumber \\
& + 2\eta_k\lambda_k\br{f(x_{k-1}) - f_*} + 2\eta_k^2\sigma^2 .\label{eq:aj9e8j38a} \\
&\leq \sqn{z_{k} - x_*} - 2\eta_k\br{\frac{1}{2} + \lambda_k}\br{f(x_k) - f_*} \nonumber \\
& + 2\eta_k\lambda_k\br{f(x_{k-1}) - f_*} + 2\eta_k^2\sigma^2 , \label{eq:midstep_as_conv_shb}
\end{align}
where we used the fact that $\eta_k \leq \frac{1}{4\cL}$ in the last inequality.
Since $\lambda_{k+1} = \frac{\sum_{t=0}^{k}\eta_t}{2\eta_{k+1}}$ we have  that 
\[\eta_{k+1}\lambda_{k+1} =\eta_k\br{\frac{1}{2}+\lambda_k}. \]
Using this in~\eqref{eq:aj9e8j38a} then taking expectation and rearranging gives
\begin{eqnarray}
\ec{\sqn{z_{k+1} - x_*}} + 2\eta_{k+1}\lambda_{k+1}\ec{f(x_k) - f_*}
\leq \ec{\sqn{z_{k} - x_*}}  + 2\eta_k\lambda_k\ec{f(x_{k-1}) - f_*} + 2\eta_k^2\sigma^2. \nonumber
\end{eqnarray}
Summing over $t=0$ to $k$ and using a telescopic sum, we have
\begin{eqnarray}
\ec{\sqn{z_{k+1} - x_*}} + \br{\sum_{t=0}^{k} \eta_t} \ec{f(x_k) - f_*}
\leq \sqn{x_0 - x^*} + 2\sigma^2 \sum_{t=0}^{k}\eta_t^2,\nonumber
\end{eqnarray}
where we used that $\lambda_0 =0.$
Thus, writing $\lambda_k$ explicitly, gives
\begin{eqnarray}
\ec{f(x_k) - f_*}
\leq \frac{\sqn{x_0 - x^*}}{\sum_{t=0}^{k}\eta_t} + \frac{2\sigma^2\sum_{t=0}^{k}\eta_t^2}{\sum_{t=0}^{k}\eta_t}. \nonumber
\end{eqnarray}
\end{proof}

\subsection{Proof of Corollary~\ref{cor:convneigh}}

\begin{proof}
\eqref{eq:conv_neigh} and \eqref{eq:bound_SHB_convex_dec} can be easily derived from Theorem \ref{theo:shb_convex}. \eqref{eq:slowconv_shb} requires some additional sum computations. Using the integral bound and plugging in our choice of $\eta_k$ gives 
\begin{equation}
\sum_{t=0}^{k-1}\eta_t^2  = \eta^2 \sum_{t=0}^{k-1} \frac{1}{t+1} \; \leq \; \eta^2\left(\log(k) + 1 \right)  \quad \mbox{and} \quad  \sum_{t=0}^{k-1}\eta_t \geq 2\eta\left(\sqrt{k} -1\right), \label{eq:ks94oo8s84}
\end{equation}
which we use to obtain \eqref{eq:slowconv_shb}.
\end{proof}

\end{appendices}

\end{document}